\newtheorem{asm}{Assumption}
\newtheorem{thm}{Theorem}
\newtheorem{lem}{Lemma}
\newtheorem{cor}{Corollary}
\begin{document}

\def\f {{\bf f}}
\def\A {{\bf A}}
\def\tA {\tilde{\bf A}}
\def\B {{\bf B}}
\def\hB {{\hat{\bf B}}}
\def\tB {{\tilde{\bf B}}}
\def\C {{\bf C}}
\def\D {{\bf D}}
\def\G {{\bf G}}
\def\bH {{\bf H}}
\def\I {{\bf I}}
\def\bL {{\bf L}}
\def\bLambda {{\bf \Lambda}}
\def\bOmega {{\bf \Omega}}
\def\M {{\bf M}}
\def\P {{\bf P}}
\def\Q {{\bf Q}}
\def\U {{\bf U}}
\def\V {{\bf V}}
\def\K {{\bf K}}
\def\bS {{\bf S}}
\def\bSigma {{\bf \Sigma}}
\def\T {{\bf T}}
\def\W {{\bf W}}
\def\X {{\bf X}}
\def\oW {{\tilde\W}}
\def\a {{\bf a}}
\def\bu {{\bf u}}
\def\g {{\bf g}}
\def\hg {\hat{\bf g}}
\def\w {{\bf w}}
\def\x {{\bf x}}
\def\y {{\bf y}}
\def\z {{\bf z}}
\def\ut {{(t)}}
\def\utp {{(t+1)}}
\def\utm {{(t-1)}}
\def\utt {{(t')}}
\def\uttm {{(t'-1)}}
\def\ui {{(i)}}
\def\uip {{(i+1)}}
\def\uim {{(i-1)}}
\def\dalpha {{\Delta\alpha}}
\def\fb {{\mathcal B}}
\def\fK {{\mathcal K}}
\def\fS {{\mathcal S}}
\def\fC {{\mathcal C}}
\def\fN {{\mathcal N}}
\def\fO {{\mathcal O}}
\def\BE {{\mathbb E}}
\def\BR {{\mathbb R}}
\def\bz {{\bf 0}}
\def\argmax {\mathop {\rm argmax}}
\def\argmin {\mathop {\rm argmin}}
\def\diag {{\rm diag}}
\def\svd {{\rm svd}}
\def\rk {{\rm rank}}
\def\tr {{\rm tr}}
\def\regret {{\rm regret}}
\def\uin {{|\!|\!|}}
\def\hat {\widehat}

\title{Robust Frequent Directions with Application in Online Learning}

\author{\name Luo Luo \email ricky@sjtu.edu.cn \\
        \name Cheng Chen  \email jack\_chen1990@sjtu.edu.cn \\
       \addr Department of Computer Science and Engineering \\
       Shanghai Jiao Tong University \\
       800 Dongchuan Road, Shanghai, China 200240
       \AND
       \name Zhihua Zhang\thanks{Corresponding author.} \email zhzhang@math.pku.edu.cn \\
       \addr  National Engineering Lab for Big Data Analysis and Applications \\
        School of Mathematical Sciences \\
       Peking University \\
       5 Yiheyuan Road, Beijing, China 100871
       \AND
       \name Wu-Jun Li \email liwujun@nju.edu.cn \\
       \addr
        National Key Laboratory for Novel Software Technology \\
        Collaborative Innovation Center of Novel Software Technology and Industrialization \\
        Department of Computer Science and Technology \\
        Nanjing University \\
        163 Xianlin Avenue, Nanjing, China 210023
       \AND
       \name Tong Zhang \email tongzhang@tongzhang-ml.org \\
       \addr
       Computer Science \& Mathematics \\
       Hong Kong University of Science and Technology \\
       Hong Kong}
\editor{Qiang Liu}

\maketitle

\begin{abstract}%   <- trailing '%' for backward compatibility of .sty file
The frequent directions (FD) technique is a deterministic approach for
online sketching that has many applications
in machine learning.
The conventional FD is a heuristic procedure that often outputs rank deficient matrices.
To overcome the rank deficiency problem, we propose a new sketching strategy called robust frequent directions (RFD) by introducing a regularization term.
RFD can be derived from an optimization problem. It updates the sketch matrix and the regularization term adaptively and jointly.
RFD reduces the approximation error of FD without increasing the computational cost.
We also apply RFD to online learning and propose an effective hyperparameter-free online Newton algorithm.
We derive a regret bound for our online Newton algorithm based on RFD,
which guarantees the robustness of the algorithm.
The experimental studies demonstrate that the proposed method outperforms state-of-the-art second order online learning algorithms.
\end{abstract}

\begin{keywords}
  Matrix approximation, sketching, frequent directions, online convex optimization, online Newton algorithm
\end{keywords}

\section{Introduction}

The sketching  technique is a powerful tool to deal with large scale
matrices \citep{DBLP:journals/siamcomp/GhashamiLPW16,halko2011finding,woodruff2014sketching}, and it has been widely used to speed up machine learning algorithms such as second order optimization algorithms
\citep{DBLP:conf/nips/ErdogduM15,luo2016efficient,DBLP:journals/siamjo/PilanciW17,
DBLP:journals/corr/Roosta-Khorasani16,DBLP:journals/corr/Roosta-Khorasani16a,DBLP:conf/nips/XuYRRM16,YeLuoZhang2017}.
There exist several families of matrix sketching strategies,
including sparsification, column  sampling, random projection \citep{achlioptas2003database,indyk1998approximate,kane2014sparser,DBLP:journals/jmlr/WangLZ16}, and
frequent directions (FD) \citep{DBLP:journals/tkde/DesaiGP16,DBLP:journals/siamcomp/GhashamiLPW16,
DBLP:conf/icml/Huang18,liberty2013simple,DBLP:conf/aistats/MrouehMG17,DBLP:conf/ijcai/YeLZ16}.

Sparsification techniques \citep{achlioptas2007fast,achlioptas2013near,arora2006fast,drineas2011note}
generate a sparse version of the matrix by element-wise sampling,
which allows the matrix multiplication to be more efficient with lower space.
Column (row) sampling algorithms \citep{DBLP:journals/ftml/Mahoney11} include the importance sampling \citep{drineas2006fasta,drineas2006fastb,frieze2004fast} and leverage score sampling \citep{drineas2012fast,drineas2008relative,papailiopoulos2014provable}.
They define a probability for each row (column) and select a subset by the probability to construct the estimation.
Random projection maps the rows (columns) of the matrix into lower dimensional space by a projection matrix.
The projection matrix can be constructed in various ways \citep{woodruff2014sketching} such as
Gaussian random projections \citep{johnson1984extensions,sarlos2006improved},
fast Johnson-Lindenstrauss transforms \citep{ailon2006approximate,ailon2009fast,ailon2013almost,kane2014sparser} and
sparse random projections \citep{clarkson2013low,nelson2013osnap}.
The frequent directions \citep{DBLP:journals/siamcomp/GhashamiLPW16,liberty2013simple} is a deterministic sketching algorithm  and achieves optimal tradeoff between approximation error and space.

In this paper we are especially concerned with the FD sketching  \citep{DBLP:journals/tkde/DesaiGP16,DBLP:journals/siamcomp/GhashamiLPW16,liberty2013simple},
because it is a stable online sketching approach. FD considers the matrix
approximation in the streaming setting.
In this case, the data is available in a sequential order and should be processed immediately.
Typically, streaming algorithms can only use limited memory at any time.
The FD algorithm extends the method of frequent items \citep{misra1982finding} to matrix approximation  and has tight approximation error bound.
However, FD usually leads to a rank deficient approximation, which in turn makes its applications less robust.
For example, Newton-type algorithms require a non-singular and well-conditioned approximated Hessian matrix but FD sketching usually generates  low-rank matrices.
An intuitive and simple way to conquer this gap is to introduce a regularization term to enforce the matrix to be invertible \citep{luo2016efficient,DBLP:journals/corr/Roosta-Khorasani16,DBLP:journals/corr/Roosta-Khorasani16a}. Typically, the regularization parameter is regarded as a hyperparameter and its  choice  is separable from the sketching procedure. Since the regularization parameter affects the the performance heavily in practice, it should be chosen carefully.

To overcome the weakness of the FD algorithm, we propose a new sketching approach that we call  \emph{robust frequent directions} (RFD).
%The RFD is a variant of FD.
%Rather than  only approximating the matrix with a low-rank structure as done by the conventional sketching method,
Unlike conventional sketching methods which only approximate the matrix with a low-rank structure,
RFD constructs the low-rank part and updates the regularization term simultaneously. In particular,
the update procedure of RFD can be regarded as solving an optimization problem (see Theorem \ref{thm:optRFD}).
This method is different from the standard FD, giving rise to a tighter error bound.

Note that \citet{zhang2014matrix} proposed matrix ridge approximation
(MRA) to approximate a positive semi-definite matrix using an idea similar to RFD.  There are two main differences between RFD and MRA.
First, RFD is designed for the case that data samples come sequentially and memory is limited,
while MRA has to access the whole data set.
Second, MRA aims to minimize the approximation error with respect to the Frobenius norm
while RFD tries to minimize the spectral-norm approximation error.
In general, the spectral norm error bound is more meaningful than the Frobenius norm error bound \citep{DBLP:journals/ftml/Tropp15}.

In a recent study,  \citet{luo2016efficient} proposed a FD-based
sketched online Newton (SON) algorithm (FD-SON) to accelerate the
standard online Newton algorithms.
Owing to the shortcoming of FD, the performance of FD-SON
%This algorithm suffers from the problem that FD and other sketching methods exist,
%which leads the performance
is significantly affected by the choice of the hyperparameter.
Naturally, we can leverage RFD to improve online Newton algorithms.
Accordingly, we propose a sketched online Newton step based on RFD (RFD-SON).
Different from conventional sketched Newton algorithms, RFD-SON is hyperparameter-free.
Setting the regularization parameter to be zero initially,
RFD-SON will adaptively increase the regularization term. The approximation Hessian will be well-conditioned after a  few iterations.
Moreover, we prove that RFD-SON has a more robust regret bound than FD-SON, and the experimental results also validate better performance of RFD-SON.

The remainder of the paper is organized as follows.
In Section~\ref{sec:notation} we  present notation and preliminaries.
In Section~\ref{sec:introduction} we review the background of second order online learning and its sketched variants.
In Sections~\ref{sec:rfd} and \ref{sec:ons-rfd} we propose our robust frequent directions (RFD) method and the applications in online learning, with some related theoretical analysis.
In Section~\ref{sec:exp} we demonstrate empirical comparisons with baselines on serval real-world data sets to show the superiority of our algorithms. Finally, we conclude our work in Section~\ref{sec:concl}.

\section{Notation and Preliminaries}
\label{sec:notation}

We let $\I_d$ denote the $d{\times} d$ identity matrix.
For a matrix $\A=[A_{ij}]\in\BR^{n\times d}$ of rank
$r$ where $r\leq \min(n, d)$, we let the condensed singular value
decomposition (SVD) of $\A$ be $\A = \U\bSigma\V^\top$ where
$\U\in\BR^{n\times r}$ and $\V\in\BR^{d\times r}$ are column orthogonal
and $\bSigma=\diag(\sigma_1,\sigma_2,\dots,\sigma_r)$ with $\sigma_1 \geq \sigma_2 \geq \dots, \geq \sigma_r > 0$ places the nonzero singular values
on its diagonal entries.

We use $\sigma_{\max}(\A)$ to denote the largest singular
value and $\sigma_{\min}(\A)$ to denote the smallest non-zero
singular value. Thus, the condition number of $\A$ is $\kappa(\A)=\frac{\sigma_{\max}(\A)}{\sigma_{\min}(\A)}$.
The matrix pseudoinverse of $\A$ is defined by $\A^\dag=\V\bSigma^{-1}\U^\top \in \BR^{d\times n}$.

Additionally, we let $\|\A\|_F = \sqrt{\sum_{i,j}A_{ij}^2}=\sqrt{\sum_{i=1}^r \sigma_{i}^2}$ be the Frobenius norm and
$\|\A\|_2 = \sigma_{\max}(\A)$ be the spectral norm.
A matrix norm $\uin \cdot \uin$ is said to be unitarily invariant if
$\uin\P\A\Q \uin=\uin \A \uin$ for any unitary matrices $\P\in\BR^{n\times n}$ and $\Q\in\BR^{d\times d}$.
It is easy to verify that both the Frobenius norm and spectral norm are unitarily invariant.
We define $[\A]_k=\U_k\bSigma_k\V_k^\top$ for $k\leq r$, where $\U_k\in\BR^{n\times k}$ and $\V_k\in\BR^{d\times k}$ are the first $k$ columns of $\U$ and $\V$,
and $\bSigma_k=\diag(\sigma_1,\sigma_2,\dots,\sigma_k)\in\BR^{k\times k}$. Then $[\A]_k$ is the best rank-$k$ approximation to $\A$ in
both the Frobenius and spectral norms, that is,
\begin{align*}
    [\A]_k = \argmin_{\rk(\hat \A)\leq k} \| \A-\hat \A\|_F = \argmin_{\rk(\hat \A)\leq k} \| \A-\hat \A\|_2.
\end{align*}
Given a positive semidefinite matrix $\bH\in\BR^{d\times d}$, the notation $\|\x\|_\bH$ is called $\bH$-norm of vector $\x\in\BR^d$, that is, $\|\x\|_\bH=\sqrt{\x^\top\bH\x}$. If matrices $\A$ and $\B$ have the same size, we let $\langle \A, \B \rangle$ denote $\sum_{i,j} A_{ij}B_{ij}$.

\subsection{Frequent Directions}

We give a brief review of frequent directions (FD) \citep{DBLP:journals/siamcomp/GhashamiLPW16,liberty2013simple},
because it is closely related to our proposed method.
FD is a deterministic matrix sketching in the row-updates model.
For any input matrix $\A\in\BR^{T\times d}$ whose rows come  sequentially,
it maintains a sketch matrix $\B\in\BR^{(m-1)\times d}$ with $m\ll T$ to approximate $\A^\top\A$ by $\B^\top\B$.

We present the detailed implementation of FD in Algorithm \ref{alg:FD}.
The intuition behind FD is similar to that of frequent items.
FD periodically shrinks orthogonal vectors by roughly the same amount (Line 5 of Algorithm \ref{alg:FD}).
The shrinking step reduces the square Frobenius norm of the sketch reasonable and guarantees that no direction
is reduced  too much.

\begin{algorithm}[ht]
    \caption{Frequent Directions}
	\label{alg:FD}
	\begin{algorithmic}[1]
    \STATE {\textbf{Input:}} $\A=[\a^{(1)},\dots,\a^{(T)}]^\top\in\BR^{T\times d}$,
            $\B^{(m-1)}=[\a^{(1)},\dots,\a^{(m-1)}]^\top$ \\[0.1cm]
    \STATE {\textbf{for}} $t=m,\dots,T$ {\textbf{do}} \\[0.2cm]
    \STATE \quad ${\hat\B}^\utm=\begin{bmatrix}\B^\utm \\ (\a^\ut)^\top \end{bmatrix}$ \\[0.2cm]
    \STATE \quad Compute SVD: ${\hat\B}^\utm =\U^\utm\bSigma^\utm(\V^\utm)^\top$ \\[0.2cm]
    \STATE \quad $\B^\ut = \sqrt{\big(\bSigma_{m-1}^\utm\big)^2-\big(\sigma_{m}^\utm\big)^2\I_{m-1}}\cdot \big(\V_{m-1}^\utm\big)^\top$
    \\[0.2cm]
    \STATE {\textbf{end for}} \\[0.2cm]
    \STATE {\textbf{Output:}} $\B=\B^{(T)}$
	\end{algorithmic}
\end{algorithm}

FD has the following error bound for any $k<m$,
\begin{align}
&\|\A^\top\A-\B^\top\B\|_2 \leq \frac{1}{m-k} \|\A-[\A]_k\|_F^2.  \label{bound:FD}
\end{align}
The above result means that the space complexity of FD is optimal regardless of streaming issues
because any algorithm satisfying $\|\A^\top\A-\B^\top\B\|_2\leq \|\A-[\A]_k\|^2_F/(m-k)$ requires $\fO(md)$ space to represent matrix $\B$
\citep{DBLP:journals/siamcomp/GhashamiLPW16}.
The dominated computation of the algorithm is computating the SVD of ${\hat\B}^\utm$,
which costs $\fO(m^2d)$ by the standard SVD implementation. However, the total cost can be reduced
from $\fO(Tm^2d)$ to $\fO(Tmd)$ by doubling the space (Algorithm \ref{alg:FFD} in Appendix \ref{appendix:alg}) or
using the Gu-Eisenstat procedure \citep{gu1994stable}.

\citet{DBLP:journals/tkde/DesaiGP16} proposed some extensions of FD. More specifically,
Parameterized FD (PFD) uses an extra hyperparameter to
describe the proportion of singular values shrunk in each iteration.
PFD improves the performance empirically, but has worse error bound  than  FD by a constant.
Compensative FD (CFD)  modifies the output of FD by increasing the singular values and
keeps the same error guarantees as FD.

\section{Online Newton Methods}
\label{sec:introduction}

For ease of demonstrating our work,
we would like to introduce sketching techniques in online learning scenarios.
First of all, we introduce the background of  convex online learning including online Newton step algorithms.
Then we discuss the connection between online learning and sketched second order methods,
which motivates us to propose a more robust sketching algorithm.

\subsection{Convex Online Learning}

Online  learning is performed in a sequence of consecutive rounds \citep{shalev2011online}.
We consider the problem of convex online  optimization as follows.
For a sequence of examples $\{\x^{(t)}\in\BR^d\}$, and
convex smooth loss functions $\{f_t: \fK_t\rightarrow\BR\}$ where $f_t(\w) \triangleq  \ell_t(\w^\top \x^{(t)})$
and $\fK_t\subset\BR^d$ are convex compact sets,
the learner outputs a predictor $\w^\ut$ and suffers the loss $f_t(\w^\ut)$ at the $t$-th round.
The cumulative regret at round $T$ is defined as:
\begin{align*}
    R_T(\w^*) = \sum_{t=1}^T f_t(\w^\ut) - \sum_{t=1}^T f_t(\w^*),
\end{align*}
where $\w^* = \argmin_{\w\in\fK}\sum_{t=1}^T f_t(\w)$ and $\fK={\mathop{\bigcap}}_{t=1}^T\fK_t$.

We make the following assumptions on the loss functions.
\begin{asm}\label{asm:bound}
    The loss functions $\ell_t$ satisfy $|\ell_t^\prime(z)|\leq L$ whenever $|z|\leq C$, where $L$ and $C$ are positive constants.
\end{asm}
\begin{asm}\label{asm:curv}
There exists a $\mu_t\leq0$ such that for all $\bu,\w \in\fK$, we have
\begin{align*}
        f_t(\w) \geq f_t(\bu) + \nabla f_t(\bu)^\top(\w-\bu) + \frac{\mu_t}{2}\big(\nabla f_t(\bu)^\top(\w-\bu)\big)^2.
\end{align*}
\end{asm}
%Note that for bounded diameter of the domain and gradient, holding Assumption \ref{asm:curv} only requires the exp-concave property of $f_t$,
Note that for a loss function $f_t$ whose domain and gradient have bounded diameter, holding Assumption \ref{asm:curv} only requires the exp-concave property,
which is more general than strong convexity \citep{hazan2016introduction}.
For example, the square loss function $f_t(\w)=(\w^\top\x_t-y_t)^2$ satisfies Assumption \ref{asm:curv}
with $\mu_t=\frac{1}{8C^2}$ if the function is subject to constraints $|\w^\top\x_t|\leq C$  and  $y_t \leq C$ \citep{luo2016efficient}, but it is not strongly convex.

One typical online learning algorithm is online gradient descent (OGD) \citep{hazan2007logarithmic,zinkevich2003online}.
At the ($t {+}1$)-th round,  OGD exploits the following update rules:
\begin{align*}
     \bu^\utp &=  \w^\ut-\beta_t \g^\ut, \\
     \w^\utp  &= \argmin_{\w\in\fK_{t+1}} \| \w - \bu^\utp \|,
\end{align*}
where $\g^\ut=\nabla f_t(\w^\ut)$ and $\beta_t$ is the learning rate.
The algorithm has linear computation cost and achieves $\fO(\frac{L^2}{H}\log T)$ regret bound for the $H$-strongly convex loss.

In this paper, we are more interested in online Newton step algorithms \citep{hazan2007logarithmic,luo2016efficient}.
The standard online Newton step keeps the curvature information in the matrix $\bH^{\ut}\in\BR^{d\times d}$ sequentially and iterates as follows:
\begin{align}
     \bu^\utp &=  \w^\ut-\beta_t(\bH^\ut)^{-1} \g^\ut, \nonumber \\
     \w^\utp &= \argmin_{\w\in\fK_{t+1}} \| \w - \bu^\utp \|_{\bH^\ut}.  \label{update:ONS}
\end{align}
The matrix $\bH^{\ut}$ is constructed by the outer product of historical gradients \citep{duchi2011adaptive,luo2016efficient}, such as
\begin{align}
  & \bH^{\ut}= \sum_{i=1}^t \g^\ui(\g^\ui)^\top + \alpha_0\I_d, \label{hessian:ONS0}\\
  & \text{or} ~~~~  \bH^{\ut}=\sum_{i=1}^t (\mu_t+\eta_t)\g^\ui(\g^\ui)^\top + \alpha_0\I_d, \label{hessian:ONS}
\end{align}
where $\alpha_0\geq0$ is a fixed regularization parameter, $\mu_t$ is the constant in Assumption \ref{asm:curv}, and
$\eta_t$ is typically chosen as $\fO(1/t)$.
The second order algorithms enjoy logarithmical regret bound without the strongly convex assumption
but require quadratical space and computation cost.
%We can also apply some variants of online Newton algorithms to optimize neural networks \citep{martens2015optimizing,
%grosse2016kronecker,ba2017distributed},
Some variants of online Newton algorithms have been applied to optimize neural networks \citep{martens2015optimizing,grosse2016kronecker,ba2017distributed}, but they do not provide theoretical guarantee on nonconvex cases.
%but the theoretical guarantee on nonconvex cases is unclear.

\subsection{Efficient Algorithms by Sketching}

To make the online Newton step scalable, it is natural to use sketching techniques \citep{woodruff2014sketching}.
The matrix $\bH^{\ut}$ in online learning has the form $\bH^{\ut}=(\A^\ut)^\top\A^\ut + \alpha_0\I_d$,
where $\A^\ut\in\BR^{t\times d}$ is the corresponding term of (\ref{hessian:ONS0}) or (\ref{hessian:ONS})
such as
\begin{align*}
  \A^\ut=[\g^{(1)},\dots,\g^{(t)}]^\top,    ~~~~\text{or}~~~~
 \A^\ut=[\sqrt{\mu_1+\eta_1}\g^{(1)},\dots,\sqrt{\mu_t+\eta_t}\g^{(t)}]^\top.
\end{align*}
The sketching algorithm employs an approximation of $(\A^\ut)^\top\A^\ut$ by $(\B^\ut)^\top\B^\ut$,
where the sketch matrix $\B^\ut\in\BR^{m\times d}$ is much smaller than $\A^\ut$ and $m \ll d$.
Then we can use $(\B^\ut)^\top\B^\ut + \alpha_0\I_d$ to replace $\bH^{\ut}$ in update (\ref{update:ONS}) \citep{luo2016efficient}.
By the Woodbury identity formula, we can reduce the computation of the update from $\fO(d^2)$ to $\fO(m^2d)$ or $\fO(md)$.
There are several choices of sketching techniques, such as random projection \citep{achlioptas2003database,indyk1998approximate,kane2014sparser},
frequent directions \citep{DBLP:journals/siamcomp/GhashamiLPW16,liberty2013simple}
and Oja's algorithm \citep{oja1982simplified,oja1985stochastic}.
%However the update of sketch matrix $\B^\ut$ is independent on the regularization parameter in all above methods,
%and the $\alpha_0$ is treated as a given hyperparameter.
However, all above methods treat $\alpha_0$ as a given hyperparameter which is independent of the sketch matrix $\B^\ut$.
In practice, the performance of sketched online Newton methods is sensitive to the choice of the hyperparamter $\alpha_0$.

\section{Robust Frequent Directions}
\label{sec:rfd}
In many machine learning applications such as online learning
\citep{hazan2006efficient,hazan2007logarithmic,hazan2016introduction,luo2016efficient},
Gaussian process regression \citep{DBLP:books/lib/RasmussenW06} and kernel ridge regression \citep{DBLP:journals/jmlr/DrineasM05},
we usually require an additional regularization term to make the matrix invertible and well-conditioned, while conventional sketching methods only focus on the low-rank approximation.
On the other hand, the update of frequent directions is not optimal in the view of minimizing the approximation error in each iteration.
Both of them motivate us to propose robust frequent directions (RFD) that incorporates the update of sketch matrix
and the regularization term into one framework.
%Intuitively, it is better to increase the factor of the identity matrix as iteration goes because there are more and more rows of $\A^\ut$.
%To make the matrix $\bH^\ut$ invertible and well-conditioned, both the standard online Newton and sketched online Newton methods
%usually require a fixed regularization term $\alpha_0\I$.
%The hyperparameter $\alpha_0$ typically needs to be tuned manually.
%On the other hand, the conventional sketching algorithms such as frequent directions only consider
%the gradient part $(\A^\ut)^\top\A^\ut$ in $\bH^\ut$ and do not cover the regularization term.

\subsection{The Algorithm}

The RFD approximates $\A^\top\A$ by $\B^\top\B + \alpha\I_d$ with $\alpha>0$.
We demonstrate the detailed implementation of RFD in Algorithm \ref{alg:RFD}.

\begin{algorithm}
    \caption{Robust Frequent Directions}
	\label{alg:RFD}
	\begin{algorithmic}[1]
    \STATE {\textbf{Input:}} $\A=[\a^{(1)},\dots,\a^{(T)}]^\top\in\BR^{T\times d}$,
            $\B^{(m-1)}=[\a^{(1)},\dots,\a^{(m-1)}]^\top$, $\alpha^{(m-1)}=0$ \\[0.1cm]
    \STATE {\textbf{for}} $t=m,\dots,T$ {\textbf{do}} \\[0.2cm]
    \STATE \quad ${\hat\B}^\utm=\begin{bmatrix}\B^\utm \\ (\a^\ut)^\top \end{bmatrix}$ \\[0.2cm]
    \STATE \quad Compute SVD: ${\hat\B}^\utm =\U^\utm\bSigma^\utm(\V^\utm)^\top$ \\[0.2cm]
    \STATE \quad $\B^\ut = \sqrt{\big(\bSigma_{m-1}^\utm\big)^2-\big(\sigma_{m}^\utm\big)^2\I_{m-1}}\cdot \big(\V_{m-1}^\utm\big)^\top$ \\[0.2cm]
    \STATE \quad $\alpha^\ut = \alpha^\utm + \big(\sigma^\utm_{m}\big)^2 / 2$ \\[0.1cm]
    \STATE {\textbf{end for}} \\[0.1cm]
    \STATE {\textbf{Output:}} $\B=\B^{(T)}$ and $\alpha=\alpha^{(T)}$.
	\end{algorithmic}
\end{algorithm}

The main difference between RFD and conventional sketching algorithms is the additional term $\alpha\I_d$.
We can directly use Algorithm \ref{alg:RFD} to approximate $\A^\top\A$ with $\alpha^{(m-1)}=\alpha_0>0$ if the target matrix is $\A^\top\A+\alpha_0\I_d$.
Compared with the standard FD,  RFD only needs to maintain one extra variable $\alpha^\ut$ by scalar operations in each iteration,
hence the cost of RFD is almost the same as FD.
Because the value of $\alpha^\ut$ is typically increasing from the $(m+1)$-th round in practice,
the resulting $\B^\top\B+\alpha\I_d$ is positive definite even the initial $\alpha^{(0)}$ is zero.
Also, we can further accelerate the algorithm by doubling the space.

\subsection{Theoretical Analysis}

Before demonstrating the theoretical results of RFD,
we review FD from the aspect of low-rank approximation which provides a motivation to the design of our algorithm.
At the $t$-th round iteration of FD (Algorithm \ref{alg:FD}), we have the matrix $\B^\utm$ which is used to approximate $(\A^\utm)^\top\A^\utm$ by $(\B^\utm)^\top\B^\utm$ and we aim to construct a new approximation which includes the new data $\a^\ut$, that is,
\begin{align}
(\B^\ut)^\top\B^\ut\approx (\B^\utm)^\top\B^\utm + \a^\ut(\a^\ut)^\top = ({\hat\B}^\utm)^\top{\hat\B}^\utm \label{FD:approx}.
\end{align}
The straightforward way to find $\B^\ut$ is to minimize the approximation error of (\ref{FD:approx}) based on the spectral norm with low-rank constraint:
\begin{align}
    \B'^\ut = \argmin_{\rk(\C)=m-1} \big\|(\hB^\utm)^\top\hB^\utm  - \C^\top\C\big\|_2. \label{FD:simple}
\end{align}
By the SVD of ${\hat\B}^\utm$, we have the solution $\B'^\ut=\bSigma_{m-1}^\utm\big(\V_{m-1}^\utm\big)^\top$.
In this view, the update of FD
\begin{align}
    \B^\ut = \sqrt{\big(\bSigma_{m-1}^\utm\big)^2-\big(\sigma_{m}^\utm\big)^2\I_{m-1}}\cdot \big(\V_{m-1}^\utm\big)^\top \label{FD:shrink}
\end{align}
looks imperfect, because it is not an optimal low-rank approximation.
However, the shrinkage operation in (\ref{FD:shrink}) is necessary.
If we take a greedy strategy \citep{brand2002incremental,hall1998incremental,levey2000sequential,ross2008incremental}
which directly replaces $\B^\ut$ with $\B'^\ut$ in FD,
it will perform worse in some specific cases\footnote{We provide an example in Appendix \ref{appendix:example}.} and also has no valid global error bound like (\ref{bound:FD}).

Hence, the question is: can we devise a method which enjoys the
optimality in each step and  maintains global tighter error bound in
the same time?
Fortunately, RFD is just such an algorithm holding both the properties.
We now explain the update rule of RFD formally, and provide the approximation error bound.
We first give the following theorem which plays an important role in our analysis.

\begin{thm}\label{thm:specMRA}
    Given a positive semi-definite matrix $\M\in\BR^{d\times d}$ and a positive integer $k<d$, let $\M=\U\bSigma\U^\top$ be the SVD of $\M$.
    Let $\U_k$ denote the matrix  of the first $k$ columns of $\U$ and $\sigma_k$ be the  $k$-th singular value of $\M$.
    Then the pair $({\hat\C}, {\hat\delta})$, defined as
    \begin{align*}
        \hat\C = \U_k(\bSigma_k-{\xi}\I_k)^{1/2}\Q \quad
        and \quad {\hat\delta} = (\sigma_{k+1} + \sigma_{d}) / 2
    \end{align*}
   where $\xi\in[\sigma_d,\sigma_{k+1}]$ and $\Q$ is an arbitrary $k\times k$ orthonormal matrix, is the global minimizer of
    \begin{align}
    \min_{\C\in\BR^{d\times k}, \delta\in\BR} \| \M-(\C\C^\top+\delta\I_d)  \|_2. \label{prob:thm1}
    \end{align}
    Additionally, we have
    \begin{align*}
         \| \M-({\hat\C}{\hat\C}^\top+{\hat\delta}\I_d) \|_2 \leq \| \M - \U_k\bSigma_k\U_k^\top \|_2,
    \end{align*}
    and the equality holds if and only if $\rk(\M)\leq k$.
\end{thm}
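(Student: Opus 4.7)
My plan is to match a universal lower bound on $\|\M - \C\C^\top - \delta\I_d\|_2$ with the value attained by the proposed pair $(\hat\C,\hat\delta)$, via Weyl's inequality for symmetric matrices. The key structural observation is that $\C\C^\top + \delta\I_d$ and $\C\C^\top$ are simultaneously diagonalizable, so the sorted eigenvalues of the former are $\lambda_j(\C\C^\top) + \delta$; since $\C\C^\top$ is positive semidefinite of rank at most $k$, this equals $\delta$ for every $j=k+1,\ldots,d$, and is at least $\delta$ for $j\leq k$.

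For the lower bound, I would apply Weyl's inequality, which yields $|\lambda_j(\M) - \lambda_j(\C\C^\top + \delta\I_d)| \leq \|\M - \C\C^\top - \delta\I_d\|_2$ for each $j$. Because $\M$ is PSD, $\lambda_j(\M)=\sigma_j$; restricting to $j=k+1$ and $j=d$ and using the observation above gives
\begin{align*}
    \bigl\| \M - \C\C^\top - \delta\I_d \bigr\|_2 \;\geq\; \max\bigl(\sigma_{k+1}-\delta,\; \delta-\sigma_d\bigr) \;\geq\; \tfrac{(\sigma_{k+1}-\delta)+(\delta-\sigma_d)}{2} \;=\; \tfrac{\sigma_{k+1}-\sigma_d}{2}.
\end{align*}
The middle quantity is minimized over $\delta$ precisely at $\hat\delta=(\sigma_{k+1}+\sigma_d)/2$, where the two arguments of the max become equal, yielding the universal lower bound $(\sigma_{k+1}-\sigma_d)/2$.

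For the matching upper bound, I would compute the error of the construction directly. Since $\sigma_k \geq \sigma_{k+1} \geq \hat\delta$, the square root $(\bSigma_k - \hat\delta\I_k)^{1/2}$ is well defined, so $\hat\C\hat\C^\top = \U_k(\bSigma_k - \hat\delta\I_k)\U_k^\top$ (which is independent of the orthonormal $\Q$, hence the whole family is optimal simultaneously). Expanding in the eigenbasis of $\M$ collapses the first $k$ terms and leaves
\begin{align*}
    \M - \hat\C\hat\C^\top - \hat\delta\I_d \;=\; \sum_{i=k+1}^d (\sigma_i - \hat\delta)\,\bu_i\bu_i^\top,
\end{align*}
whose spectral norm is $\max_{i>k}|\sigma_i - \hat\delta| = (\sigma_{k+1}-\sigma_d)/2$, matching the lower bound.

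For the additional inequality, the Eckart--Young--Mirsky theorem gives $\|\M - \U_k\bSigma_k\U_k^\top\|_2 = \sigma_{k+1}$, and $(\sigma_{k+1}-\sigma_d)/2 \leq \sigma_{k+1}$ since $\sigma_d \geq 0$; equality forces $\sigma_d = -\sigma_{k+1}$, which combined with the PSDness of $\M$ forces $\sigma_{k+1}=\sigma_d=0$, i.e., $\rk(\M)\leq k$. The main obstacle is cleanly extracting the indexing fact $\lambda_j(\C\C^\top+\delta\I_d)=\delta$ for $j>k$ and plugging it into Weyl's inequality in the right ordering; once that is in place, the rest reduces to a one-dimensional minimization of a maximum and a direct spectral computation in the eigenbasis of $\M$.
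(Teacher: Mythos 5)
Your proof is correct, and it reaches the same two-part structure as the paper's argument (a perturbation lower bound valid for every $(\C,\delta)$, matched by a direct computation in the eigenbasis of $\M$), but the key lemma you invoke is different. The paper derives its lower bound from the singular-value machinery of Horn and Johnson --- Lidskii-type majorization plus the Ky Fan norm characterization of unitarily invariant norms --- to get $\|\M-\C\C^\top-\delta\I_d\|_2\geq\|\bSigma(\M)-\bSigma(\C\C^\top+\delta\I_d)\|_2$, and then identifies the singular values of $\C\C^\top+\delta\I_d$ with $\sigma_i(\C\C^\top)+\delta$. You instead apply Weyl's eigenvalue perturbation inequality directly to the symmetric matrices. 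This is more elementary, and it is actually slightly more robust: for $\delta<0$ the singular values of $\C\C^\top+\delta\I_d$ are the sorted absolute values of $\lambda_i(\C\C^\top)+\delta$, so the paper's identification $\sigma_i(\C\C^\top+\delta\I_d)=\sigma_i(\C\C^\top)+\delta$ requires an extra word of justification that your eigenvalue formulation avoids entirely. What the paper's heavier machinery buys is generality over all unitarily invariant norms, which it reuses verbatim for the Frobenius-norm version in Corollary~\ref{cor:froMRA}; your argument is tailored to the spectral norm. A second, cosmetic difference: the paper proves optimality by checking that every inequality in its chain becomes an equality at $(\hat\C,\hat\delta)$, whereas you compute the attained error explicitly and match it against the universal constant $(\sigma_{k+1}-\sigma_d)/2$; your route also makes the final claim (comparison with rank-$k$ truncated SVD and the equality case $\rk(\M)\leq k$) an immediate one-line consequence.
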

Theorem \ref{thm:specMRA} provides the optimal solution with the closed form for matrix approximation with a regularization term.
In the case of $\rk(\M)>k$, the approximation ${\hat\C}{\hat\C}^\top+{\hat\delta}\I_d$ is full rank and has strictly lower spectral norm error than the rank-$k$ truncated SVD.
Note that \citet{zhang2014matrix} has established the Frobenius norm based result about the optimal analysis\footnote{We also give a concise proof for the result of  \citet{zhang2014matrix}  in Appendix \ref{appendix:specMRA}.}.
%while our result is global to spectral norm.
%while our analysis is with respect to spectral norm and relies on the properties of unitary invariant norms.

Recall that in the streaming case, our goal is to approximate the concentration of historical approximation and current data at the $t$-th round.
The following theorem shows that the update of RFD is optimal with respect to the spectral norm for each step.
\begin{thm}\label{thm:optRFD}
Based on the updates in Algorithm \ref{alg:RFD},
we have
\begin{align}
       (\B^\ut, \alpha^\ut)
      =   \argmin_{\B\in\BR^{d\times (m-1)},  \alpha\in\BR}
          \big\| (\hB^\utm)^\top\hB^\utm + \alpha^\utm\I_d - (\B^\top\B + \alpha\I_d) \big\|_2. \label{prob:RFD}
\end{align}
\end{thm}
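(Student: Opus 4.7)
The plan is to recognize (\ref{prob:RFD}) as an instance of the optimization problem already solved by Theorem~\ref{thm:specMRA}, and then to read off Algorithm~\ref{alg:RFD}'s update as a minimizer. Setting $\delta = \alpha - \alpha^\utm$ and $\C = \B^\top$, the objective in (\ref{prob:RFD}) becomes
\begin{align*}
\min_{\C\in\BR^{d\times(m-1)},\, \delta\in\BR} \bigl\|\M - \C\C^\top - \delta\I_d\bigr\|_2, \qquad \M \triangleq (\hB^\utm)^\top \hB^\utm,
\end{align*}
which is exactly (\ref{prob:thm1}) with $k = m-1$.

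Reading off the eigendecomposition of $\M$ from the SVD $\hB^\utm = \U^\utm \bSigma^\utm (\V^\utm)^\top$ computed in line~4 of Algorithm~\ref{alg:RFD}, the PSD matrix $\M = \V^\utm(\bSigma^\utm)^2(\V^\utm)^\top$ has eigenvalues $(\sigma_1^\utm)^2 \geq \cdots \geq (\sigma_m^\utm)^2$ followed by $d - m$ zeros (we work in the streaming regime $d > m$). Theorem~\ref{thm:specMRA} then pins the optimal value at $(\sigma_m^\utm)^2/2$ with $\hat\delta = (\sigma_m^\utm)^2/2$, matching line~6 of the algorithm. To confirm that the $(\B^\ut)^\top\B^\ut$ produced by line~5 attains this minimum when paired with $\hat\delta$, I would plug the updates into the residual and use the orthonormality of $\V^\utm$ to simplify it to $(\sigma_m^\utm)^2\,\V^\utm(\V^\utm)^\top - \tfrac{(\sigma_m^\utm)^2}{2}\,\I_d$; since $\V^\utm(\V^\utm)^\top$ is a rank-$m$ orthogonal projection, the eigenvalues of this residual are $\pm (\sigma_m^\utm)^2/2$, so its spectral norm equals the Theorem~\ref{thm:specMRA} lower bound.

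The main subtlety I expect is that the minimizers of (\ref{prob:thm1}) actually form a one-parameter family (indexed by how the shrinkage $(\sigma_m^\utm)^2/2$ is distributed between $\C\C^\top$ and $\delta\I_d$ inside $\mathrm{span}(\V_{m-1}^\utm)$): Theorem~\ref{thm:specMRA} singles out the balanced member, while Algorithm~\ref{alg:RFD} constructs a different member with the same $\hat\delta$ but a different $\hat\C\hat\C^\top$. This is why the verification step cannot just quote the explicit minimizer of Theorem~\ref{thm:specMRA} and has to compute the residual directly. Apart from this subtlety the proof is pure bookkeeping about indices and matrix shapes, and the corner case $d \leq m$ falls outside the intended $m \ll d$ regime.
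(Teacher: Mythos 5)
Your proposal is correct, and it follows the same overall route as the paper's Appendix~C (reduce (\ref{prob:RFD}) to the problem solved by Theorem~\ref{thm:specMRA} and read off the update), but you have caught and repaired a real gap in the paper's own argument. The paper applies Theorem~\ref{thm:specMRA} with $\M=(\hB^\utm)^\top\hB^\utm+\alpha^\utm\I_d$ and simply asserts that the resulting $\hat\C$ equals $(\B^\ut)^\top$; in fact the closed form $\hat\C=\U_k(\bSigma_k-\hat\delta\I_k)^{1/2}\Q$ with $\hat\delta=\alpha^\ut$ yields $\V^\utm_{m-1}\big((\bSigma^\utm_{m-1})^2-\tfrac{1}{2}(\sigma^\utm_m)^2\I_{m-1}\big)^{1/2}\Q$, i.e.\ shrinkage by $\tfrac{1}{2}(\sigma^\utm_m)^2$, whereas the algorithm shrinks by $(\sigma^\utm_m)^2$. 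As you observe, the minimizer of (\ref{prob:thm1}) is not unique, and the algorithm's update is a \emph{different} member of the optimal set than the one Theorem~\ref{thm:specMRA} exhibits, so one cannot conclude by pattern-matching against that formula alone. Your extra verification step --- computing the residual $(\sigma^\utm_m)^2\V^\utm(\V^\utm)^\top-\tfrac{1}{2}(\sigma^\utm_m)^2\I_d$, whose eigenvalues are $\pm\tfrac{1}{2}(\sigma^\utm_m)^2$ for $d>m$, and matching its spectral norm to the lower bound $\max_{i>k}|\sigma_i(\M)-\hat\delta|=\tfrac{1}{2}(\sigma^\utm_m)^2$ established inside the proof of Theorem~\ref{thm:specMRA} --- is exactly the missing step, and it is consistent with the residual computation the paper itself performs later in the proof of Theorem~\ref{thm:RFDbound}. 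Your reduction via $\delta=\alpha-\alpha^\utm$ (so that $\alpha^\utm\I_d$ cancels and $\M=(\hB^\utm)^\top\hB^\utm$) is also slightly cleaner than the paper's, and the two are equivalent since adding a multiple of $\I_d$ to $\M$ merely translates the optimal $\delta$. The only caveats are the degenerate cases you already flag ($d\leq m$, which is outside the $m\ll d$ regime, and $\sigma^\utm_m=0$, where the residual vanishes and optimality is trivial).
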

%Theorem \ref{thm:optRFD} provides a natural explanation to RFD,
%making the algorithm reasonable intuitively.
%The objective function of (\ref{prob:RFD}) is more general than a low-rank framework,
%which bridges the gap between FD and low-rank approximation.
Theorem \ref{thm:optRFD} explains RFD from an optimization viewpoint. It shows that each step of RFD is optimal for current information.
Based on this theorem, the update of the standard FD corresponds $(\B, \alpha)= (\B^\ut, 0)$, which is not the optimal solution of (\ref{prob:RFD}). Intuitively, the regularization term of RFD compensates each direction for the over reduction from the shrinkage operation of FD.
Theorem \ref{thm:optRFD} also implies RFD is an online extension to the approximation of Theorem \ref{thm:specMRA}.
We can prove Theorem \ref{thm:optRFD} by using Theorem \ref{thm:specMRA} with $\M = (\hB^\utm)^\top\hB^\utm+\alpha^\utm\I_d$.
We defer the details to Appendix \ref{appendix:optRFD}.

RFD also enjoys a tighter approximation error than FD as the following theorem shows.
\begin{thm}\label{thm:RFDbound}
    For any $k<m$ and using the notation of Algorithm \ref{alg:RFD}, we have
    \begin{align}
        \big\|\A^\top\A-(\B^\top\B+\alpha\I_d) \big\|_2 \leq \frac{1}{2(m-k)}\| \A - [\A]_k \|_F^2, \label{bound:RFD}
    \end{align}
    where $[\A]_k$ is the best rank-k approximation to $\A$ in both the Frobenius and spectral norms.
\end{thm}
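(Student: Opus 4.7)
The plan is to adapt the classical frequent directions error analysis, using the extra regularization term to gain a factor of two. Define the error matrix $\oW^\ut := (\A^\ut)^\top\A^\ut - \bigl((\B^\ut)^\top\B^\ut + \alpha^\ut\I_d\bigr)$ where $\A^\ut := [\a^{(1)},\ldots,\a^\ut]^\top$. From the RFD updates one has $(\hB^\utm)^\top\hB^\utm - (\B^\ut)^\top\B^\ut = (\sigma_m^\ut)^2\,\V^\ut(\V^\ut)^\top$ (the columns of $\V^\ut$ are all $m$ right singular vectors of $\hB^\utm$) and $\alpha^\ut - \alpha^\utm = (\sigma_m^\ut)^2/2$, so
\begin{align*}
\oW^\ut - \oW^\utm \;=\; s_t\bigl[\V^\ut(\V^\ut)^\top - \tfrac{1}{2}\I_d\bigr], \qquad s_t := (\sigma_m^\ut)^2 .
\end{align*}
Telescoping from $\oW^{(m-1)} = \bz$ and evaluating the quadratic form on a unit vector $\x$ gives $\x^\top\oW^{(T)}\x = \sum_t s_t\bigl(\|(\V^\ut)^\top\x\|^2 - \tfrac12\bigr)$, which lies in $[-S/2,\,S/2]$ with $S := \sum_{t=m}^T s_t$, because each $\|(\V^\ut)^\top\x\|^2\in[0,1]$. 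Hence $\|\oW^{(T)}\|_2 \leq S/2$, the factor-of-two improvement over the corresponding FD bound $S$.

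Next I would bound $S$ by $\|\A-\A_k\|_F^2$ via the same Frobenius bookkeeping used in FD. Since the $\B$-update of RFD coincides with FD, the identity $\|\B^\ut\|_F^2 = \|\B^\utm\|_F^2 + \|\a^\ut\|^2 - m\,s_t$ holds, so summing gives $mS = \|\A\|_F^2 - \|\B\|_F^2$; in addition the scalar update yields $\alpha = \alpha^{(T)} = S/2$. Weyl's inequality applied to the symmetric identity $\A^\top\A = \B^\top\B + \alpha\I_d + \oW^{(T)}$ produces $\sigma_i(\A)^2 \leq \sigma_i(\B)^2 + \alpha + \|\oW^{(T)}\|_2$ for every $i$. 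Summing over $i=1,\ldots,k$ with $k<m$ gives $\|\B\|_F^2 \geq \sum_{i=1}^k \sigma_i(\B)^2 \geq \sum_{i=1}^k \sigma_i(\A)^2 - k\bigl(\alpha + \|\oW^{(T)}\|_2\bigr)$, and plugging this into $mS = \|\A\|_F^2 - \|\B\|_F^2$ together with $\alpha + \|\oW^{(T)}\|_2 \leq S/2 + S/2 = S$ yields $mS \leq \|\A-\A_k\|_F^2 + kS$. Rearranging, $(m-k)S \leq \|\A-\A_k\|_F^2$, so $\|\oW^{(T)}\|_2 \leq S/2 \leq \tfrac{1}{2(m-k)}\|\A-\A_k\|_F^2$, as required.

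The main obstacle is that, unlike in standard FD where the error matrix is monotonically PSD-increasing so the spectral-norm bound follows trivially from positivity, the RFD per-step increment $s_t\bigl[\V^\ut(\V^\ut)^\top - \tfrac12\I_d\bigr]$ is indefinite (eigenvalues $\pm s_t/2$). I therefore cannot appeal to $\oW^\ut \succeq \bz$ and must extract $\|\oW^{(T)}\|_2$ via a worst-case unit-vector calculation on the telescoping sum; this is precisely what exposes the saving of a factor of two. A related subtlety is that Weyl's inequality now produces two perturbation terms ($\alpha$ and $\|\oW^{(T)}\|_2$) rather than one, but the scalar update of RFD has been chosen exactly so that each is at most $S/2$, and their combined effect replays the role of $\|\W\|_2$ in the classical FD closing algebra, allowing the factor-of-two improvement to survive the rank-$k$ reduction.
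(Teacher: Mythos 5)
Your proof is correct and follows essentially the same route as the paper: telescope the error into per-step increments $s_t\bigl[\P_t-\tfrac12\I_d\bigr]$ of spectral norm $s_t/2$, sum to get $S/2$, and then bound $S$ by $\tfrac{1}{m-k}\|\A-\A_k\|_F^2$. The only difference is that the paper simply cites this last bound as a known property of the FD shrinkage (inequality (\ref{bound:sv}) of Lemma \ref{property:FD}), whereas you re-derive it in place via the standard Frobenius bookkeeping together with Weyl's inequality and the observation $\alpha+\|\oW^{(T)}\|_2\leq S$ --- a valid, self-contained replay of the same argument.
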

The right-hand side of inequality (\ref{bound:RFD}) is the half of the one in (\ref{bound:FD}),
which means RFD reduces the approximation error significantly with only one extra scalar.

The real applications usually consider the matrix with a regularization term.
Hence we also consider approximating the matrix $\M=\A^\top\A+\alpha_0\I_d$ where $\alpha_0>0$ and the rows of $\A$ are available in sequentially order.
Suppose that the standard FD approximates $\A^\top\A$ by $\B^\top\B$.
Then it estimates $\M$ as $\M_{\rm FD}=\B^\top\B+\alpha_0\I_d$. Meanwhile, RFD generates the approximation $\M_{\rm RFD}=\B^\top\B+\alpha\I_d$ by setting $\alpha^{(m-1)}=\alpha_0$.
Theorem \ref{thm:condition} shows that the condition number of $\M_{\rm RFD}$ is better than $\M_{\rm FD}$ and $\M$.
In general, the equality in Theorem \ref{thm:condition} usually can not be achieved for $t>m$ unless
$(\a^\ut)^\top$ lies in the row space of $\B^\utm$ exactly or the first $t$ rows of $\A$ have perfect low rank structure.
Hence RFD is more likely to generate a well-conditioned approximation than others.

\begin{thm}\label{thm:condition}
    With the notation of Algorithms \ref{alg:FD} and \ref{alg:RFD},
    let $\M=\A^\top\A+\alpha_0\I_d$, $\M_{\rm FD}=\B^\top\B+\alpha_0\I_d$, $\M_{\rm RFD}=\B^\top\B+\alpha\I_d$
    and $\alpha^{(m-1)}=\alpha_0$, where $\alpha_0 > 0$ is a fixed scalar.
    Then we have $\kappa(\M_{\rm RFD}) \leq \kappa(\M_{\rm FD})$
    and $\kappa(\M_{\rm RFD}) \leq \kappa(\M)$.
\end{thm}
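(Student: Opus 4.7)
The plan is to reduce both inequalities to a single elementary fact about how the condition number of a positive semi-definite matrix responds to a scalar shift: for any PSD matrix $\P\in\BR^{d\times d}$ with top and bottom eigenvalues $\lambda_1\ge\lambda_d\ge 0$ and any $c>0$,
\[
  \kappa(\P+c\I_d) \;=\; \frac{\lambda_1+c}{\lambda_d+c},
\]
whose derivative in $c$ equals $(\lambda_d-\lambda_1)/(\lambda_d+c)^2\le 0$, so the condition number is non-increasing in the amount of regularization.

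The first inequality then follows almost immediately. From Line~6 of Algorithm~\ref{alg:RFD}, $\alpha^\ut-\alpha^\utm = (\sigma_m^\utm)^2/2\ge 0$, so induction on $t$ yields $\alpha=\alpha^{(T)}\ge\alpha^{(0)}=\alpha_0$. Since $\M_{\rm FD}$ and $\M_{\rm RFD}$ share the PSD kernel $\B^\top\B$ and differ only in that $\M_{\rm RFD}$ uses the larger regularizer $\alpha$ in place of $\alpha_0$, applying the observation above with $\P=\B^\top\B$ immediately gives $\kappa(\M_{\rm RFD})\le\kappa(\M_{\rm FD})$.

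For the second inequality the one-line monotonicity argument no longer applies, because $\M$ and $\M_{\rm RFD}$ no longer share a common low-rank part. I would build on two identities obtained by unrolling the RFD update for $t=m,\ldots,T$: the PSD identity $\A^\top\A-\B^\top\B = \sum_{t=m}^{T}(\sigma_m^\utm)^2\V^\utm(\V^\utm)^\top$, which is already implicit in the proof of Theorem~\ref{thm:RFDbound}, and the scalar identity $\alpha-\alpha_0 = \tfrac{1}{2}\sum_{t=m}^{T}(\sigma_m^\utm)^2$ from Line~6 of Algorithm~\ref{alg:RFD}. Together these let one set up a Weyl-type comparison of the extreme eigenvalues of $\M$ and $\M_{\rm RFD}$, with the goal of bounding $\lambda_{\max}(\M_{\rm RFD})$ from above by $\lambda_{\max}(\M)$ and bounding $\lambda_{\min}(\M_{\rm RFD})=\alpha$ from below by $\lambda_{\min}(\M)=\lambda_{\min}(\A^\top\A)+\alpha_0$. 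The main obstacle I anticipate is the lower bound on $\lambda_{\min}$: because $\B^\top\B$ is necessarily rank-deficient (its rank is at most $m-1<d$), $\lambda_{\min}(\M_{\rm RFD})$ is forced to come entirely from the regularizer, so the claim hinges on showing that the accumulated half-shrinkage $\alpha-\alpha_0$ absorbs $\lambda_{\min}(\A^\top\A)$. I would attack this using Courant--Fischer on the discarded directions $\V^\utm$ at each round, exploiting that each shrinkage step deposits mass along a direction in which the current sketch has the smallest eigenvalue.
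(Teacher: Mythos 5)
Your treatment of the first inequality is correct and is essentially the paper's own argument: each update adds $(\sigma_m^\utm)^2/2\ge 0$ to the regularizer, so $\alpha\ge\alpha_0$; since $\B^\top\B$ has rank at most $m-1<d$, the smallest eigenvalue of both $\M_{\rm FD}$ and $\M_{\rm RFD}$ is exactly the regularizer, and your shift-monotonicity observation gives $\kappa(\M_{\rm RFD})=\frac{\sigma_{\max}(\B^\top\B)+\alpha}{\alpha}\le\frac{\sigma_{\max}(\B^\top\B)+\alpha_0}{\alpha_0}=\kappa(\M_{\rm FD})$.

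The second inequality is where the gap is: you stop at a plan, and the plan's pivotal step --- showing $\lambda_{\min}(\M_{\rm RFD})=\alpha\ge\lambda_{\min}(\A^\top\A)+\alpha_0=\lambda_{\min}(\M)$ --- is not only unproven but false in general, so no amount of Courant--Fischer on the discarded directions will rescue it. Take $d=2$, $m=2$, $\A=\diag(2,1)$: the single RFD step yields $\B=(\sqrt{3},0)$ and $\alpha=\alpha_0+\tfrac{1}{2}$, while $\lambda_{\min}(\A^\top\A)+\alpha_0=1+\alpha_0$. (With $\alpha_0=1$ this even gives $\kappa(\M_{\rm RFD})=4.5/1.5=3>2.5=\kappa(\M)$, which shows the second claim itself tacitly requires the rank-deficient regime $\lambda_{\min}(\A^\top\A)=0$ --- the regime in which the paper's proof operates, since its final step equates $\kappa(\M)$ with $\frac{\sigma_{\max}(\A^\top\A)+\alpha_0}{\alpha_0}$.) The paper's actual argument is much shorter and never needs a lower bound on $\alpha$ beyond $\alpha\ge\alpha_0$: it writes $\kappa(\M_{\rm RFD})=\frac{\sigma_{\max}(\B^\top\B)+\alpha}{\alpha}$, bounds the numerator via $\B^\top\B\preceq\A^\top\A$ (inequality (\ref{bound:psd}) of Lemma \ref{property:FD}, which is immediate from the unrolled PSD identity you already wrote down), obtaining $\le\frac{\sigma_{\max}(\A^\top\A)+\alpha}{\alpha}$, and then applies the same monotonicity-in-the-shift step you used for the first inequality to reach $\frac{\sigma_{\max}(\A^\top\A)+\alpha_0}{\alpha_0}=\kappa(\M)$. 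So the fix is to abandon the Weyl/eigenvalue-matching program for $\lambda_{\min}$ and instead reuse your two existing ingredients --- the PSD domination $\B^\top\B\preceq\A^\top\A$ and the shift-monotonicity lemma --- on the numerator and denominator respectively.
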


\section{The Online Newton Step by RFD}
\label{sec:ons-rfd}

We now present the sketched online Newton step by robust frequent directions (RFD-SON).
The procedure is shown in Algorithm \ref{alg:RFD-ONS},
which is similar to sketched online Newton step (SON) algorithms \citep{luo2016efficient} but uses the new sketching method RFD.
The matrix $\bH^\ut$ in Line 10 will not be constructed explicitly in practice, which is only used to the ease of analysis.
The updates of $\bu^\ut$ and $\w^\ut$ can be finished in $\fO(md)$ time and space complexity by the Woodbury identity.
We demonstrate the details in Appendix \label{appendix:alg}.
When $d$ is large, RFD-SON is much efficient than the standard online Newton step with the full Hessian
that requires $\fO(d^2)$ both in time and space.

Note that we do not require the hyperparameter $\alpha_0$ to be strictly positive in RFD-SON.
In practice, RFD-SON always archives good performance by setting $\alpha_0=0$, which leads to a hyperparameter-free algorithm,
while the existing SON algorithm needs to select $\alpha_0$ carefully.
We consider the general case that $\alpha_0 \geq 0$ in this section for the ease of analysis.

\begin{algorithm}
    \caption{RFD for Online Newton Step}
	\label{alg:RFD-ONS}
	\begin{algorithmic}[1]
    \STATE {\textbf{Input:}} $\alpha^{(0)}=\alpha_0\geq0$, $m<d$, $\eta_t=\fO(1/t)$, $\w^{(0)}=\bz^d$ and
            $\B^{(0)}$ be empty. \\[0.15cm]
    \STATE {\textbf{for}} $t=0,\dots,T-1$ {\textbf{do}} \\[0.2cm]
    \STATE \quad Receive example $\x^\ut$, and loss function $f_t(\w)$ \\[0.2cm]
    \STATE \quad Predict the output of $\x^\ut$ by $\w^\ut$ and suffer loss $f_t(\w^\ut)$ \\[0.2cm]
    \STATE \quad $\g^\ut=\nabla f_t(\w^\ut)$ \\[0.2cm]
    \STATE \quad ${\hat\B}^\utm=\begin{bmatrix}\B^\utm \\ (\sqrt{\mu_t+\eta_t} \g^\ut)^\top \end{bmatrix}$ \\[0.2cm]
    \STATE \quad Compute SVD: $\hB^\utm = \U^\utm\bSigma^\utm(\V^\utm)^\top$ \\[0.2cm]
    \STATE \quad $\B^\ut = \sqrt{\big(\bSigma_{m-1}^\utm\big)^2-\big(\sigma^\utm_{m}\big)^2\I_{m-1}}\cdot (\V^\utm_{m-1})^\top$ \\[0.2cm]
    \STATE \quad $\alpha^\ut = \alpha^\utm + \big(\sigma^\utm_{m}\big)^2 / 2$ \\[0.2cm]
    \STATE \quad $\bH^\ut = (\B^\ut)^\top\B^\ut + \alpha^\ut\I_d$ \\[0.2cm]
    \STATE \quad $\bu^\utp =  \w^\ut - (\bH^\ut)^\dag \g^\ut$  \\[0.2cm]
    \STATE \quad $\w^\utp = \argmin_{\w\in\fK_{t+1}} \| \w - \bu^\utp \|_{\bH^\ut}$ \\[0.2cm]
    \STATE {\textbf{end for}} \\[0.2cm]
	\end{algorithmic}
\end{algorithm}

\begin{thm}\label{thm:regret}
    Let $\mu=\min_{t=1}^T\{\mu_t\}$ and $\fK={\mathop{\bigcap}}_{t=1}^T\fK_t$.
    Then under Assumptions \ref{asm:bound} and \ref{asm:curv} for any $\w\in\fK$,
    Algorithm \ref{alg:RFD-ONS} has the following regret for $\alpha_0>0$
    \begin{align}
            R_T(\w)
        \leq \frac{\alpha_0}{2}\|\w\|^2 + 2(CL)^2 \sum_{t=1}^T \eta_t
             + \frac{m}{2(\mu+\eta_T)}\ln\Big(\frac{\tr\big((\B^{(T)})^\top\B^{(T)}\big)}{m\alpha_0}
                +\frac{\alpha^{(T)}}{\alpha_0}\Big) + \Omega_{\rm RFD}               \label{bound:regretRFD}
    \end{align}
    where
    \begin{align*}
        \Omega_{\rm RFD}=  \frac{d-m}{2(\mu+\eta_T)}\ln\frac{\alpha^{(T)}}{{\alpha_0}} +
            \frac{m}{4(\mu+\eta_T)}\sum_{t=1}^T\frac{(\sigma_m^\ut)^2}{\alpha^\ut} + C^2 \sum_{t=1}^T(\sigma_m^\utm)^2.
    \end{align*}
\end{thm}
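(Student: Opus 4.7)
The plan is to adapt the standard regret analysis for online Newton step (as in Hazan's logarithmic-regret analysis and the FD-SON analysis of Luo et al.\ 2016) to account for the adaptive regularization $\alpha^\ut$ introduced by RFD. First I would apply Assumption~\ref{asm:curv} together with the ``lift'' trick: add and subtract $\tfrac{\eta_t}{2}\big((\g^\ut)^\top(\w^\ut-\w)\big)^2$ to obtain the per-round bound
\begin{align*}
f_t(\w^\ut) - f_t(\w) \leq (\g^\ut)^\top(\w^\ut-\w) - \tfrac{\mu_t+\eta_t}{2}\big((\g^\ut)^\top(\w^\ut-\w)\big)^2 + 2(CL)^2\eta_t,
\end{align*}
where the $2(CL)^2\eta_t$ slack uses Assumption~\ref{asm:bound} together with $|\w^\top\x^\ut|\leq C$. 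Summing over $t$ already produces the $2(CL)^2\sum_t \eta_t$ term of \eqref{bound:regretRFD}.

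Second, I would translate the Newton update through the $\bH^\ut$-weighted projection. Non-expansiveness of the projection in the $\bH^\ut$-norm yields
\begin{align*}
\|\w^\utp-\w\|_{\bH^\ut}^2 \leq \|\w^\ut-\w\|_{\bH^\ut}^2 - 2(\g^\ut)^\top(\w^\ut-\w) + (\g^\ut)^\top(\bH^\ut)^\dag\g^\ut,
\end{align*}
so $(\g^\ut)^\top(\w^\ut-\w)$ is bounded by the difference of squared $\bH^\ut$-distances plus $\tfrac12(\g^\ut)^\top(\bH^\ut)^\dag\g^\ut$. Combining with the per-round bound above and summing, the regret reduces to controlling (i) the telescoping quantity $\sum_t\big(\|\w^\ut-\w\|_{\bH^\ut}^2 - \|\w^\utp-\w\|_{\bH^\ut}^2\big)$, and (ii) $\sum_t(\g^\ut)^\top(\bH^\ut)^\dag\g^\ut$, while also absorbing the quadratic terms $\tfrac{\mu_t+\eta_t}{2}\big((\g^\ut)^\top(\w^\ut-\w)\big)^2$.

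Third, the key obstacle is that $\bH^\ut$ is not a simple monotone update of $\bH^\utm$: the RFD shrinkage reduces the singular values of the sketch while the regularizer $\alpha^\ut$ grows. I would exploit Theorem~\ref{thm:optRFD}, which tells us that $\bH^\ut=(\B^\ut)^\top\B^\ut+\alpha^\ut\I_d$ is the optimal spectral-norm approximation of $(\hat\B^\utm)^\top\hat\B^\utm+\alpha^\utm\I_d$, to quantify the discrepancy. Writing $\tilde\bH^\ut := (\hat\B^\utm)^\top\hat\B^\utm+\alpha^\utm\I_d = \bH^\utm + (\mu_t+\eta_t)\g^\ut(\g^\ut)^\top$, one obtains that $\tilde\bH^\ut - \bH^\ut$ has operator norm at most $(\sigma_m^\utm)^2/2$. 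Substituting this into the telescoping sum and using $\|\w^\ut-\w\|^2\leq (2C)^2$ yields the $C^2\sum_t(\sigma_m^\utm)^2$ correction term in $\Omega_{\rm RFD}$. The term $-\tfrac{\mu_t+\eta_t}{2}\big((\g^\ut)^\top(\w^\ut-\w)\big)^2$ combines with the telescoping to cancel the contribution of the rank-one update of $\bH$, after which what remains is the initial $\bH^{(0)}=\alpha_0\I_d$ boundary, giving the $\tfrac{\alpha_0}{2}\|\w\|^2$ term.

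Finally, for $\sum_t(\g^\ut)^\top(\bH^\ut)^\dag\g^\ut$ I would use the standard log-determinant inequality $\tr(\G\C^{-1})\leq \ln\det(\C+\G)-\ln\det(\C)$ for rank-one $\G$, applied to the ``virtual'' updates $\bH^\utm \to \tilde\bH^\ut$. Telescoping and bounding $\ln\det(\tilde\bH^\ut)-\ln\det(\bH^\ut)$ by $(\sigma_m^\utm)^2/\alpha^\ut$ (using the perturbation bound above together with $\ln(1+x)\leq x$), I split $\ln\det(\bH^{(T)}/\bH^{(0)})$ into two parts: the $m-1$ nontrivial singular directions of $\B^{(T)}$ contribute the concave/AM--GM term $\tfrac{m}{2(\mu+\eta_T)}\ln\big(\tfrac{\tr((\B^{(T)})^\top\B^{(T)})}{m\alpha_0} + \tfrac{\alpha^{(T)}}{\alpha_0}\big)$, while the remaining $d-m$ directions are governed solely by $\alpha^{(T)}/\alpha_0$ and contribute $\tfrac{d-m}{2(\mu+\eta_T)}\ln(\alpha^{(T)}/\alpha_0)$. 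The dominating difficulty is the careful bookkeeping between $\tilde\bH^\ut$ and $\bH^\ut$: one needs Theorem~\ref{thm:optRFD} precisely to ensure the RFD regularization soaks up the shrinkage loss so that the only residuals are the $(\sigma_m^\ut)^2/\alpha^\ut$ and $(\sigma_m^\utm)^2$ terms appearing in $\Omega_{\rm RFD}$. Using $\mu_t+\eta_t\geq \mu+\eta_T$ to pull out a common factor then yields the stated bound.
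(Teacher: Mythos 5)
Your proposal is correct and follows essentially the same route as the paper: the paper invokes Proposition~1 of \cite{luo2016efficient} as a black box for the decomposition $2R_T(\w)\leq\|\w\|^2_{\bH^{(0)}}+R_G+R_D$ (which you re-derive via the curvature/lift/projection-nonexpansiveness argument), and both arguments then treat the RFD-specific perturbation $\bH^\ut-\bH^\utm-(\mu_t+\eta_t)\g^\ut(\g^\ut)^\top=\tfrac12(\sigma_m^\utm)^2\big[\V_\perp^\utm(\V_\perp^\utm)^\top-\V_{m-1}^\utm(\V_{m-1}^\utm)^\top\big]$ identically, yielding the $C^2\sum_t(\sigma_m^\utm)^2$ diameter correction, the $\tfrac{m}{4(\mu+\eta_T)}\sum_t(\sigma_m^\ut)^2/\alpha^\ut$ term, and the log-determinant/Jensen split into the top $m$ and remaining $d-m$ directions. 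One small slip worth fixing: the log-determinant inequality should be $\tr\big(\G(\C+\G)^{-1}\big)\leq\ln\det(\C+\G)-\ln\det(\C)$, with the inverse taken at the \emph{updated} matrix --- as you wrote it, $\tr(\G\C^{-1})\leq\ln(1+\tr(\G\C^{-1}))$ is false for rank-one $\G\neq\bz$, though your application to the virtual update $\bH^\utm\to\tilde\bH^\ut$ makes clear the correct form is intended.
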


We present the regret bound of RFD-SON for positive $\alpha_0$ in Theorem \ref{thm:regret}.
%In practise, the hyperparameter $\alpha_0$ usually plays an important role in existing SON algorithms.
The term $\Omega_{\rm RFD}$ in  (\ref{bound:regretRFD}) is the main gap between RFD-SON and the standard online Newton step without sketching.
 $\Omega_{\rm RFD}$ is dominated by  the last term which can be bounded as (\ref{bound:FD}).
If we exploit the standard FD to sketched online Newton step \citep{luo2016efficient} (FD-SON),
the regret bound is similar to (\ref{bound:regretRFD}) but the gap will be
%\begin{align*}
%    \Omega_{{\rm FD}} = \frac{m}{2(\mu+\eta_T)}\sum_{t=1}^T\frac{(\sigma_m^\utm)^2}{\alpha_0}.
%\end{align*}
\begin{align*}
    \Omega_{{\rm FD}} = \frac{m\Omega_k}{2(m-k)(\mu+\eta_T)\alpha_0},
\end{align*}
where $\Omega_k$ plays the similar role to term $\sum_{t=1}^T(\sigma_m^\utm)^2$ in RFD-SON
and the detailed definition can be found in \cite{luo2016efficient}.
This result is heavily dependent on the hyperparameter $\alpha_0$.
If we increase the value of $\alpha_0$, the gap $\Omega_{{\rm FD}}$ can be reduced
but the term $\frac{\alpha_0}{2}\|\w\|^2$ in the bound will increase, and vice versa.
In other words, we need to trade off $\frac{\alpha_0}{2}\|\w\|^2$ and $\Omega_{{\rm FD}}$ by tuning $\alpha_0$ carefully.
For RFD-SON, Theorem \ref{thm:regret} implies that we can set $\alpha_0$ be sufficiently small to reduce $\frac{\alpha_0}{2}\|\w\|^2$
and it has limited effect on the term $\Omega_{\rm RFD}$.
The reason is that the first term of $\Omega_{\rm RFD}$ contains $\frac{1}{\alpha_0}$ in the logarithmic function
and the second term contains $\alpha^{(t)}=\alpha_0+\frac{1}{2}\sum_{i=1}^{t-1}{(\sigma_m^\ui)^2}$ in the denominator.
For large $t$, $\alpha^{(t)}$ is mainly dependent on $\sum_{i=1}^{t-1}{(\sigma_m^\ui)^2}$, rather than $\alpha_0$.
Hence the regret bound of RFD-SON is much less sensitive to the hyperparameter $\alpha_0$ than FD-SON.
We have $\sum_{t=1}^T(\sigma_m^\utm)^2 \leq \|\A-[\A]_k\|_F^2/(m-k)$ for $k<m$ by using (\ref{bound:sv}) with $\A=\sum_{t=1}^T(\mu_t+\eta_t)\g^\ut(\g^\ut)^\top$.

Consider RFD-SON with $\alpha_0=0$.
Typically, the parameter $\alpha^\ut$ is zero at very few first iterations and increase to be strictly positive later.
Hence the learning algorithm can be divided into two phases based on whether $\alpha^\ut$ is zero.
Suppose that $T'$ satisfies
\begin{align}
    \alpha^\ut
    \begin{cases}
        =0, & t< T' \\
        >0,& t \geq T'.
    \end{cases}\label{cond:phase}
\end{align}
Then the regret can be written as
\begin{align*}
    R_{T}(\w) = R_{1:T'}(\w) + R_{(T'+1):T}(\w),
\end{align*}
where
\begin{align*}
    R_{1:T'}(\w)=\sum_{t=1}^{T'} f_t(\w^\ut) - \sum_{t=1}^{T'} f_t(\w)
\end{align*}
and
\begin{align*}
R_{(T'+1):T}(\w)=\sum_{t=T'+1}^{T} f_t(\w^\ut) - \sum_{t=T'+1}^{T'} f_t(\w).
\end{align*}
The regret from the first $T'$ iterations can be bounded by Theorem \ref{thm:regret1}
and the bound of $R_{(T'+1):T}(\w)$ can be derived by the similar proof of Theorem \ref{thm:regret}.
\begin{thm}\label{thm:regret1}
    By the condition of Theorem \ref{thm:regret}, $\mu'=\max_{t=1}^{T'}\{\mu_t\}$ and letting $\alpha_0=0$,
    $\sigma^*$ be the smallest nonzero singular values of $\sum_{t=1}^{T'}\g^\ut(\g^\ut)^\top$
    and $T'$ satisfy (\ref{cond:phase}),
    we have that  the first $T'$ iterations of Algorithm \ref{alg:RFD-ONS} has the following regret
    \begin{align}
            R_{1:T'}(\w)
        \leq 2(CL)^2 \sum_{t=1}^{T'} \eta_t + \frac{m-1}{2(\eta_1+\mu')}
              + \frac{m(m-1)}{2(\eta_1+\mu')}\ln\Big(1+\frac{2\sum_{t=1}^{T'}\|\g^\ut\|_2^2}{(1+r)r\sigma^*}\Big).
    \end{align}
\end{thm}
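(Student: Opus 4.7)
The plan is to specialize the proof of Theorem \ref{thm:regret} to the degenerate regime where $\alpha^\ut=0$ for every $t\leq T'$. The key structural observation is that, throughout this phase, the condition $\alpha^\ut=\alpha^\utm+(\sigma_m^\utm)^2/2=0$ forces $\sigma_m^\ut=0$ at every round, so the shrinkage step of Algorithm \ref{alg:RFD-ONS} is a no-op and one has the \emph{exact} identity $\bH^\ut=\sum_{i=1}^{t}(\mu_i+\eta_i)\g^\ui(\g^\ui)^\top$. Equivalently, all scaled gradients $\hg^\ui=\sqrt{\mu_i+\eta_i}\,\g^\ui$ lie in a common subspace of dimension at most $m-1$. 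This turns the analysis into a rank-deficient but otherwise sketch-free online Newton argument, which is why no term involving $(\sigma_m^\utm)^2$ appears in the bound.

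First, exactly as in the opening steps of Theorem \ref{thm:regret}, I would apply Assumption \ref{asm:curv} at $(\bu,\w)=(\w^\ut,\w)$, then use Assumption \ref{asm:bound} together with the boundedness of $\fK$ to inflate $\mu_t$ to $\mu_t+\eta_t$ at a round cost of $2(CL)^{2}\eta_t$. Summing produces the first term $2(CL)^{2}\sum_{t=1}^{T'}\eta_t$ and reduces the task to controlling
\[
\sum_{t=1}^{T'}\Bigl[\g^{(t)\top}(\w^\ut-\w)-\tfrac{\mu_t+\eta_t}{2}\bigl(\g^{(t)\top}(\w^\ut-\w)\bigr)^{2}\Bigr].
\]
Next, inserting the RFD-SON update $\bu^\utp=\w^\ut-(\bH^\ut)^{\dag}\g^\ut$ and invoking the Pythagorean inequality for the semi-norm $\|\cdot\|_{\bH^\ut}$ around the projection $\w^\utp$ gives
\[
2(\w^\ut-\w)^\top\bH^\ut(\bH^\ut)^{\dag}\g^\ut \leq \|\w-\w^\ut\|_{\bH^\ut}^{2}-\|\w-\w^\utp\|_{\bH^\ut}^{2}+\g^{(t)\top}(\bH^\ut)^{\dag}\g^\ut.
\]

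Now comes the telescoping. Because every $\g^\ut$ lies in the active subspace, $\bH^\ut(\bH^\ut)^{\dag}\g^\ut=\g^\ut$ as soon as $\g^\ut$ has been absorbed into $\bH^\ut$, so the left-hand side recovers the desired $2\g^{(t)\top}(\w^\ut-\w)$ term; the only exceptions are the at most $m-1$ rounds in which $\g^\ut$ introduces a genuinely new direction, and these contribute the additive constant $\frac{m-1}{2(\eta_1+\mu')}$ of the bound. Telescoping the quadratic $\|\cdot\|_{\bH^\ut}^{2}$ differences across $t=1,\dots,T'$ and controlling the residual sum $\sum_t \g^{(t)\top}(\bH^\ut)^{\dag}\g^\ut$ via the standard matrix determinantal identity, applied on the active subspace and initialized using the lower bound $\sigma^*$ on the smallest nonzero eigenvalue of $\sum_{t=1}^{T'}\g^\ut(\g^\ut)^\top$, gives $\sum_t\g^{(t)\top}(\bH^\ut)^{\dag}\g^\ut\leq (m-1)\ln(1+\frac{2\sum_t\|\g^\ut\|_2^{2}}{(1+r)r\sigma^*})$, where $r$ is the running rank of $\bH^\ut$. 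Dividing by $2(\eta_1+\mu')$ (since $\mu_t+\eta_t\geq \eta_1+\mu'$ at every round when $\mu'$ is the maximum and $\eta_t$ is decreasing in $t$) produces the logarithmic term with the prefactor $\frac{m(m-1)}{2(\eta_1+\mu')}$.

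The main obstacle will be the bookkeeping around the rank-deficient Hessian: one must justify the semi-norm Pythagorean inequality under the $\|\cdot\|_{\bH^\ut}$-projection (whose minimizer is only pinned down on $\mathrm{range}(\bH^\ut)$), verify that iterates and comparator differ only in directions that the semi-norm ignores, and correctly account for the warm-up cost incurred before $\bH^\ut$ stabilizes at its final rank. The lower bound $\sigma^*$ is what allows the log-determinant argument, which would otherwise blow up from $\det(\bH^{(0)})=0$, to produce a finite logarithmic term of the stated form.
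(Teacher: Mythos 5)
Your proposal follows essentially the same route as the paper: observe that during the $\alpha^\ut=0$ phase no shrinkage occurs, so $\bH^\ut$ is exactly the scaled gradient Gram matrix of rank at most $m-1$; run the pseudo-inverse online-Newton regret analysis to peel off $2(CL)^2\sum_t\eta_t$; and control $\sum_t(\g^\ut)^\top(\bH^\ut)^\dag\g^\ut$ by comparing $\bH^\ut$ to the unscaled Gram matrix $\hat\bH^\ut=\sum_{s\leq t}\g^{(s)}(\g^{(s)})^\top$ and applying the log-determinant argument initialized at $\sigma^*$ --- the paper simply imports these last two steps wholesale as Lemma 3 (from Luo et al., 2016) rather than re-deriving them as you do. The only soft spots in your sketch are minor: the additive $m-1$ actually comes from bounding $(\g^\ut)^\top(\hat\bH^\ut)^\dag\g^\ut\leq 1$ on the rank-increasing rounds (the projection identity $\bH^\ut(\bH^\ut)^\dag\g^\ut=\g^\ut$ holds at every round since $\g^\ut$ is already absorbed into $\bH^\ut$), and your justification ``$\mu_t+\eta_t\geq\eta_1+\mu'$ because $\mu'$ is the maximum and $\eta_t$ is decreasing'' has the inequality direction backwards as stated (a constant-tracking sloppiness that the paper's own Appendix H shares).
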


Combining above results, we can conclude the regret bound for the hyperparameter-free algorithm in Theorem \ref{thm:regret2}.
In practice, we often set $m$ to be much smaller than $d$ and $T$, which implies $T'$ is much smaller than $T$.
Hence, the regret bound of Theorem \ref{thm:regret2} is similar to the one of Theorem \ref{thm:regret} when $\alpha_0$ is close to 0.
We can use RFD-SON with $\alpha_0=0$ and $\eta_t=1/t$ to obtain a hyperparameter-free sketched online Newton algorithm.
\cite{luo2016efficient} have proposed a hyperparameter-free online Newton algorithm without sketching
and their regret contains a term with coefficient $d$.
\begin{thm}\label{thm:regret2}
    Consider Algorithm \ref{alg:RFD-ONS} with $\alpha_0=0$,
    let $T'$ satisfy (\ref{cond:phase})$, \mu=\min_{t=1}^T\{\mu_t\}$, $\mu'=\max_{t=1}^{T'}\{\mu_t\}$,
    $\fK={\mathop{\bigcap}}_{t=1}^T\fK_t$, $\sigma^*$ has the same definition of
    Theorem \ref{thm:regret1} and $\alpha'_0=\det(\bH^{(T')})$.
    Then under Assumptions \ref{asm:bound} and \ref{asm:curv} for any $\w\in\fK$, we have that
    \begin{align}
            R_{T}(\w)
        \leq 2(CL)^2 \sum_{t=1}^{T} \eta_t + \frac{m-1}{2(\eta_1+\mu')}
              + \frac{m(m-1)}{2(\eta_1+\mu')}\ln\Big(1+\frac{2\sum_{t=1}^{T'}\|\g^\ut\|_2^2}{(1+r)r\sigma^*}\Big)  \nonumber\\
             + \frac{1}{2}\|\w^{(T')}\|_{\bH^{(T')}}^2
             + \frac{m}{2(\mu+\eta_T)}\ln\Big(\tr\frac{\big((\B^{(T)})^\top\B^{(T)}\big)}{m\alpha^{(T')}}
             +\frac{\alpha^{(T)}}{\alpha'_0}\Big) + \Omega'_{\rm RFD} \label{bound:RFDSON0}
    \end{align}
    where
    \begin{align*}
        \Omega'_{\rm RFD}=  \frac{d-m}{2(\mu+\eta_T)}\ln\frac{\alpha^{(T)}}{{\alpha'_0}} +
            \frac{m}{4(\mu+\eta_T)}\sum_{t=T'+1}^T\frac{(\sigma_m^\ut)^2}{\alpha^\ut} + C^2 \sum_{t=T'+1}^T(\sigma_m^\utm)^2.
    \end{align*}
\end{thm}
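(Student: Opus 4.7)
The plan is to decompose the regret at the transition time $T'$ defined by (\ref{cond:phase}), writing
\[
R_T(\w) = R_{1:T'}(\w) + R_{(T'+1):T}(\w),
\]
and to bound each summand with a result already available. The first piece is covered verbatim by Theorem \ref{thm:regret1}, which is tailored exactly to the regime where $\alpha^\ut = 0$; it contributes the second and third terms of the claimed bound as well as $2(CL)^2\sum_{t=1}^{T'}\eta_t$.

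For the second piece, I would replay the proof of Theorem \ref{thm:regret} with round $T'+1$ in the role of round $1$ and the state $(\B^{(T')},\alpha^{(T')},\w^{(T')})$ in the role of the initial condition $(\B^{(0)},\alpha_0,\bz)$. Because $\alpha^\ut>0$ for every $t\geq T'$, the sketch Hessian $\bH^\ut$ is now strictly positive definite, so the per-round exp-concave inequality from Assumption \ref{asm:curv} and the telescoping of $\ln\det(\bH^\ut)$ that underlie Theorem \ref{thm:regret} are available without modification. Three pieces of bookkeeping change: (i) the comparator contribution $\tfrac{\alpha_0}{2}\|\w\|^2$ is replaced by a term of the form $\tfrac{1}{2}\|\w^{(T')}\|_{\bH^{(T')}}^2$, which captures the cost of warm-starting phase 2 at $\w^{(T')}$ in the metric induced by the phase's initial Hessian; (ii) the normalizer $\alpha_0^d$ inside the log-determinant telescope becomes $\det(\bH^{(T')})=\alpha'_0$, yielding the fifth term of (\ref{bound:RFDSON0}) and the first summand of $\Omega'_{\rm RFD}$ after the AM-GM estimate $\det(\bH^{(T)})\leq (\tr((\B^{(T)})^\top\B^{(T)})/m+\alpha^{(T)})^m(\alpha^{(T)})^{d-m}$; and (iii) the sketching slack produced by the spectral-norm errors $(\sigma_m^\ut)^2$ runs over $t\geq T'+1$, reproducing the remaining two terms of $\Omega'_{\rm RFD}$ exactly as in the proof of Theorem \ref{thm:regret}.

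Adding the two phase bounds and collapsing $\sum_{t=1}^{T'}\eta_t + \sum_{t=T'+1}^{T}\eta_t = \sum_{t=1}^{T}\eta_t$ produces (\ref{bound:RFDSON0}). The main obstacle is the phase-2 bookkeeping of the coupled updates to $\B^\ut$ and $\alpha^\ut$: since each RFD step moves mass into both the low-rank sketch and the scalar regularizer, the log-determinant identity splits into two coupled contributions that must be separated cleanly to produce the distinct sum $\tfrac{m}{4(\mu+\eta_T)}\sum_{t=T'+1}^{T}(\sigma_m^\ut)^2/\alpha^\ut$. A secondary subtlety is that a single unified proof across both phases is unavailable: during phase 1 the sketch $\bH^\ut$ is rank-deficient and the Newton step uses $(\bH^\ut)^\dag$, so the volumetric argument behind Theorem \ref{thm:regret} fails there, and one is forced to invoke the restricted-span analysis of Theorem \ref{thm:regret1} instead — this is precisely why the two-phase split is the natural structure for the proof.
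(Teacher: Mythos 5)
Your proposal matches the paper's proof: the paper likewise splits the regret at $T'$, invokes Theorem \ref{thm:regret1} for the rank-deficient first phase, and reruns the argument of Theorem \ref{thm:regret} on rounds $T'+1,\dots,T$ with the warm start $(\B^{(T')},\alpha^{(T')},\w^{(T')})$, so that the log-determinant telescope is normalized by $\alpha'_0=\det(\bH^{(T')})$ and the comparator term becomes $\tfrac{1}{2}\|\w^{(T')}\|_{\bH^{(T')}}^2$. The bookkeeping you flag (separating the low-rank and scalar contributions in the $\ln\det$ telescope, and the trace bound giving $\tfrac{m}{4(\mu+\eta_T)}\sum_{t>T'}(\sigma_m^\ut)^2/\alpha^\ut$) is handled exactly as in the paper's proof of Theorem \ref{thm:regret}, so no new ideas are needed.
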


\section{Experiments}
\label{sec:exp}

In this section, we evaluate the performance of robust frequent directions
(RFD) and online Newton step by RFD (RFD-SON) on six real-world data sets ``a9a,''  ``gisette,''  ``sido0,'' ``farm-ads,'' ``rcv1'' and ``real-sim,'' whose details are listed in Table \ref{table:dataset}.
The data sets ``sido0'' and ``farm-ads''can be found on Causality Workbench\footnote{https://www.causality.inf.ethz.ch/data/SIDO.html},
and UCI Machine Learning Repository\footnote{https://archive.ics.uci.edu/ml/datasets/Farm+Ads}.
The others can be downloaded from LIBSVM repository\footnote{https://www.csie.ntu.edu.tw/~cjlin/libsvmtools/datasets/}.
The experiments are conducted in Matlab and run on a server with Intel (R) Core (TM) i7-3770 CPU 3.40GHz$\times$2, 8GB RAM and 64-bit Windows Server 2012  system.
\begin{table}[ht]
    \centering
	\begin{tabular}{|c|c|c|c|c|}
        \hline
        data sets & $n$ & $d$ & source  \\ \hline
        a9a       & 32,561   &  ~~~~123 & \citep{platt199912} \\ \hline
        gisette   & ~~6,000  &  ~~5,000 & \citep{guyon2004result} \\ \hline
        sido0     & 12,678   &  ~~4,932 & \citep{guyon2008sido} \\ \hline
        farm-ads  & ~~4,143  & 54,877   & \citep{mesterharm2011active} \\ \hline
        rcv1      & 20,242   & 47,236 & \citep{lewis2004rcv1} \\ \hline
        real-sim  & 72,309   &  20,958 & \citep{real-sim} \\ \hline
	\end{tabular}
%    \vspace{-0.4cm}
    \caption{Summary of data sets used in our experiments}\label{table:dataset}\vspace{0.05cm}
\end{table}

\subsection{Matrix Approximation}
We evaluate the approximation errors of the deterministic sketching algorithms including
frequent directions (FD) \citep{liberty2013simple,DBLP:journals/siamcomp/GhashamiLPW16}, parameterized frequent directions (PFD),
compensative frequent directions (CFD) \citep{DBLP:journals/tkde/DesaiGP16} and robust frequent directions (RFD).
For a given data set $\A\in\BR^{n\times d}$ of $n$ samples with $d$ features,
we use the accelerated algorithms (see details in Appendix \ref{appendix:alg}) to
approximate the covariance matrix $\A^\top\A$ by $\B^\top\B$ for FD, PFD, CFD and by $\B^\top\B+\alpha\I_d$ for RFD, respectively.
We measure the performance according to the relative spectral norm error.
%\begin{align*}
%    {\rm Error}_{\rm FD,CFD,PFD} = \frac{\|\A^\top\A-\B^\top\B\|_2}{\|\A^\top\A\|_2},
%\mbox{ and }
%   {\rm Error}_{\rm RFD} = \frac{\|\A^\top\A-\B^\top\B - \alpha\I_d\|_2}{\|\A^\top\A\|_2}.
%\end{align*}
%The regularization term $\alpha_0$ is not important for this evaluation
%because it does not change the absolute error in the numerator of ${\rm Error}\mbox{-}{\rm FD}$ or
%${\rm Error}\mbox{-}{\rm RFD}$.
We report the relative spectral norm error by varying the sketch size $m$.

Figure \ref{figure:RFDerror1} shows the performance of FD, CFD and RFD.
These three methods have no extra hyperparameter and their outputs only rely on the sketch size.
The relative error of RFD is always smaller than that of FD and CFD.
The error of RFD is nearly half of the error of FD in most cases,
which matches our theoretical analysis in Theorem \ref{thm:RFDbound} very well.

\begin{figure*}[ht]
    \centering\vskip-0.4cm
	\begin{tabular}{ccc}
        \includegraphics[scale=0.33]{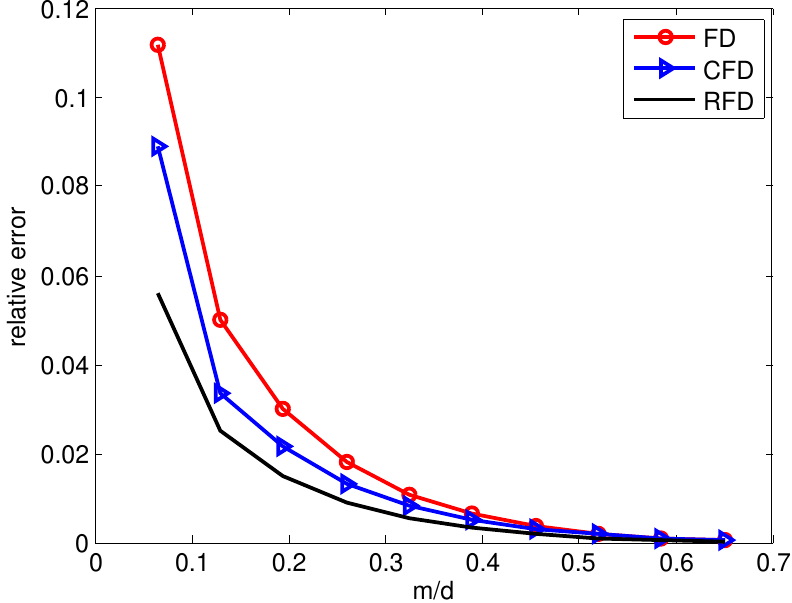}  &
        \includegraphics[scale=0.33]{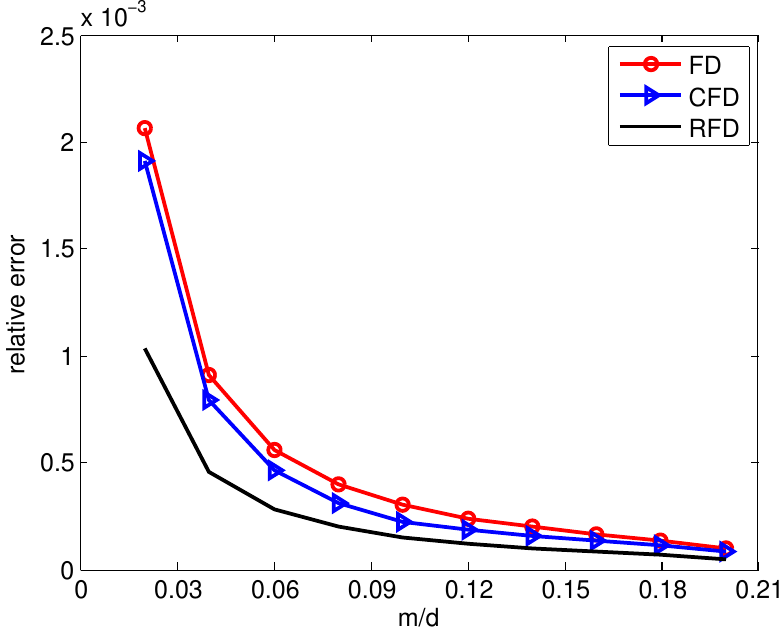}  &
        \includegraphics[scale=0.33]{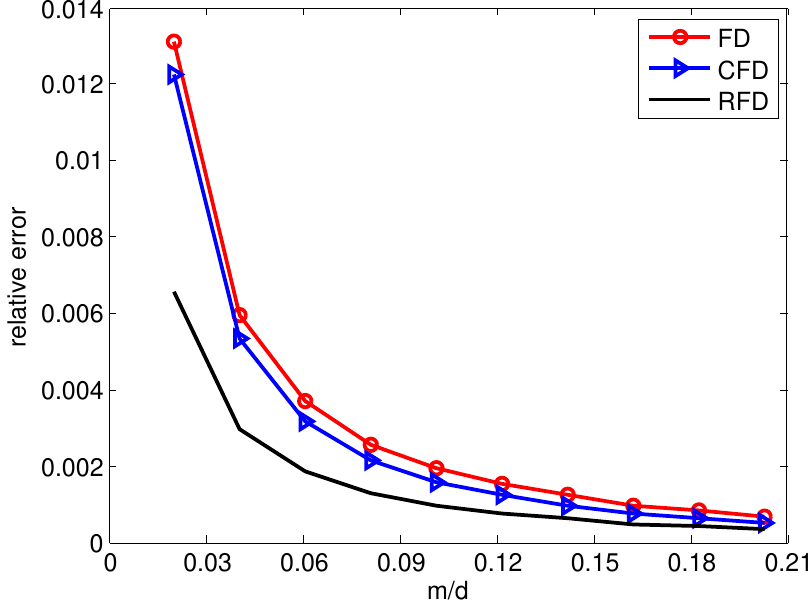}  \\
        (a) a9a & (b) gisette & (c) sido0 \\[0.1cm]
        \includegraphics[scale=0.33]{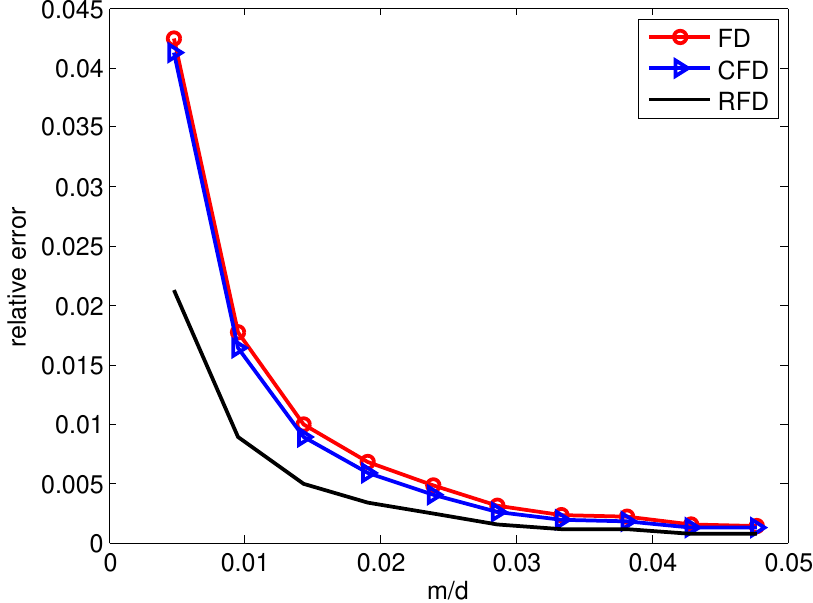}  &
        \includegraphics[scale=0.33]{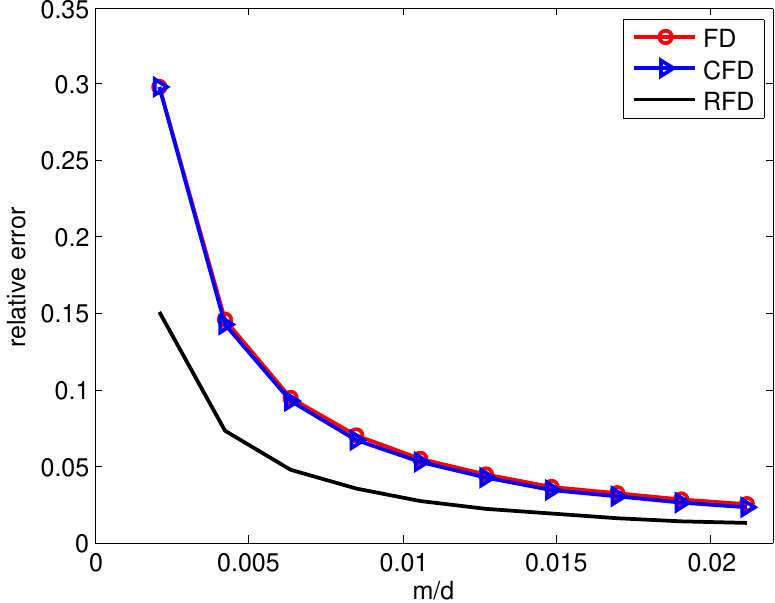}  &
        \includegraphics[scale=0.33]{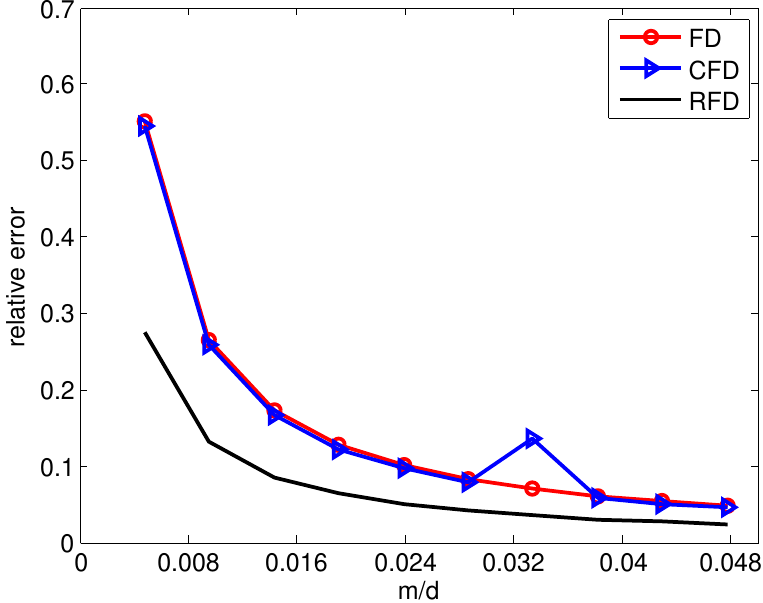}  \\
        (d) farm-ads & (e) rcv1 & (f) real-sim
	\end{tabular}
    \caption{Comparison of relative spectral error of FD, CFD and RFD with proportion of sketching}\vskip-0.4cm
    \label{figure:RFDerror1}\vskip 0.1cm
\end{figure*}

Figure \ref{figure:RFDerror2} compares the performance of RFD and PFD with different choices of the hyperparameter.
We use PFD-$\beta$ to refer the PFD algorithm where $\lfloor \beta m\rfloor$ singular values will get affected by the shrinkage steps.
The extra hyperparameter $\beta$ is tuned from $\{0.2, 0.4, 0.6, 0.8\}$.
The result shows that RFD is better than PFD in most cases.
PFD sometimes can achieve lower approximation error with a good choice of $\beta$.
However, selecting the hyperparameter requires additional computation.

\begin{figure*}[ht]
    \centering
	\begin{tabular}{ccc}
        \includegraphics[scale=0.33]{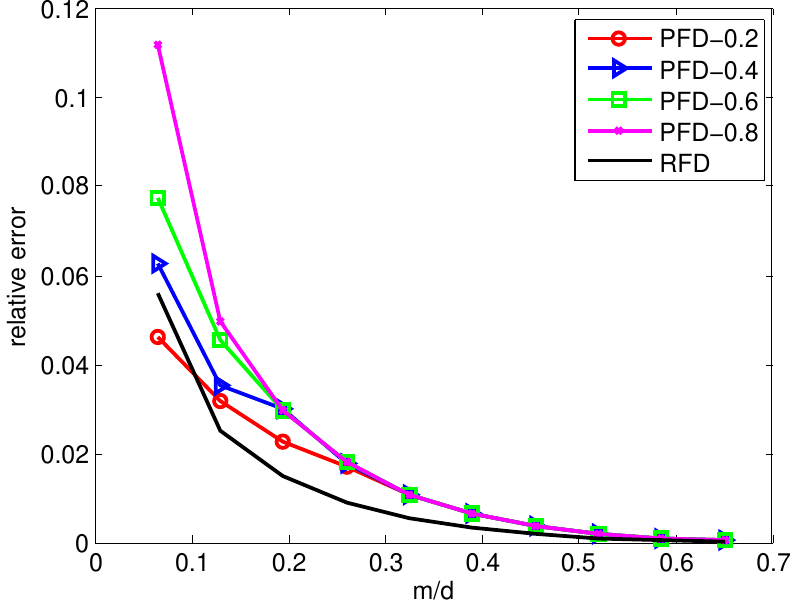}  &
        \includegraphics[scale=0.33]{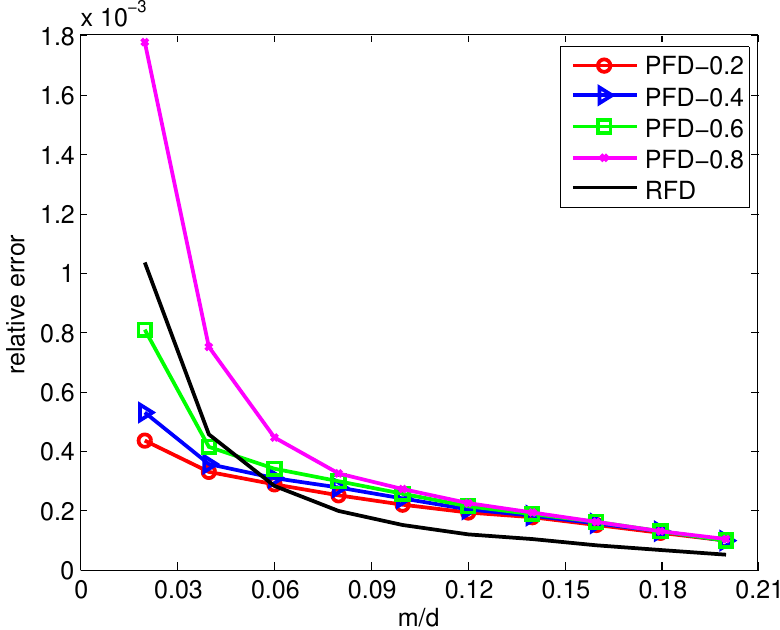}  &
        \includegraphics[scale=0.33]{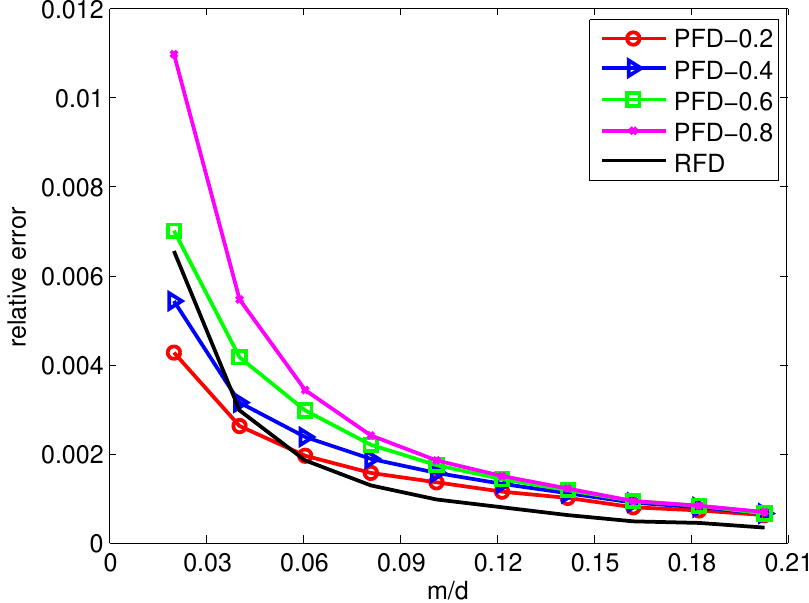}  \\
        (a) a9a & (b) gisette & (c) sido0 \\[0.1cm]
        \includegraphics[scale=0.33]{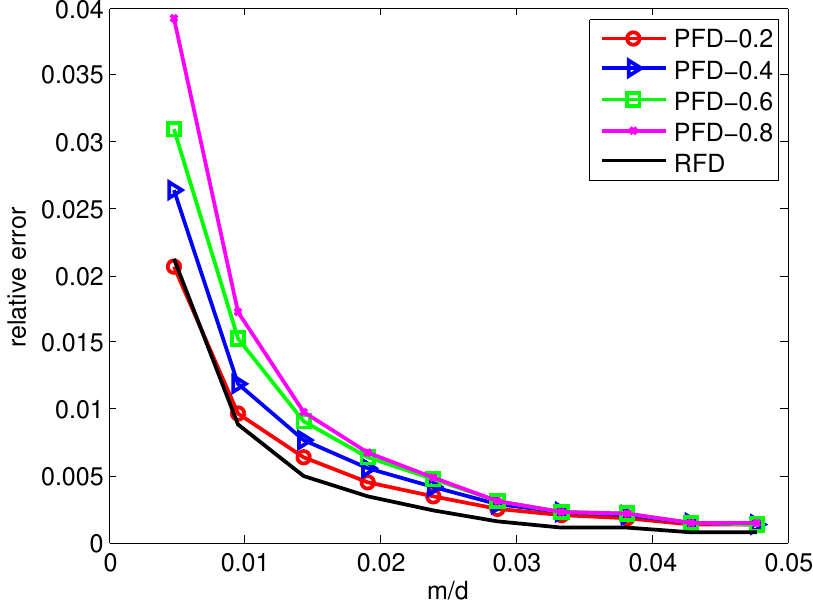}  &
        \includegraphics[scale=0.33]{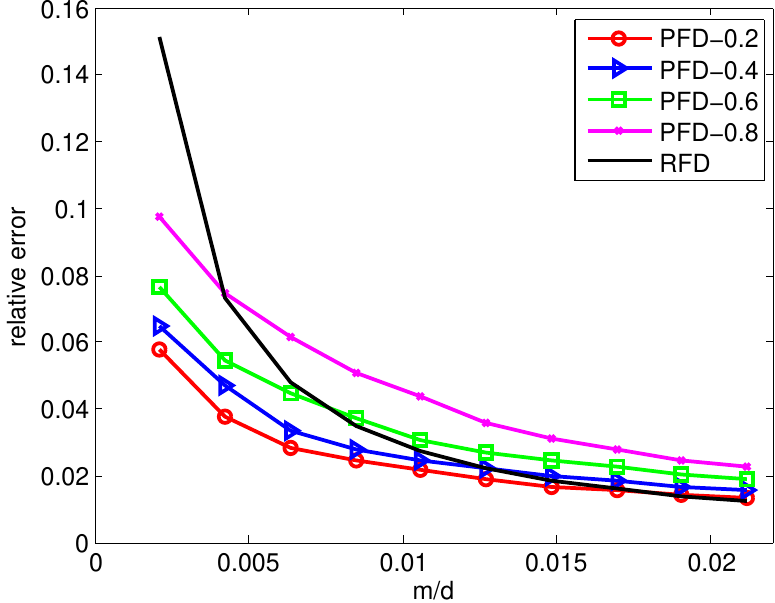}  &
        \includegraphics[scale=0.33]{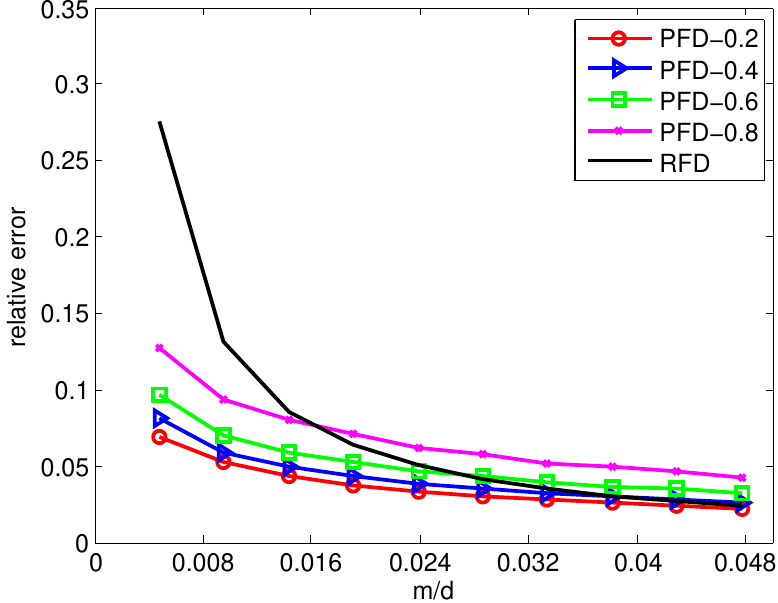}  \\
        (d) farm-ads & (e) rcv1 & (f) real-sim
	\end{tabular}
    \caption{Comparison of relative spectral error of PFD and RFD with proportion of sketching}\vskip-0.6cm
    \label{figure:RFDerror2}
\end{figure*}

\subsection{Online Learning}
We now evaluate the performance of RFD-SON.
We use the least squares loss $f_t(\w)=(\w^\top\x^\ut-y^\ut)^2$, and set $\fK_t=\{\w:|\w^\top\x^\ut|\leq 1\}$.
In the experiments, we use the doubling space strategy (Algorithm \ref{alg:FRFD} in Appendix \ref{appendix:alg}).
We use 70\% of the data set for training and the rest for test.
The algorithms in the experiments include ADAGRAD, the standard online Newton step with the full Hessian \citep{duchi2011adaptive} (FULL-ON),
the sketched online Newton step with frequent directions (FD-SON), the parameterized frequent directions (PFD-SON), the random projections (RP-SON),
Oja's algorithms (Oja-SON) \citep{luo2016efficient,DBLP:journals/tkde/DesaiGP16}, and our proposed sketched online Newton step with RFD (RFD-SON).

The hyperparameter $\alpha_0$ is tuned from $\{10^{-3}, 10^{-2} \dots$ $10^{5}$, $10^{6}\}$ for all methods and let $\eta=1/t$ for SON algorithms.
FULL-ON is too expensive and impractical for large $d$, so we exclude it from experiments on ``farm-ads,''  ``rcv1'' and ``real-sim.''
For PFD-SON, we let $\beta=0.2$ heuristically because it usually achieves good performance on approximating the covariance matrix.
Additionally, RFD-SON includes the result with $\alpha_0=0$ (RFD$_0$-SON).
The sketch size of sketched online Newton methods is chosen from $\{5, 10,  20\}$ for ``a9a,'' ``gisette,'' ``sido0,''
and  $\{20, 30,  50\}$ for ``farm-ads,'' ``rcv1'' and ``real-sim.''
We measure performance according to two metrics \citep{duchi2011adaptive}: the online error rate and the test set performance of the predictor at the end of one pass through the training data.

We are interested in how the hyperparameter $\alpha_0$ affects the performance of the algorithms.
We display the test set performance in Figures \ref{figure:testA} and \ref{figure:testB}.
We compare the online error rate of RFD$_0$-SON with the one of FULL-ON in Figure \ref{figure:train_full}
and show the comparison between RFD$_0$-SON and other SON methods with different choices of $\alpha_0$
in Figures \ref{figure:train_a9a} - \ref{figure:train_real-sim}.

We also report the accuracy on the test sets for all algorithms at one pass with the best $\alpha_0$ in Table \ref{table:accuracy}
and the corresponding running times in Table \ref{table:time}.
All SON algorithms can perform well with the best choice of $\alpha_0$.
However, only RFD$_0$-SON can perform well without tuning the hyperparameter
while all baseline methods ADAGRAD, FD-SON, PFD-SON, RP-SON and Oja-SON are very sensitive to the value of $\alpha_0$.
The sub-figure (j)-(l) in Figures \ref{figure:train_full}-\ref{figure:train_real-sim}
shows RFD-SON usually has good performance with small $\alpha_0$,
which validates our theoretical analysis in Theorem \ref{thm:regret}.
The choice of  the hyperparameter almost has no effect of RFD-SON on data set ``a9a',' ``gisette,'' ``sido0'' and ``farm-ads.''
These results verify that RFD-SON is a very stable algorithm in practice.

\begin{figure}[ht]
\centering
    \begin{tabular}{ccc}
        \includegraphics[scale=0.35]{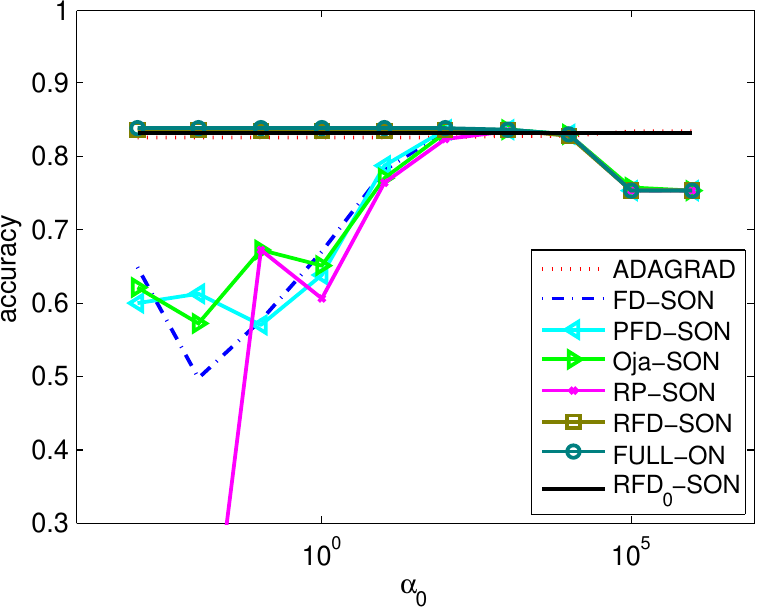} &
        \includegraphics[scale=0.35]{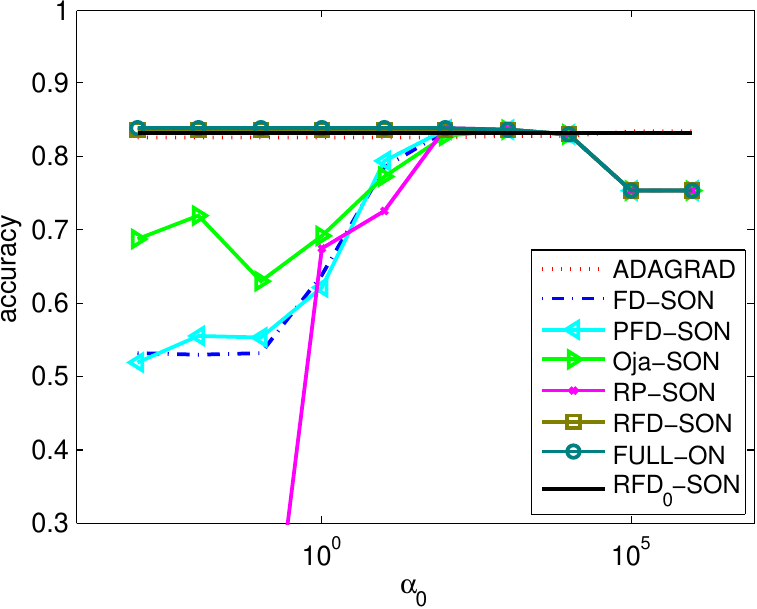} &
        \includegraphics[scale=0.35]{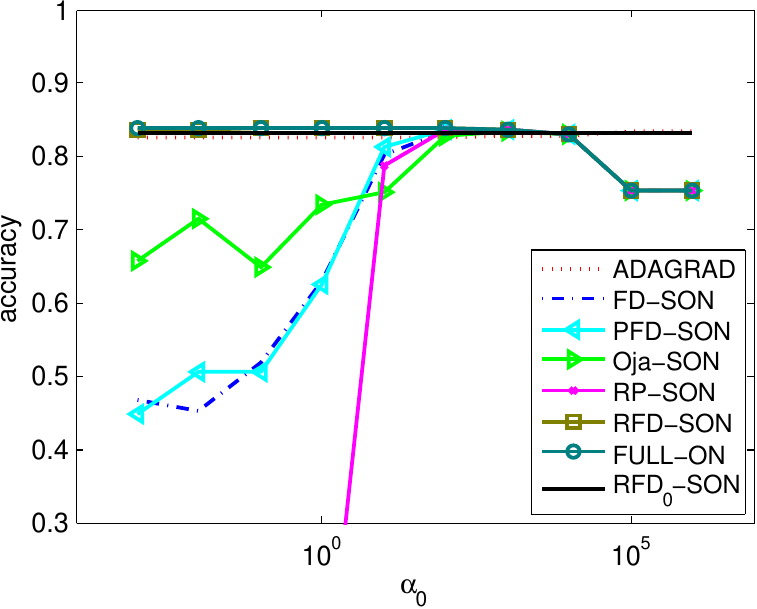} \\
         (a) a9a, $m=5$ &  (b) a9a, $m=10$ &  (c) a9a, $m=20$ \\[0.2cm]
       \includegraphics[scale=0.35]{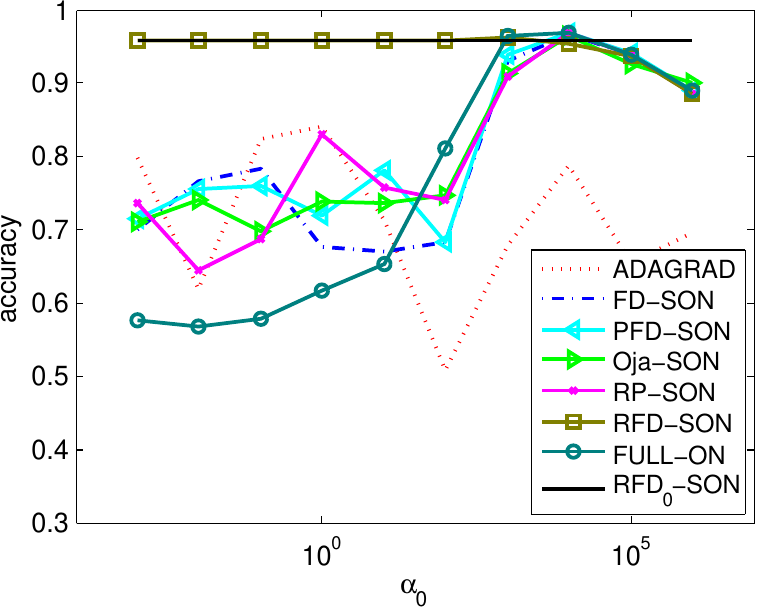} &
        \includegraphics[scale=0.35]{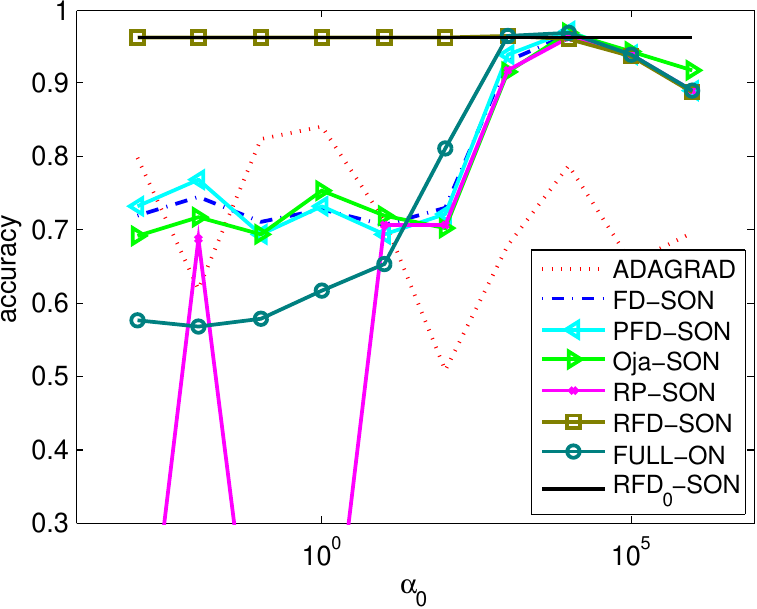} &
        \includegraphics[scale=0.35]{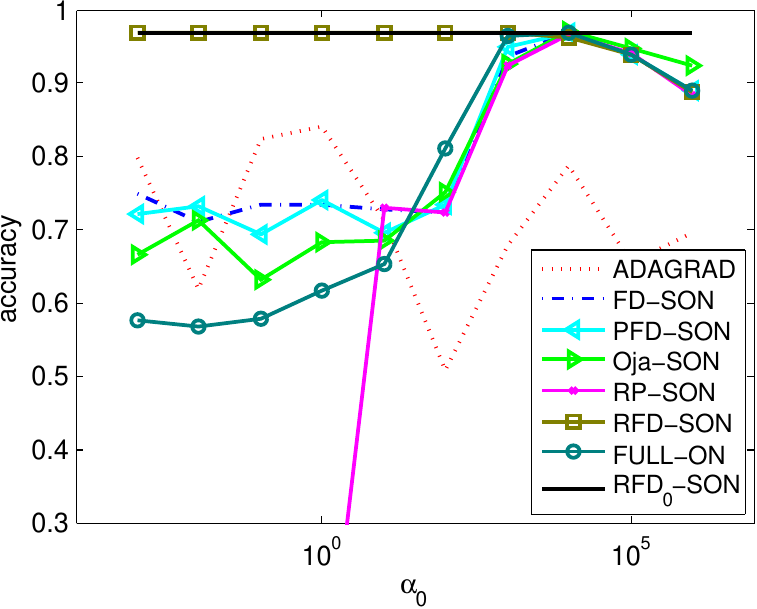} \\
         (d) gisette, $m=5$ &  (e) gisette, $m=10$ &  (f) gisette, $m=20$ \\[0.2cm]
       \includegraphics[scale=0.35]{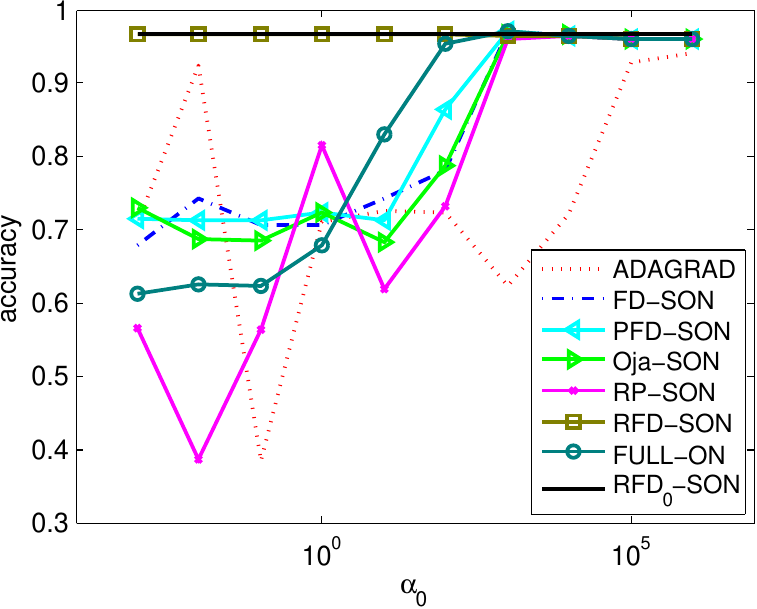} &
        \includegraphics[scale=0.35]{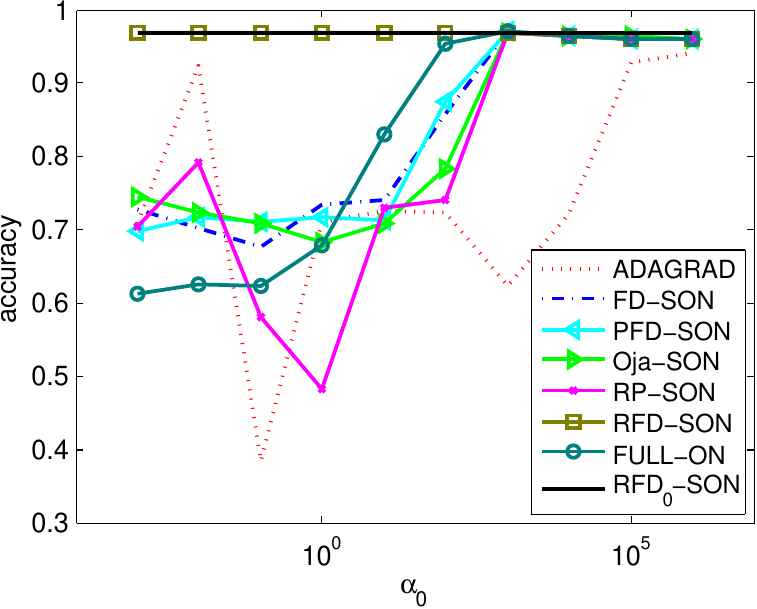} &
        \includegraphics[scale=0.35]{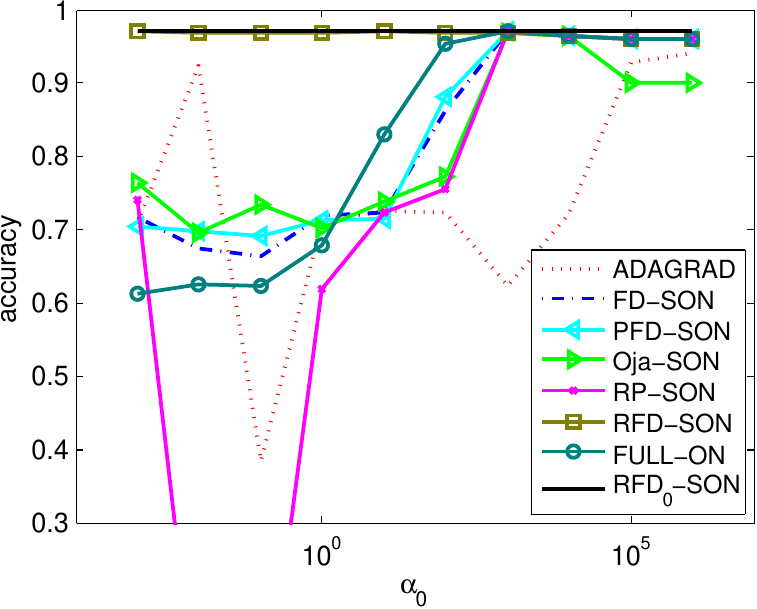} \\
         (g) sido0, $m=5$ &  (h) sido0, $m=10$ &  (i) sido0, $m=20$ \\[0.2cm]
    \end{tabular}
    \caption{\small Comparison of the test error at the end of one pass on ``a9a'', ``gisette'', ``sido0''}
    \label{figure:testA}
\end{figure}

\begin{figure}[ht]
\centering
    \begin{tabular}{ccc}
        \includegraphics[scale=0.35]{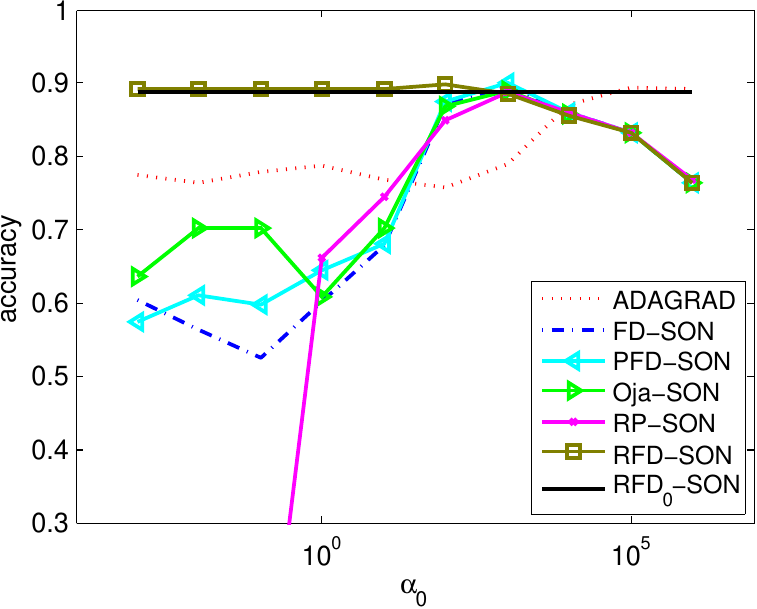} &
        \includegraphics[scale=0.35]{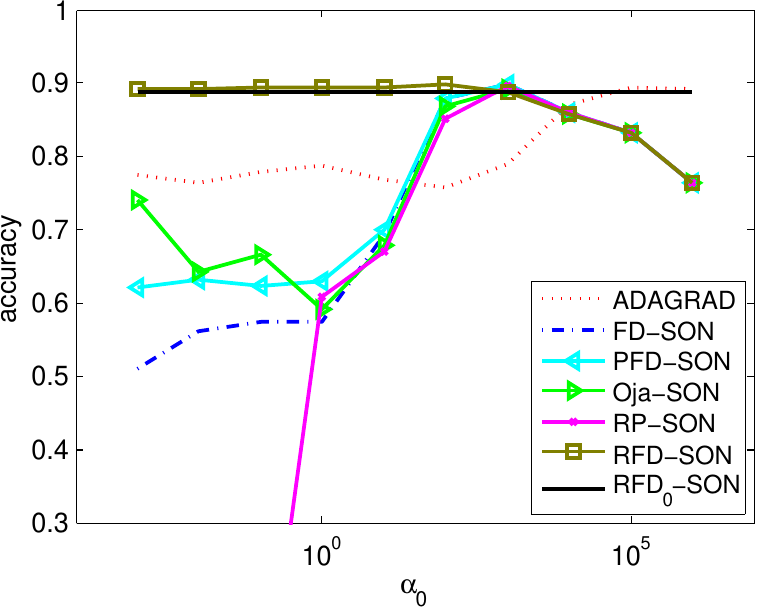} &
        \includegraphics[scale=0.35]{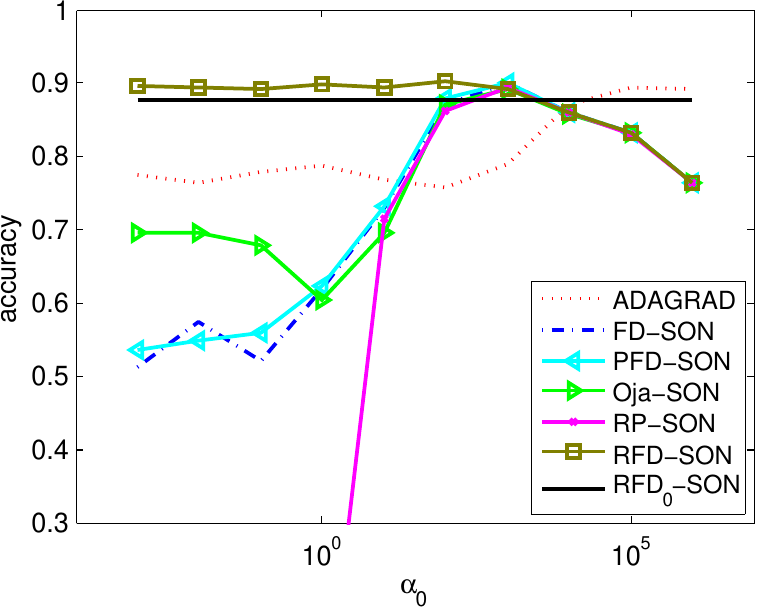} \\
        (a) farm-ads, $m=20$ & (b) farm-ads, $m=30$ & (c) farm-ads, $m=50$ \\[0.4cm]
       \includegraphics[scale=0.35]{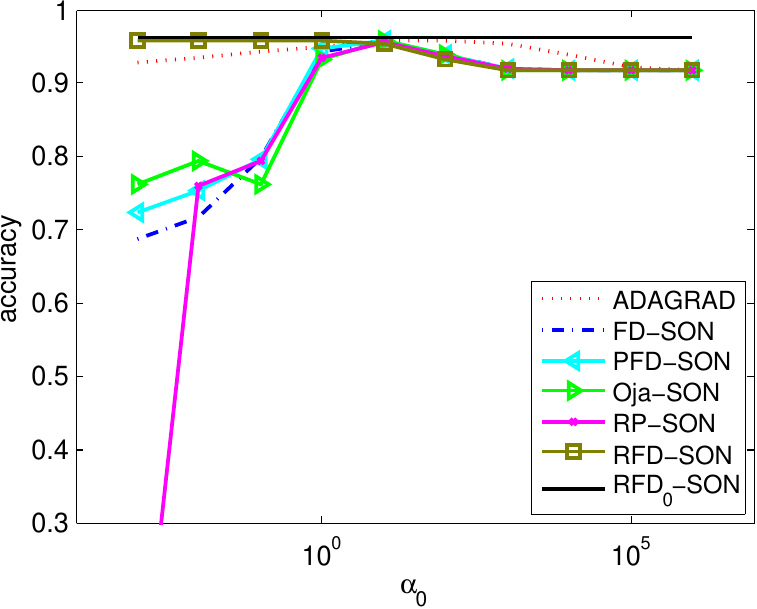} &
        \includegraphics[scale=0.35]{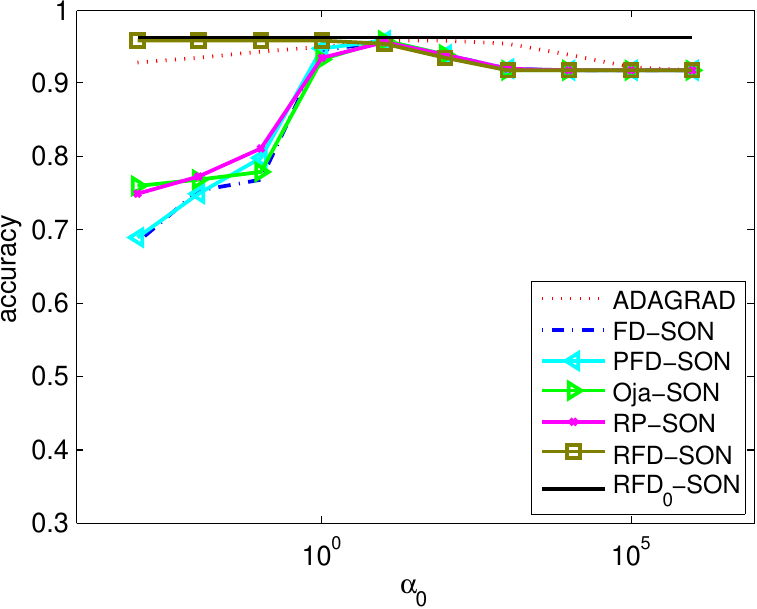} &
        \includegraphics[scale=0.35]{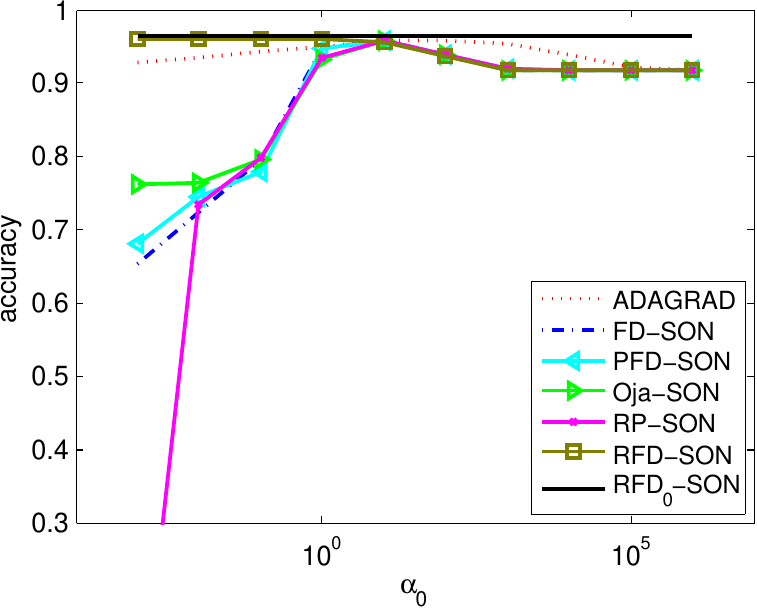} \\
        (d) rcv1, $m=20$ & (e) rcv1, $m=30$ & (f) rcv1, $m=50$ \\[0.4cm]
       \includegraphics[scale=0.35]{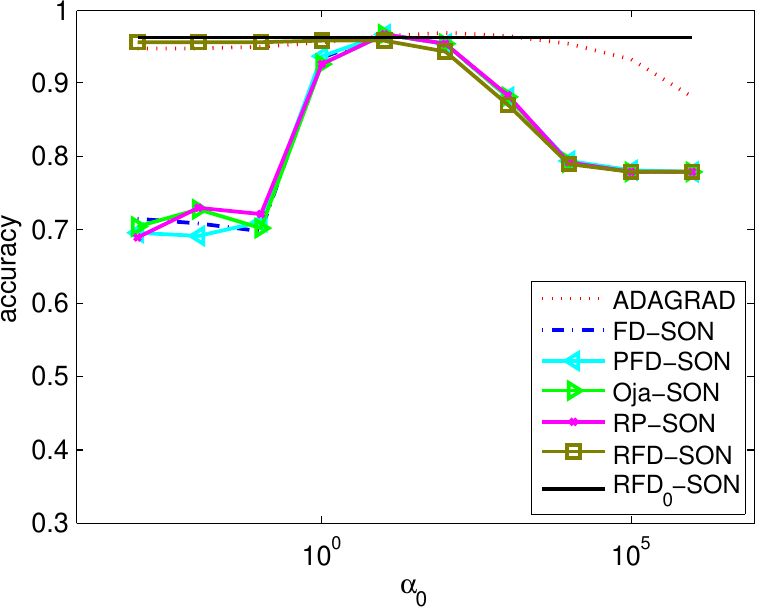} &
        \includegraphics[scale=0.35]{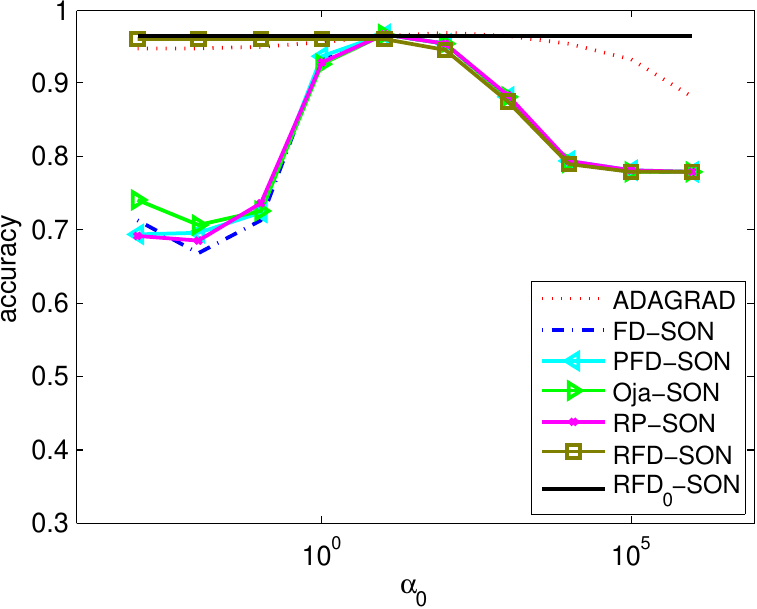} &
        \includegraphics[scale=0.35]{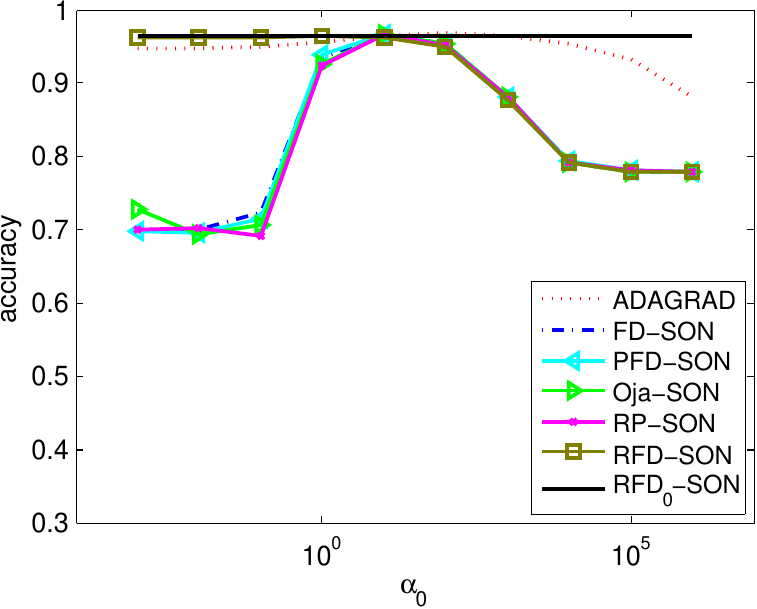} \\
        (g) real-sim, $m=20$ & (h) real-sim, $m=30$ & (i) real-sim, $m=50$ \\[0.4cm]
    \end{tabular}
    \caption{Comparison of the test error at the end of one pass on ``farm-ads'', ``rcv1'', ``real-sim''}
    \label{figure:testB}
\end{figure}

\begin{figure}[H]
\centering
    \begin{tabular}{ccc}
        \includegraphics[scale=0.34]{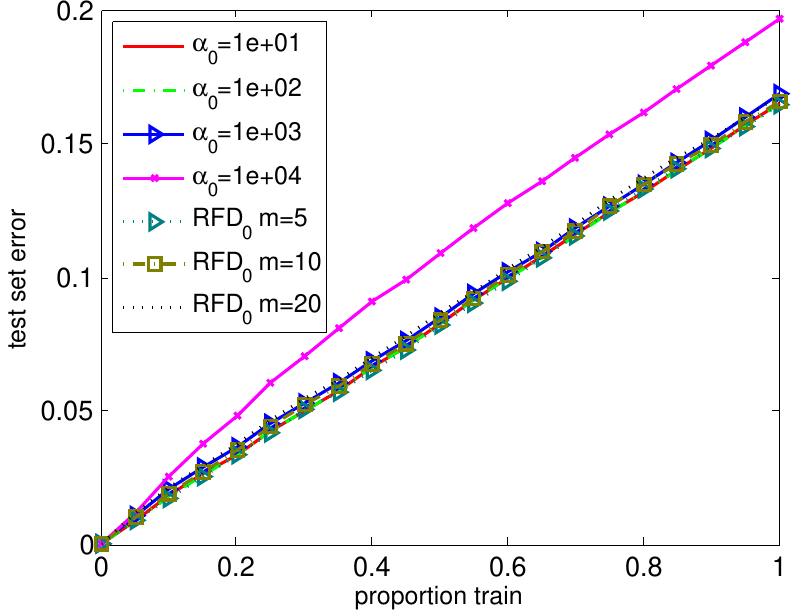} &
        \includegraphics[scale=0.34]{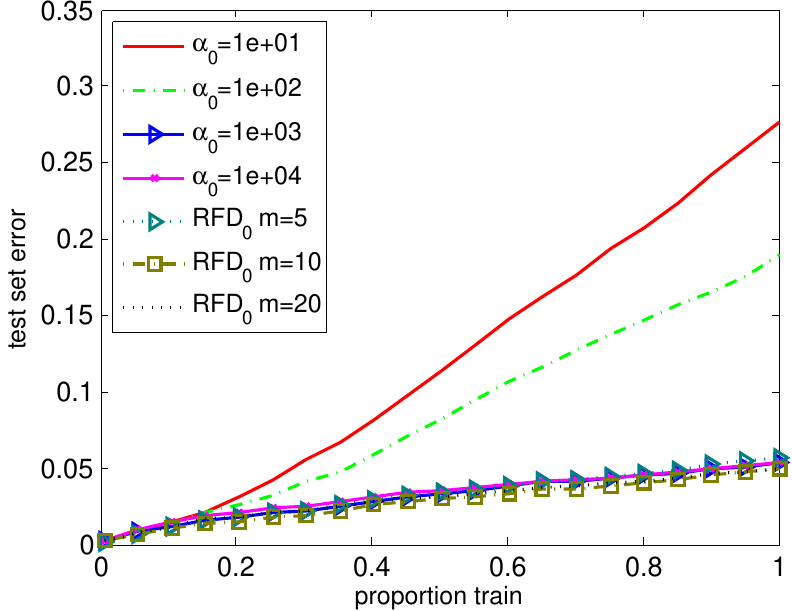} &
        \includegraphics[scale=0.34]{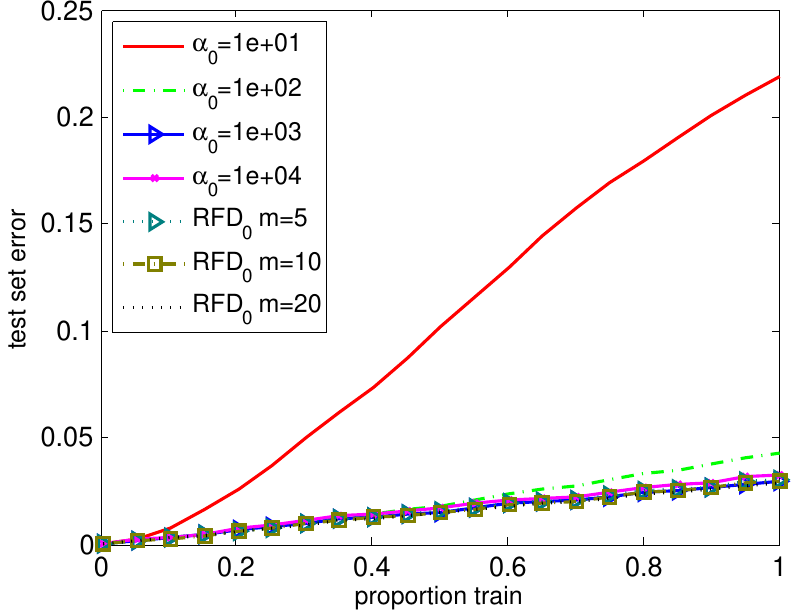} \\
         (a) a9a  &  (a) gisette  &  (c) sido0 \\[0.4cm]
    \end{tabular}
    \caption{Comparison of the online error rate between algorithm FULL-ON and RFD$_0$}
    \label{figure:train_full}
\end{figure}

\begin{figure}[H]
\centering
    \begin{tabular}{ccc}
        \includegraphics[scale=0.32]{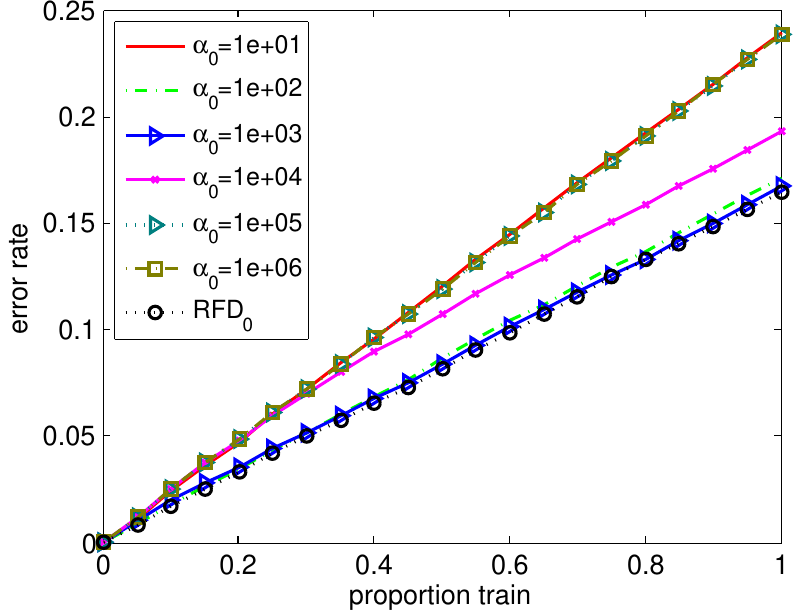} &
        \includegraphics[scale=0.32]{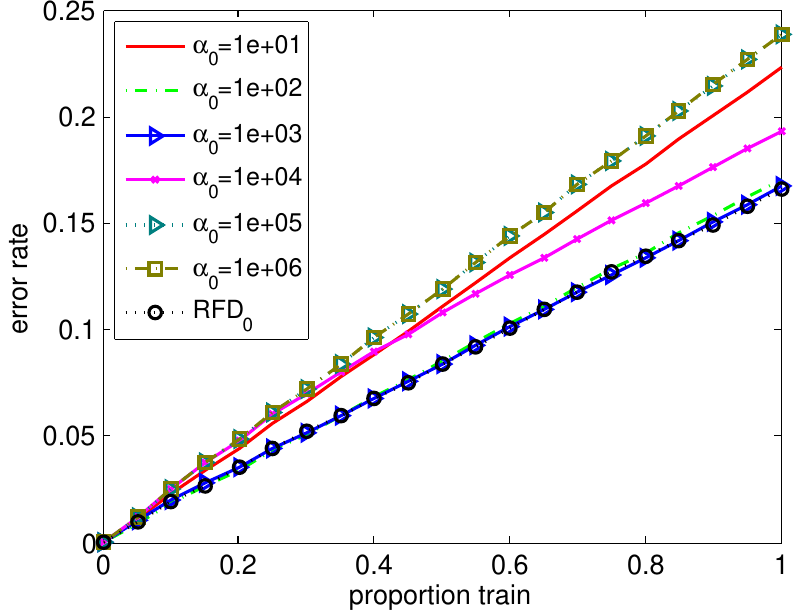} &
        \includegraphics[scale=0.32]{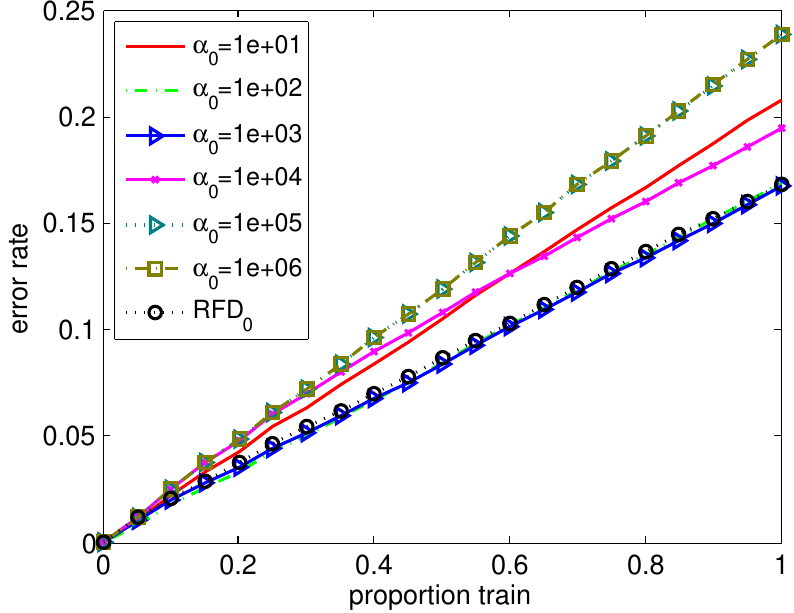} \\
        {\small (a) FD vs RFD$_0$, $m=5$}  & {\small (b) FD vs RFD$_0$}, $m=10$ & {\small (c) FD vs RFD$_0$, $m=20$} \\[0.1cm]
        \includegraphics[scale=0.32]{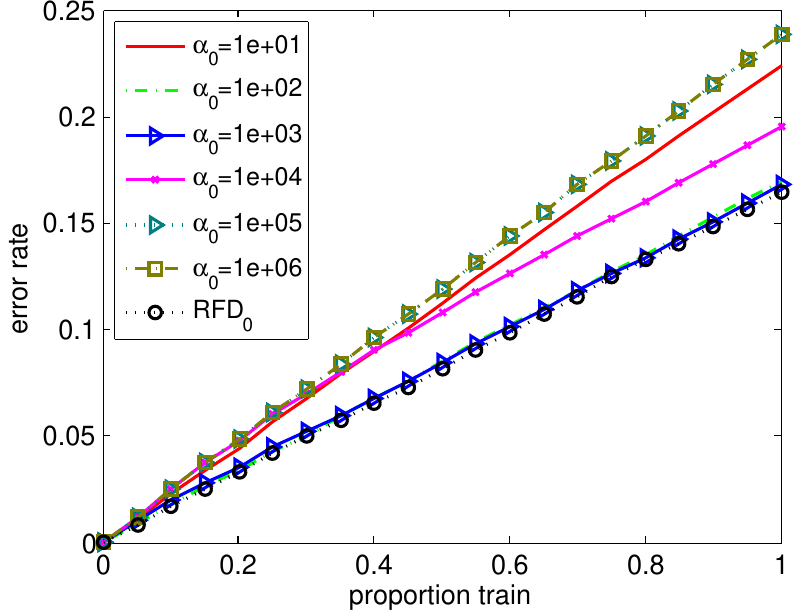} &
        \includegraphics[scale=0.32]{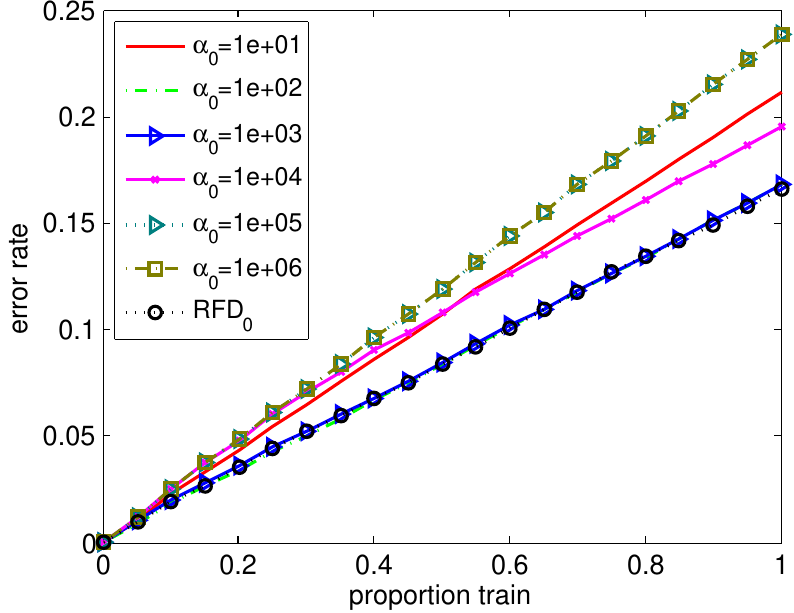} &
        \includegraphics[scale=0.32]{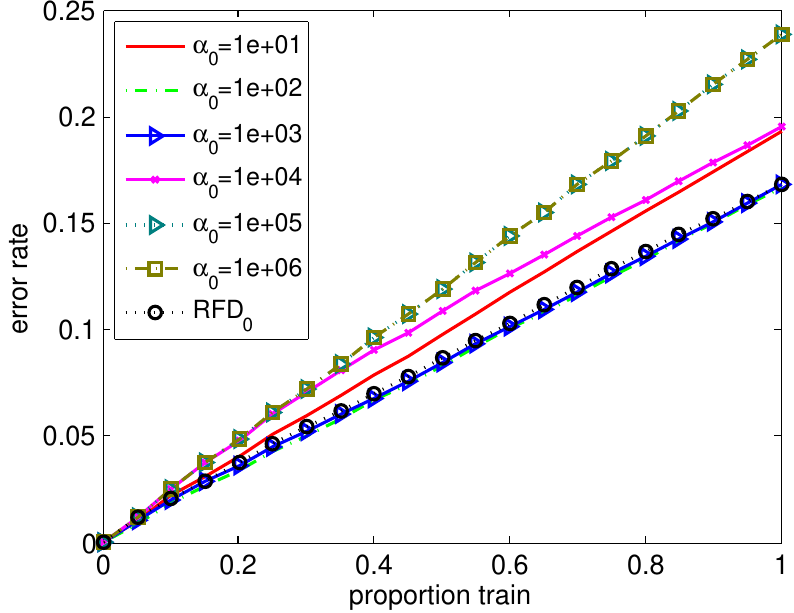} \\
        {\small (a) PFD vs RFD$_0$, $m=5$}  & {\small (b) PFD vs RFD$_0$}, $m=10$ & {\small (c) PFD vs RFD$_0$, $m=20$} \\[0.1cm]
        \includegraphics[scale=0.32]{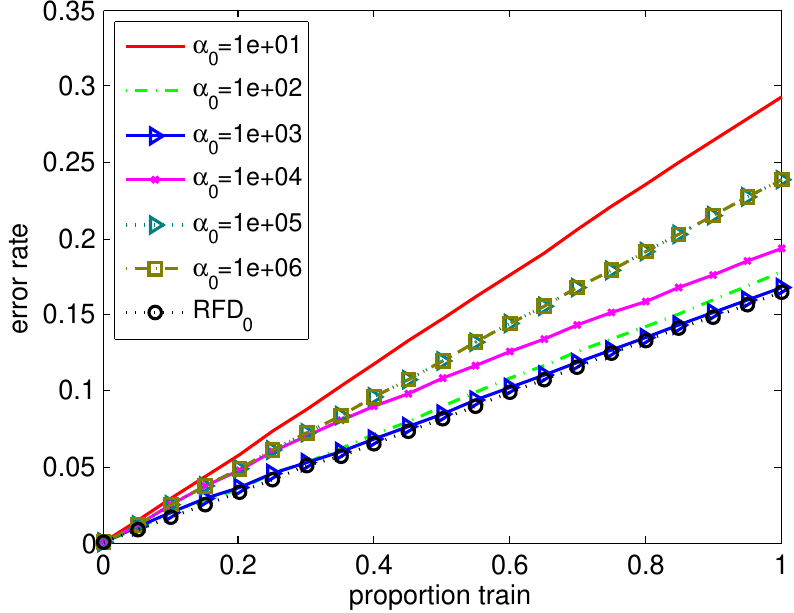} &
        \includegraphics[scale=0.32]{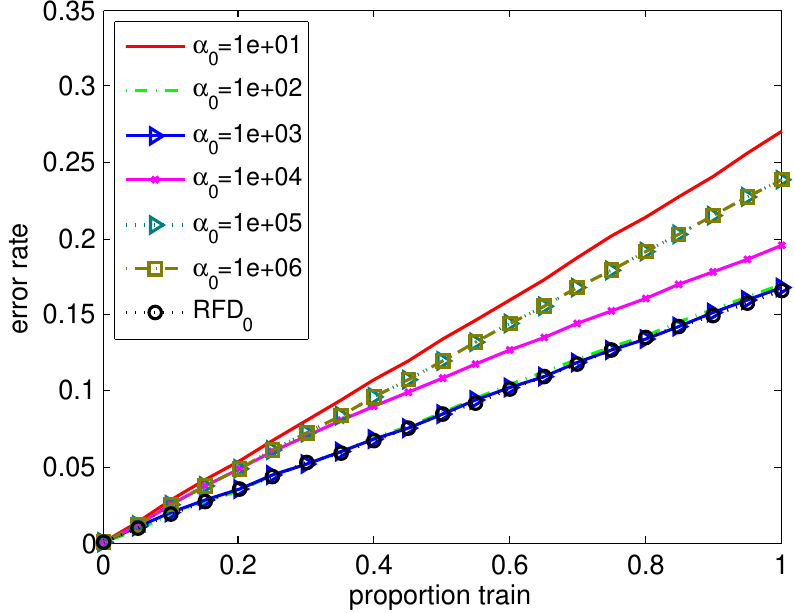} &
        \includegraphics[scale=0.32]{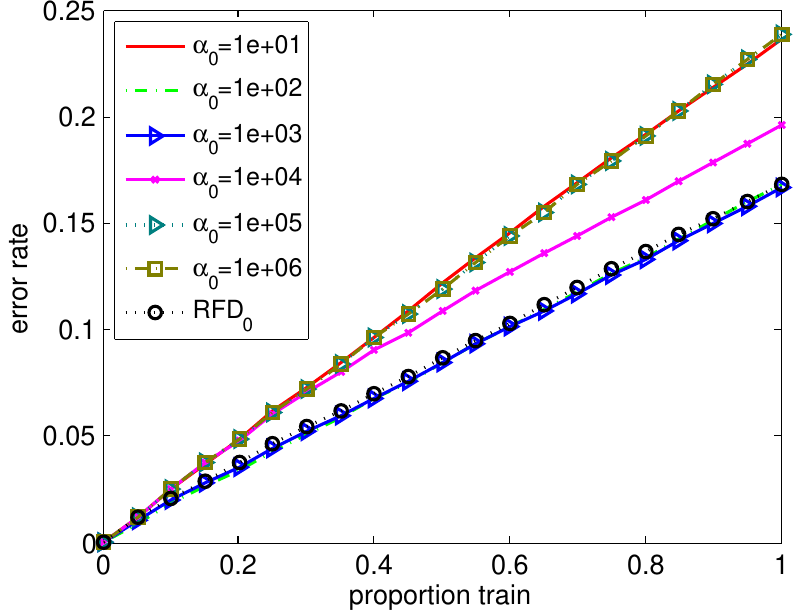} \\
        {\small (d) RP vs RFD$_0$, $m=5$}  & {\small (e) RP vs RFD$_0$}, $m=10$ & {\small (f) RP vs RFD$_0$, $m=20$} \\[0.1cm]
        \includegraphics[scale=0.34]{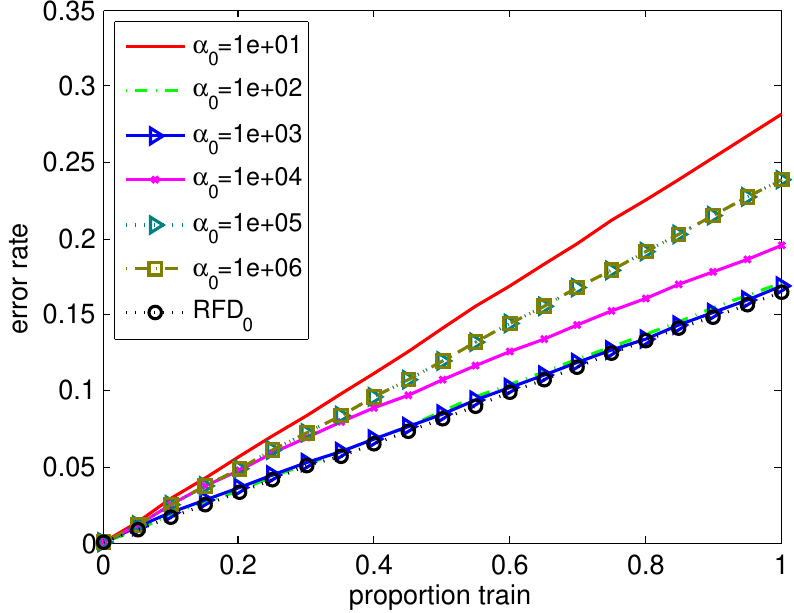} &
        \includegraphics[scale=0.34]{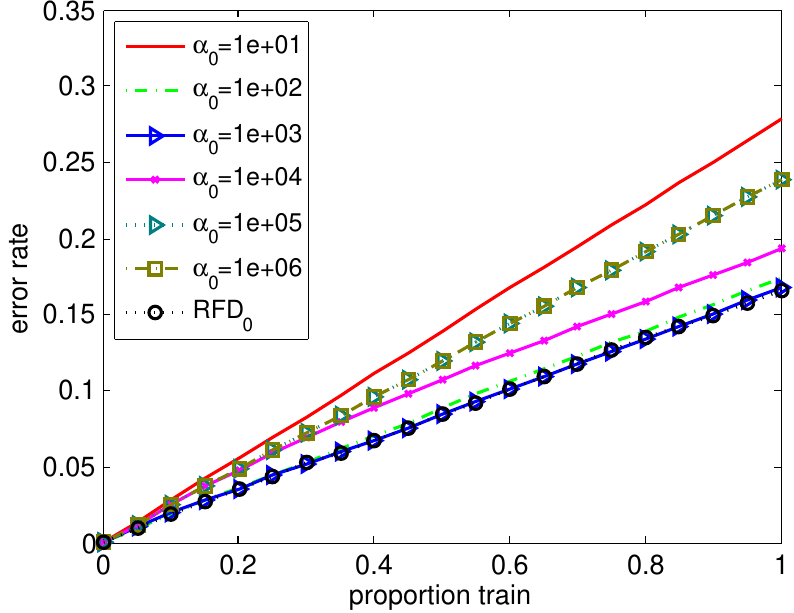} &
        \includegraphics[scale=0.34]{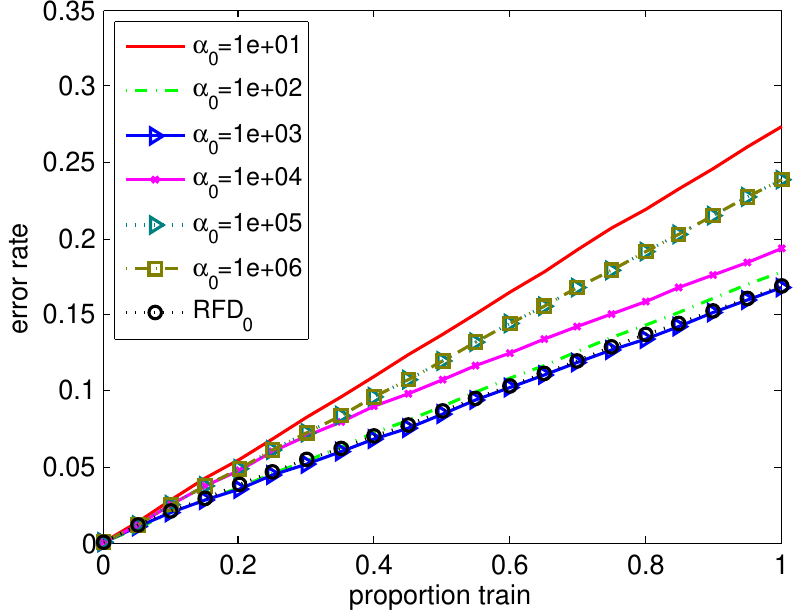} \\
        {\small (g) Oja vs RFD$_0$, $m=5$}  & {\small (h) Oja vs RFD$_0$}, $m=10$ & {\small (i) Oja vs RFD$_0$, $m=20$} \\[0.1cm]
        \includegraphics[scale=0.34]{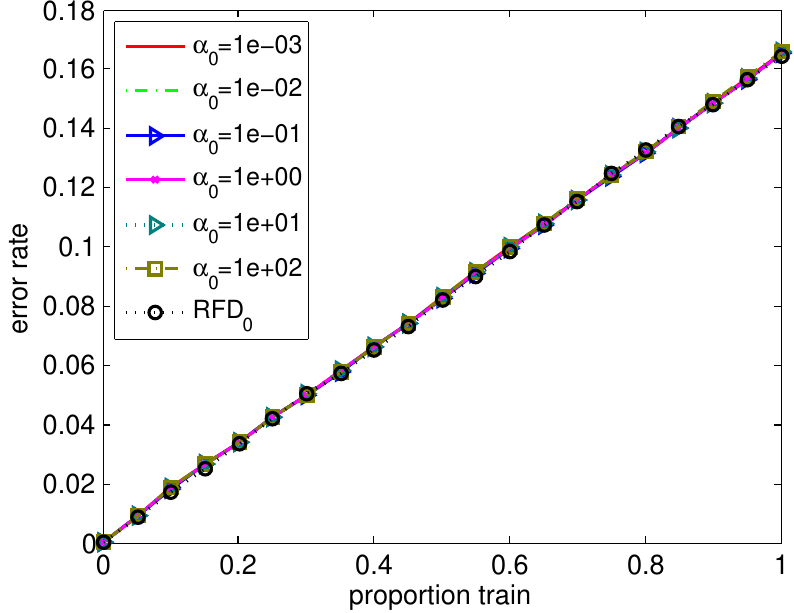} &
        \includegraphics[scale=0.34]{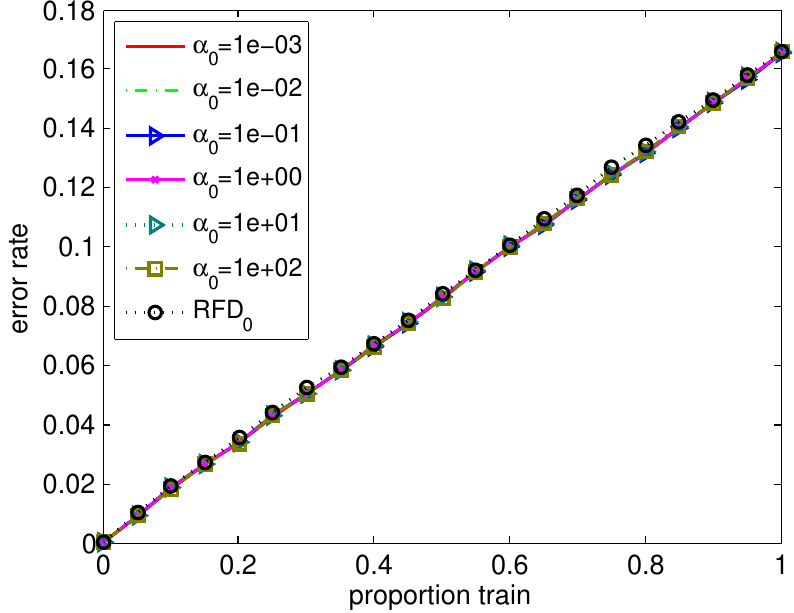} &
        \includegraphics[scale=0.34]{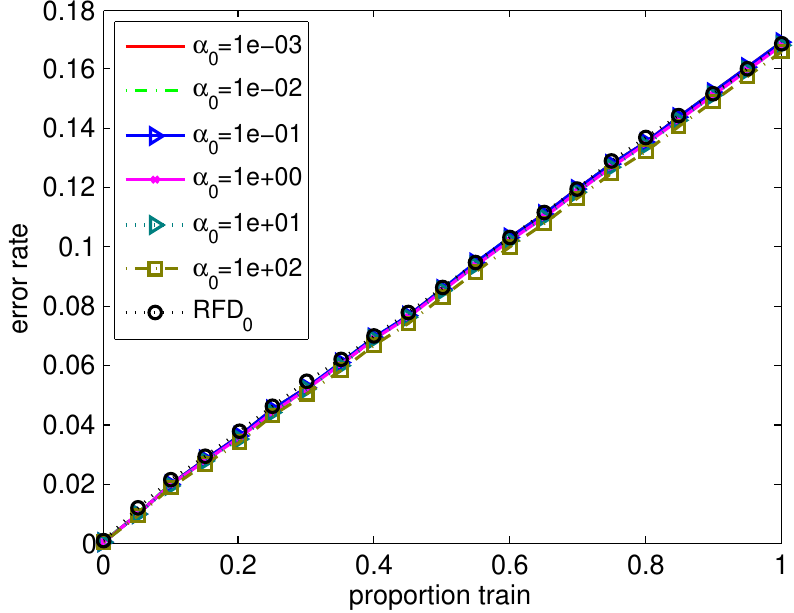} \\
        {\small (j) RFD vs RFD$_0$, $m=5$}  & {\small (k) RFD vs RFD$_0$}, $m=10$ & {\small (l) RFD vs RFD$_0$, $m=20$}
    \end{tabular}
    \caption{Comparison of the online error rate on ``a9a'' }
    \label{figure:train_a9a}
\end{figure}

\begin{figure}[H]
\centering
    \begin{tabular}{ccc}
        \includegraphics[scale=0.34]{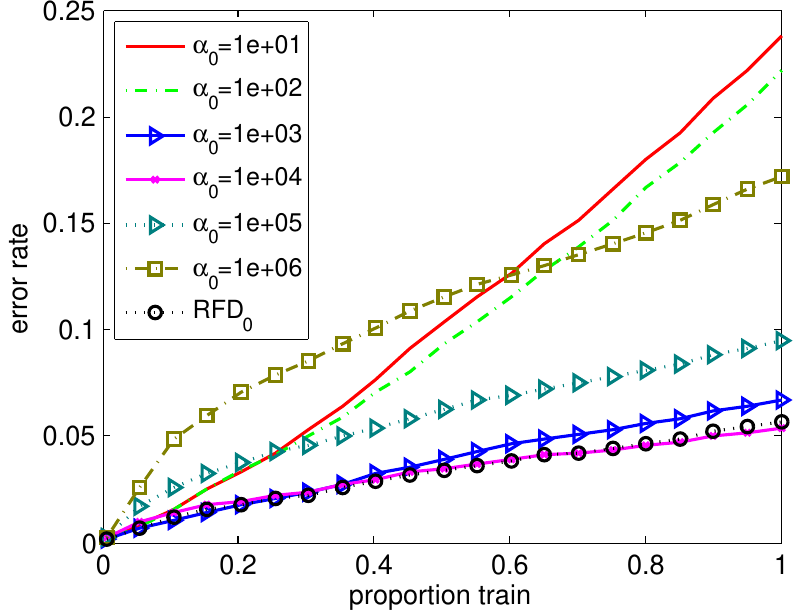} &
        \includegraphics[scale=0.34]{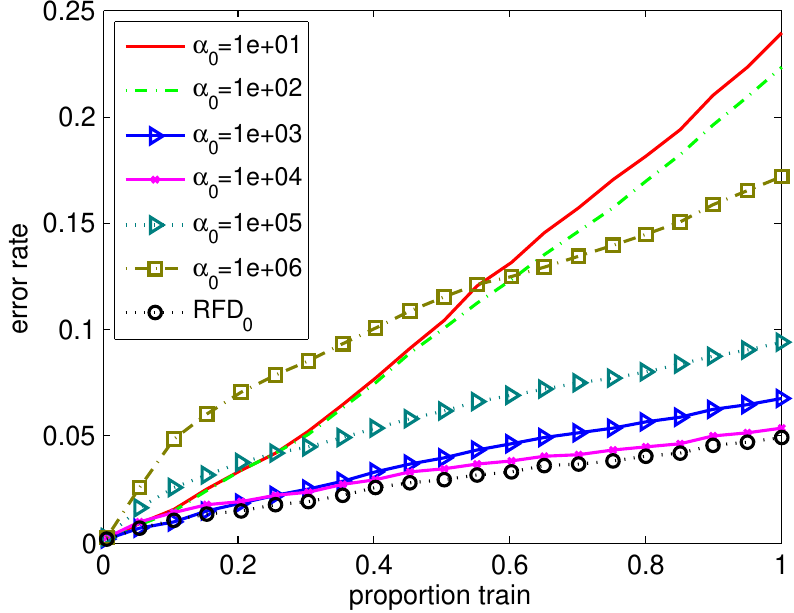} &
        \includegraphics[scale=0.34]{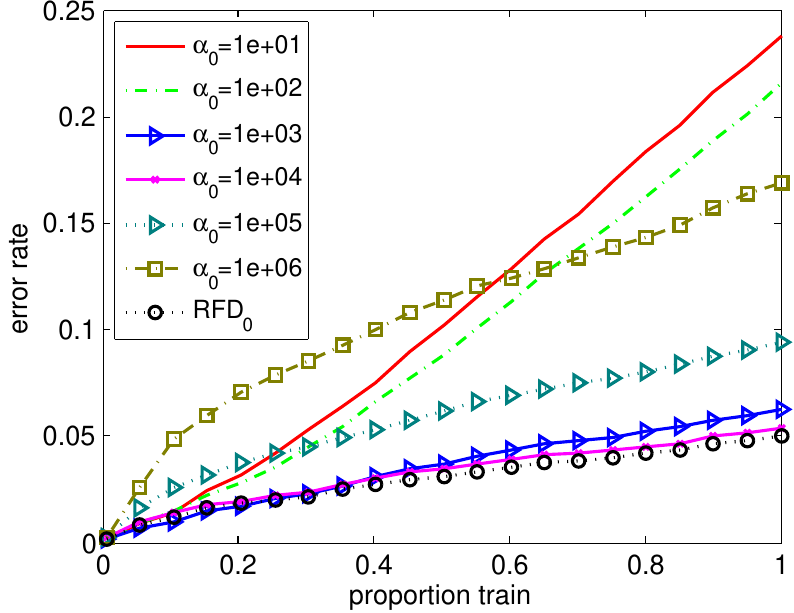} \\
        {\small (a) FD vs RFD$_0$, $m=5$}  & {\small (b) FD vs RFD$_0$}, $m=10$ & {\small (c) FD vs RFD$_0$, $m=20$} \\[0.1cm]
        \includegraphics[scale=0.34]{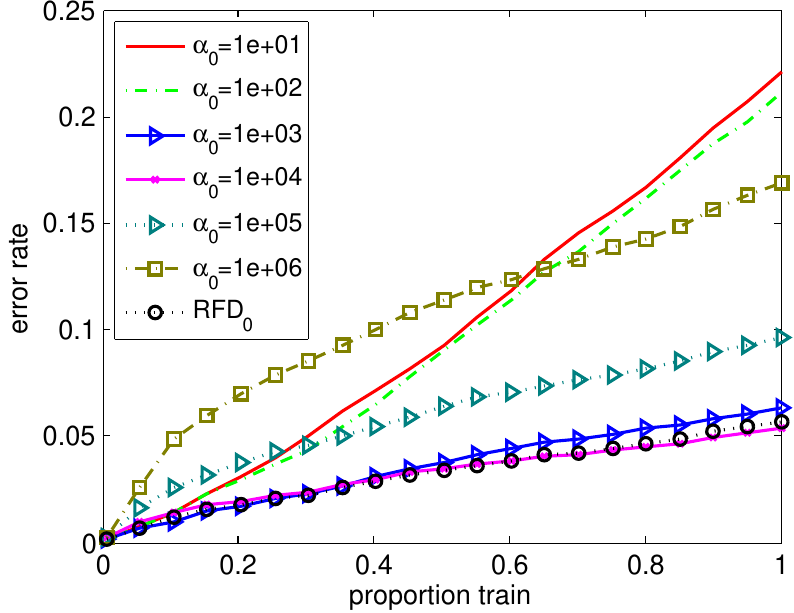} &
        \includegraphics[scale=0.34]{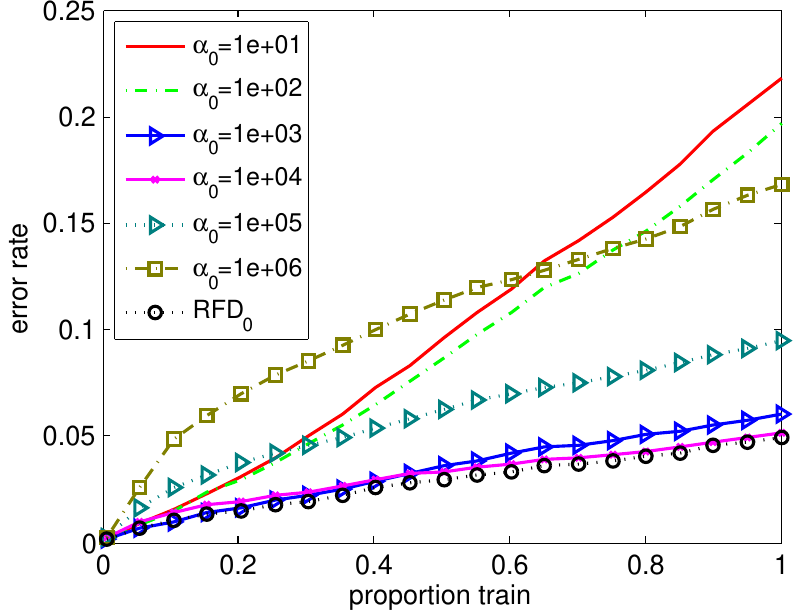} &
        \includegraphics[scale=0.34]{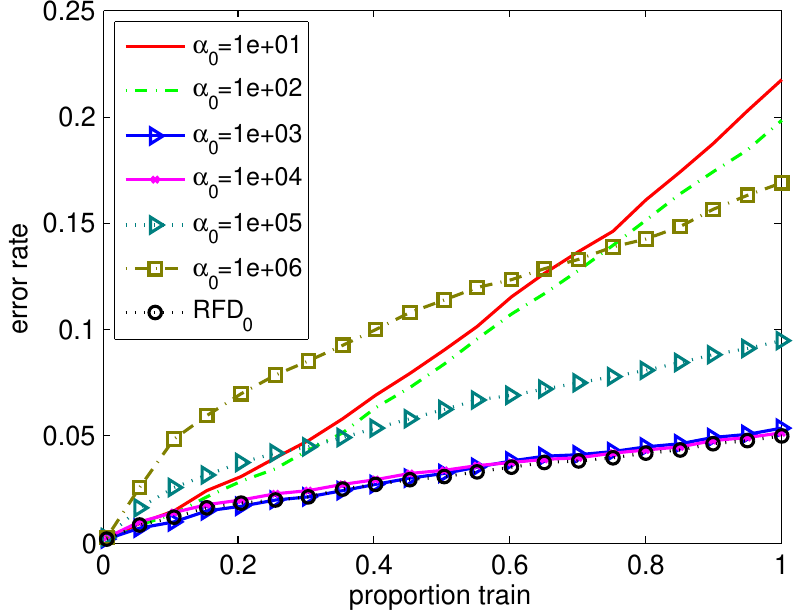} \\
        {\small (a) PFD vs RFD$_0$, $m=5$}  & {\small (b) PFD vs RFD$_0$}, $m=10$ & {\small (c) PFD vs RFD$_0$, $m=20$} \\[0.1cm]
        \includegraphics[scale=0.34]{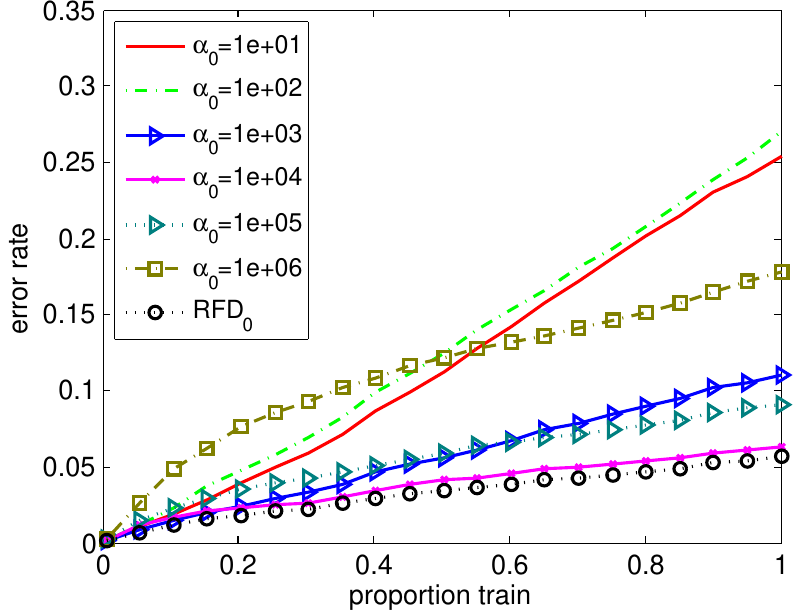} &
        \includegraphics[scale=0.34]{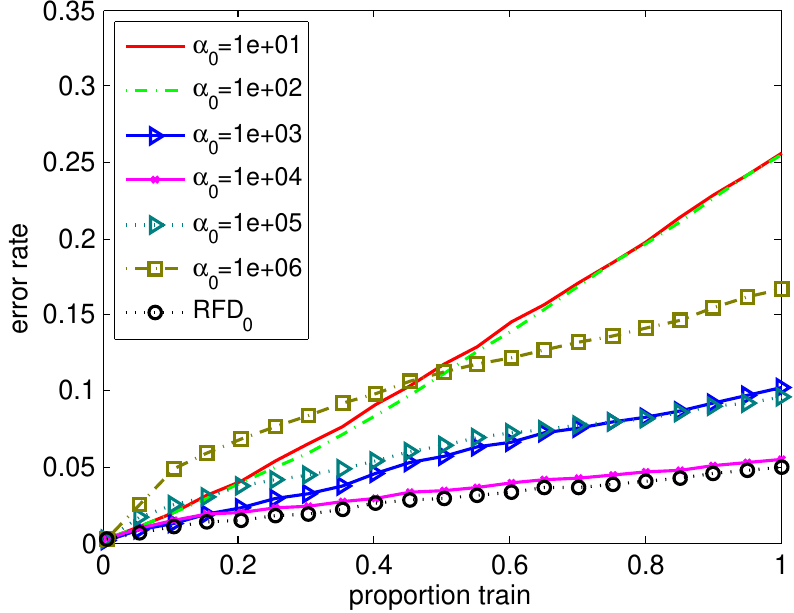} &
        \includegraphics[scale=0.34]{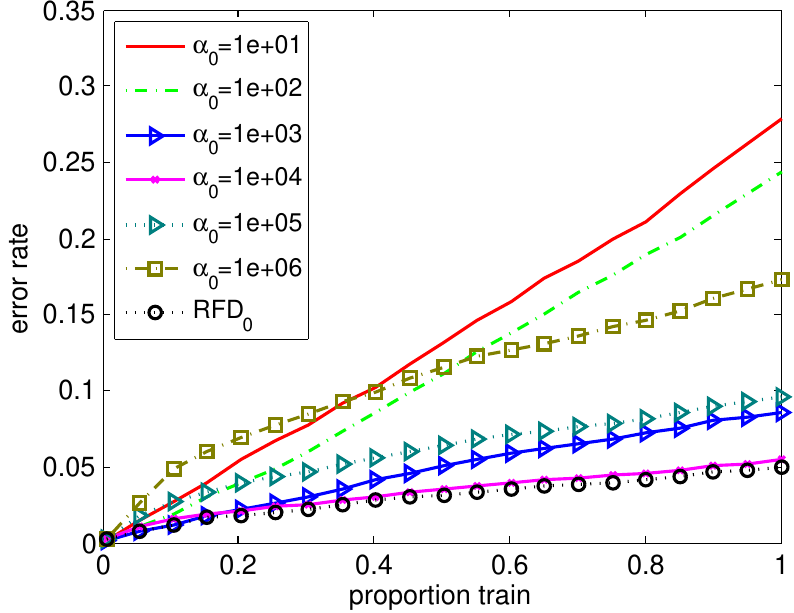} \\
        {\small (d) RP vs RFD$_0$, $m=5$}  & {\small (e) RP vs RFD$_0$}, $m=10$ & {\small (f) RP vs RFD$_0$, $m=20$} \\[0.1cm]
        \includegraphics[scale=0.34]{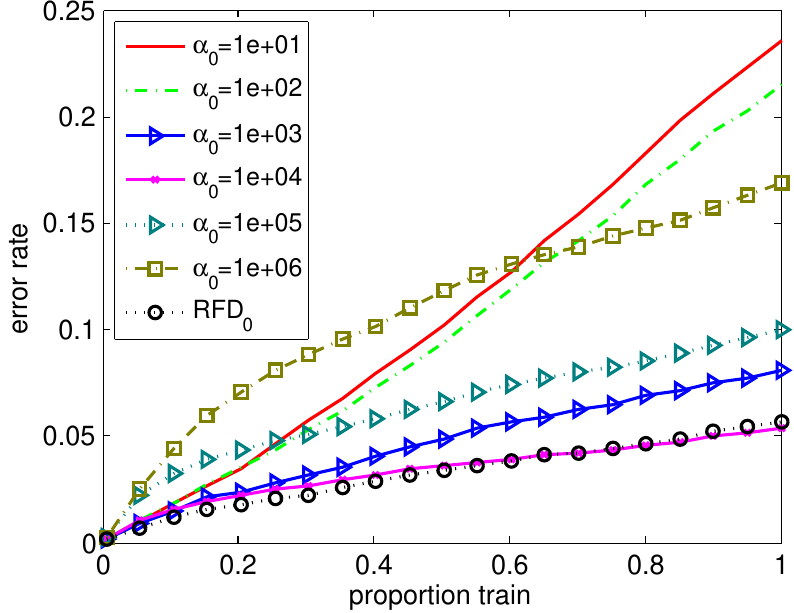} &
        \includegraphics[scale=0.34]{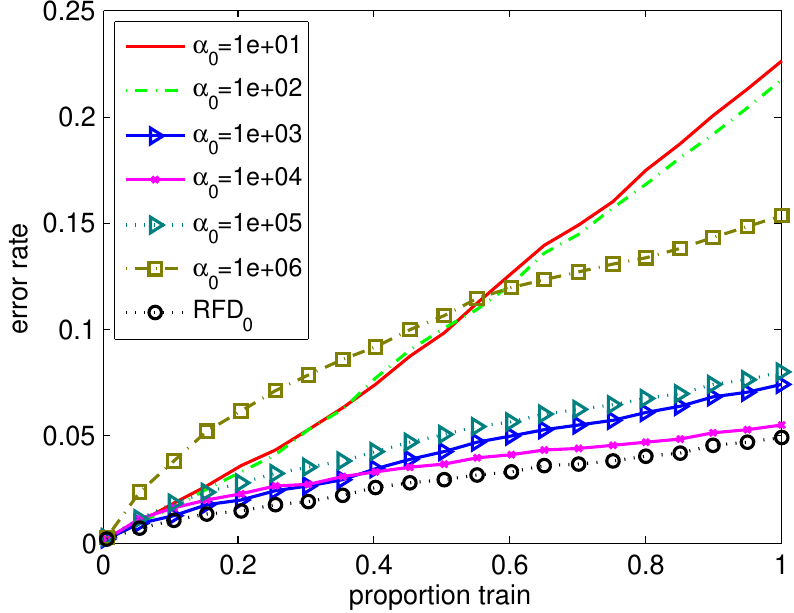} &
        \includegraphics[scale=0.34]{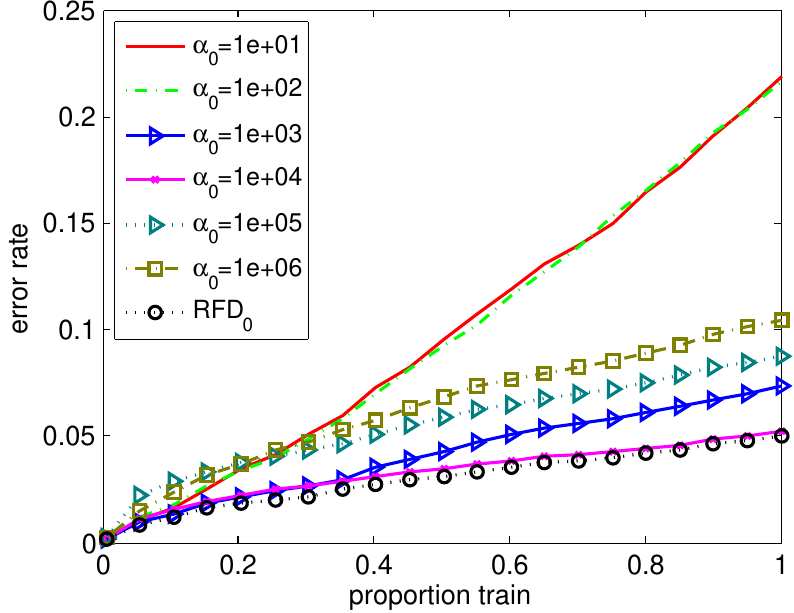} \\
        {\small (g) Oja vs RFD$_0$, $m=5$}  & {\small (h) Oja vs RFD$_0$}, $m=10$ & {\small (i) Oja vs RFD$_0$, $m=20$} \\[0.1cm]
        \includegraphics[scale=0.34]{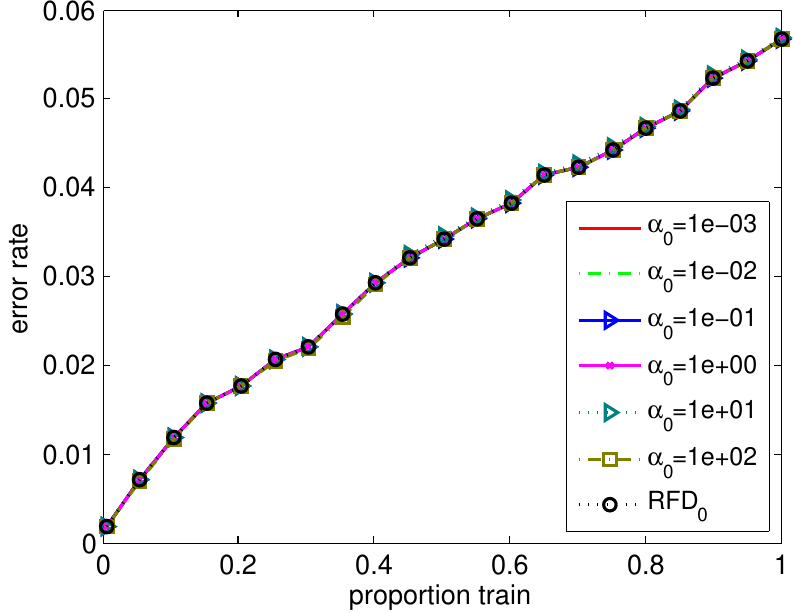} &
        \includegraphics[scale=0.34]{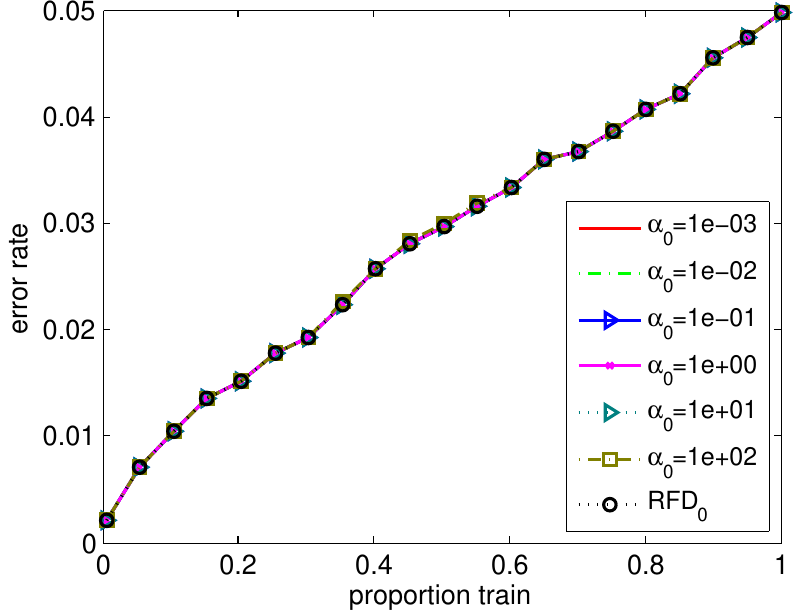} &
        \includegraphics[scale=0.34]{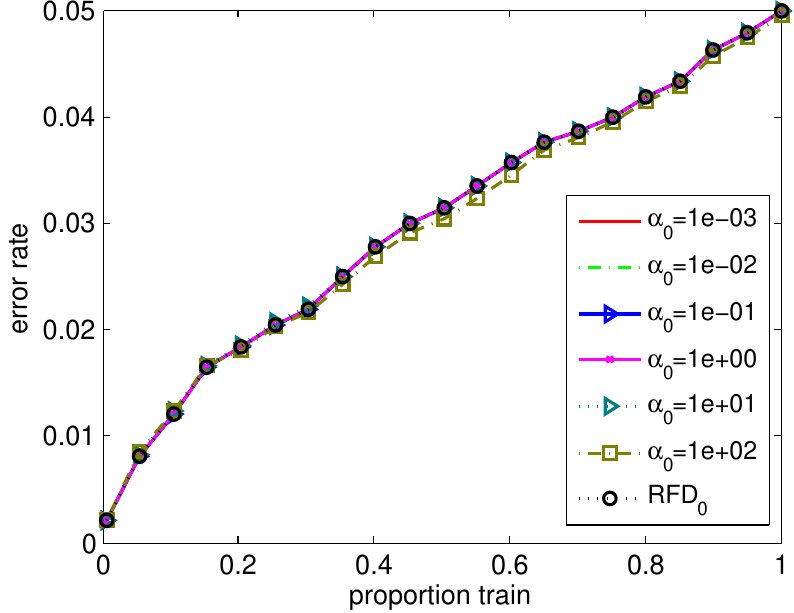} \\
        {\small (j) RFD vs RFD$_0$, $m=5$}  & {\small (k) RFD vs RFD$_0$}, $m=10$ & {\small (l) RFD vs RFD$_0$, $m=20$}
    \end{tabular}
    \caption{Comparison of the online error rate on ``gisette'' }
    \label{figure:train_gisette}
\end{figure}

\begin{figure}[H]
\centering
    \begin{tabular}{ccc}
        \includegraphics[scale=0.34]{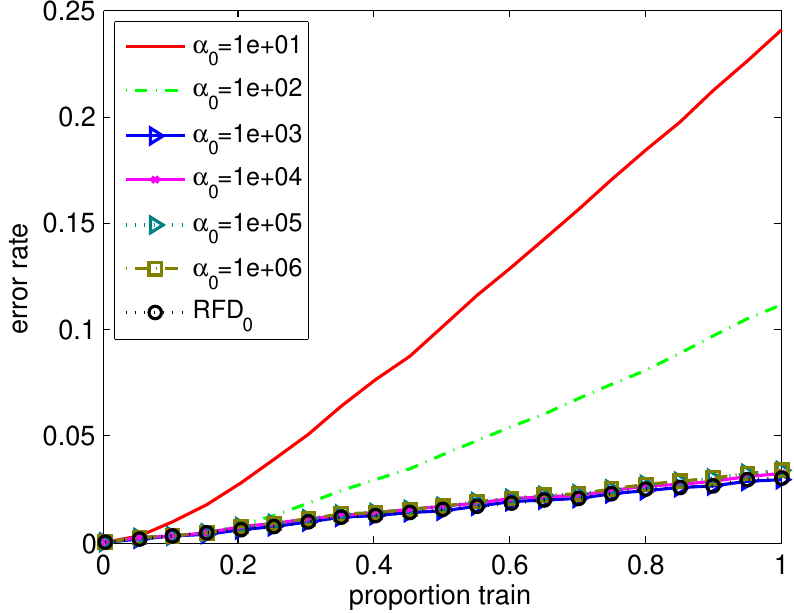} &
        \includegraphics[scale=0.34]{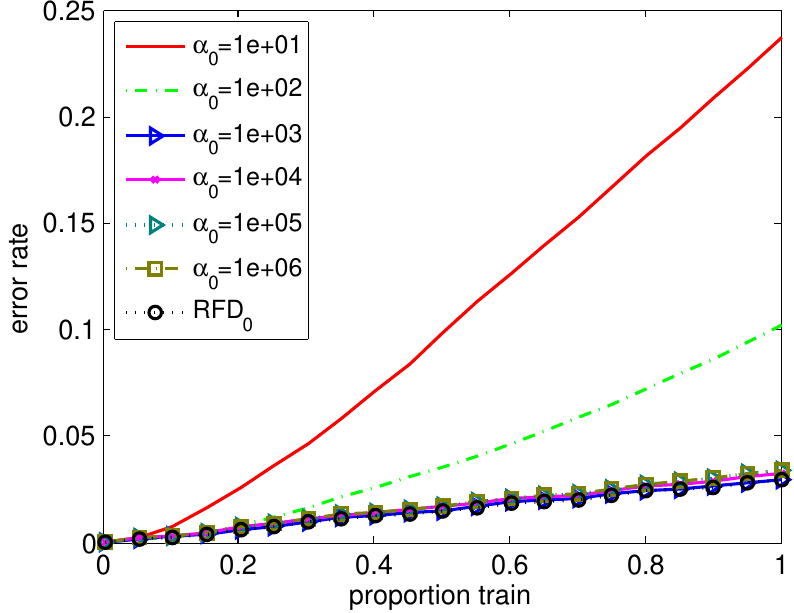} &
        \includegraphics[scale=0.34]{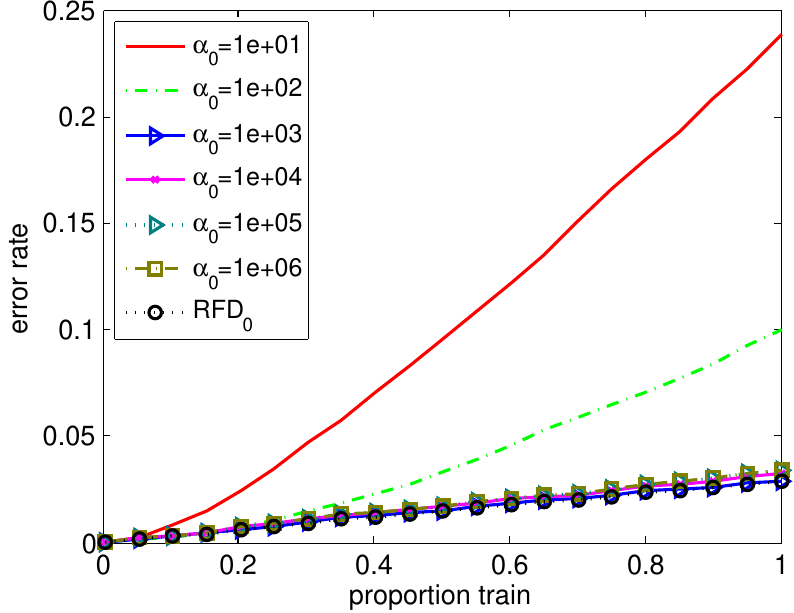} \\
        {\small (a) FD vs RFD$_0$, $m=5$}  & {\small (b) FD vs RFD$_0$}, $m=10$ & {\small (c) FD vs RFD$_0$, $m=20$} \\[0.1cm]
        \includegraphics[scale=0.34]{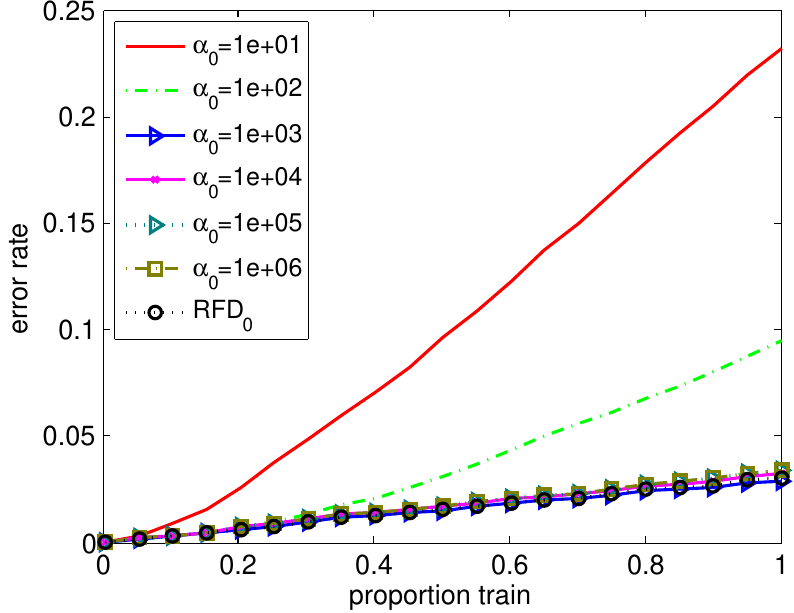} &
        \includegraphics[scale=0.34]{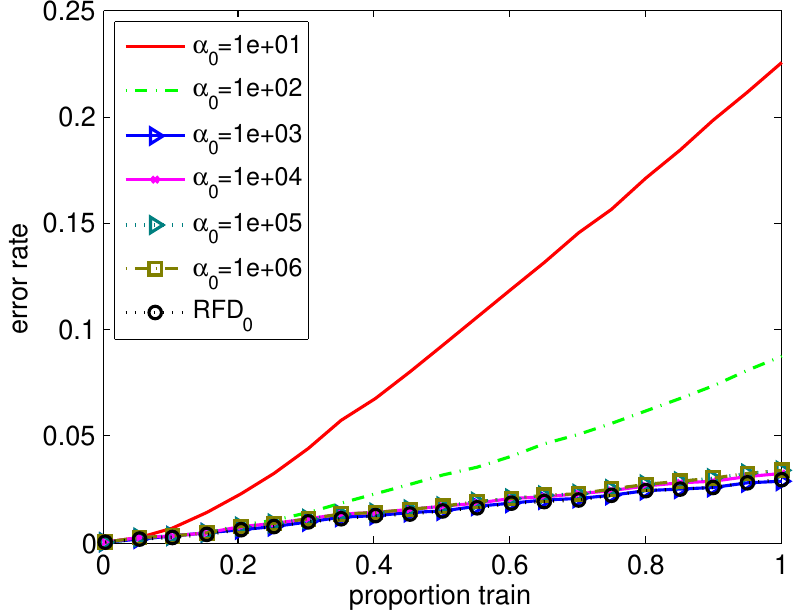} &
        \includegraphics[scale=0.34]{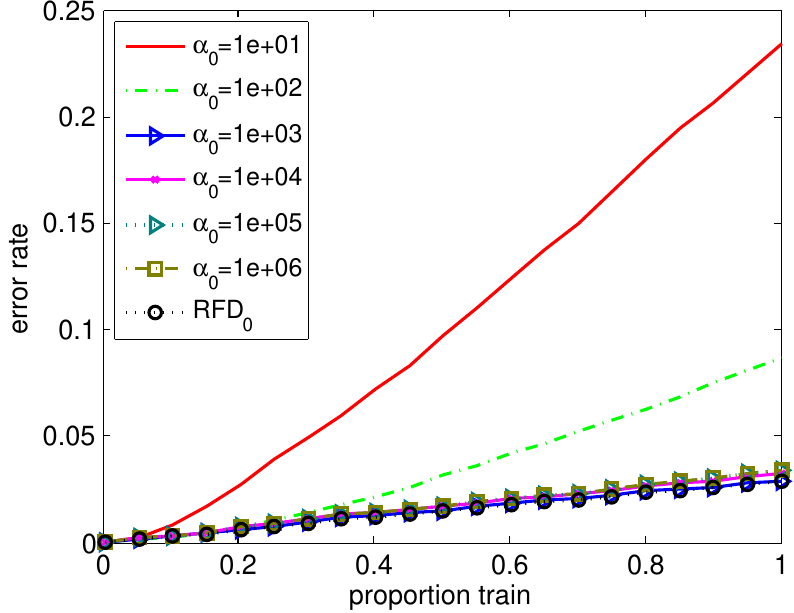} \\
        {\small (a) PFD vs RFD$_0$, $m=5$}  & {\small (b) PFD vs RFD$_0$}, $m=10$ & {\small (c) PFD vs RFD$_0$, $m=20$} \\[0.1cm]
        \includegraphics[scale=0.34]{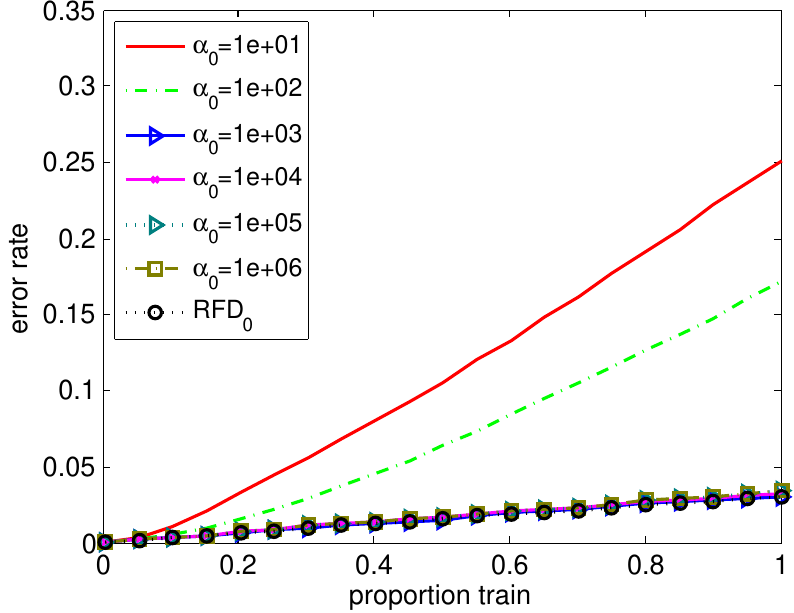} &
        \includegraphics[scale=0.34]{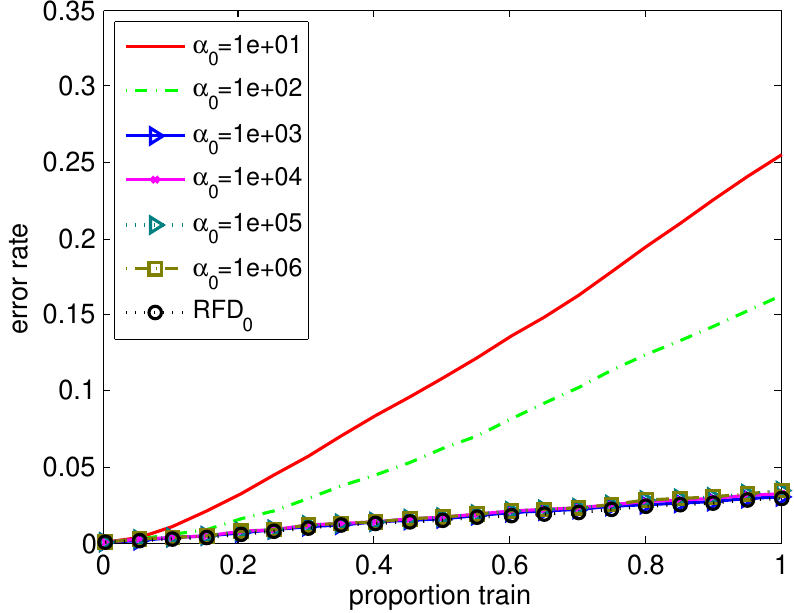} &
        \includegraphics[scale=0.34]{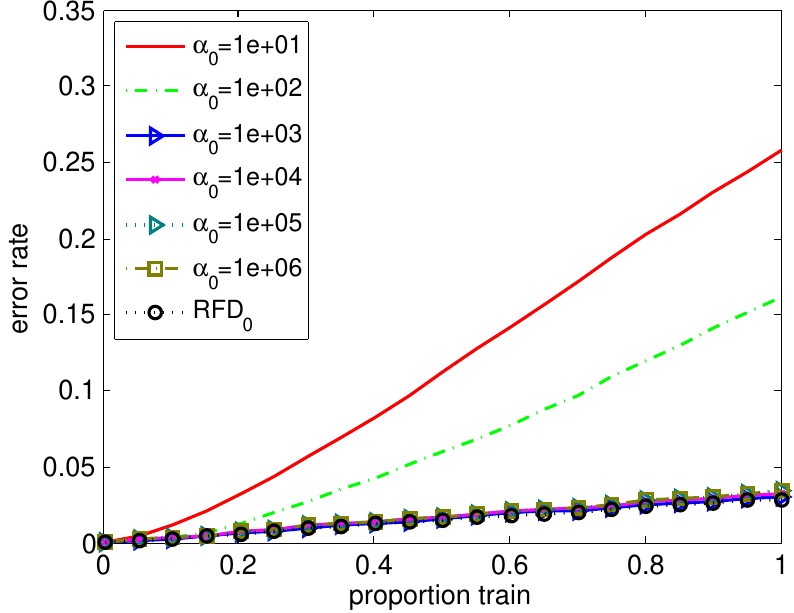} \\
        {\small (d) RP vs RFD$_0$, $m=5$}  & {\small (e) RP vs RFD$_0$}, $m=10$ & {\small (f) RP vs RFD$_0$, $m=20$} \\[0.1cm]
        \includegraphics[scale=0.34]{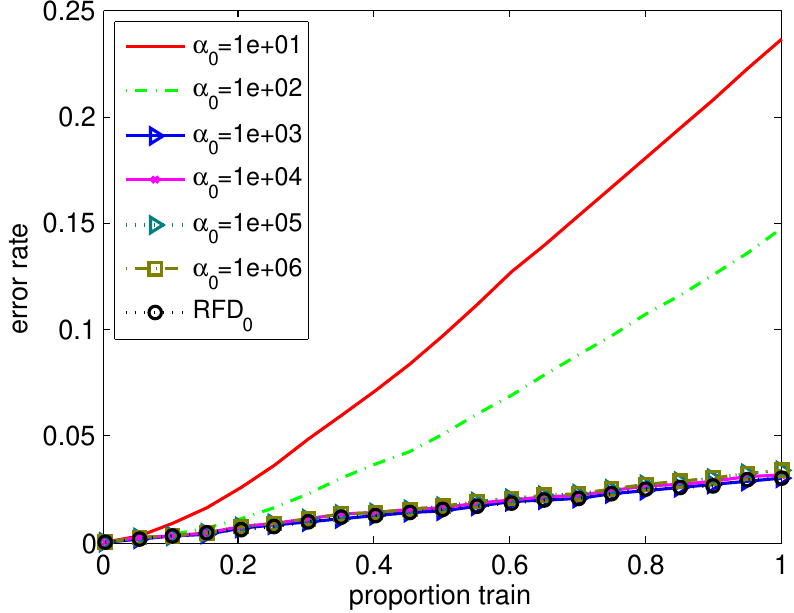} &
        \includegraphics[scale=0.34]{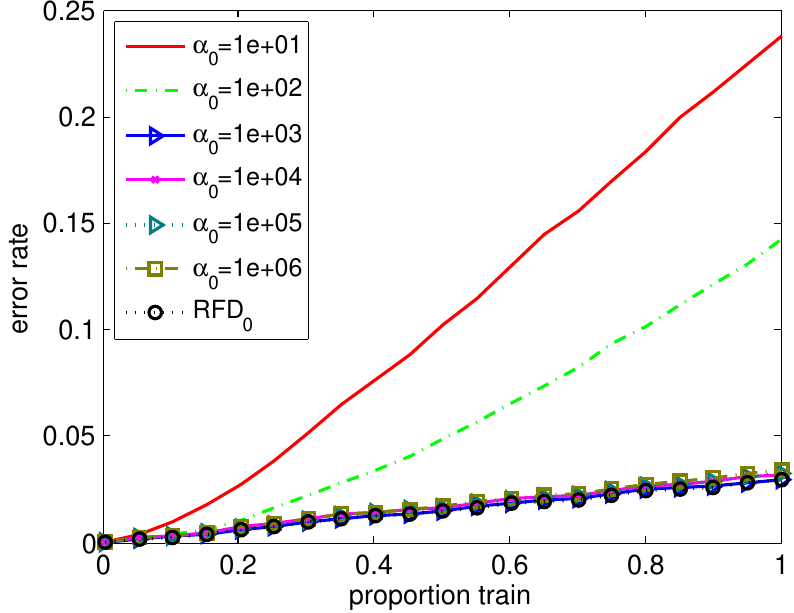} &
        \includegraphics[scale=0.34]{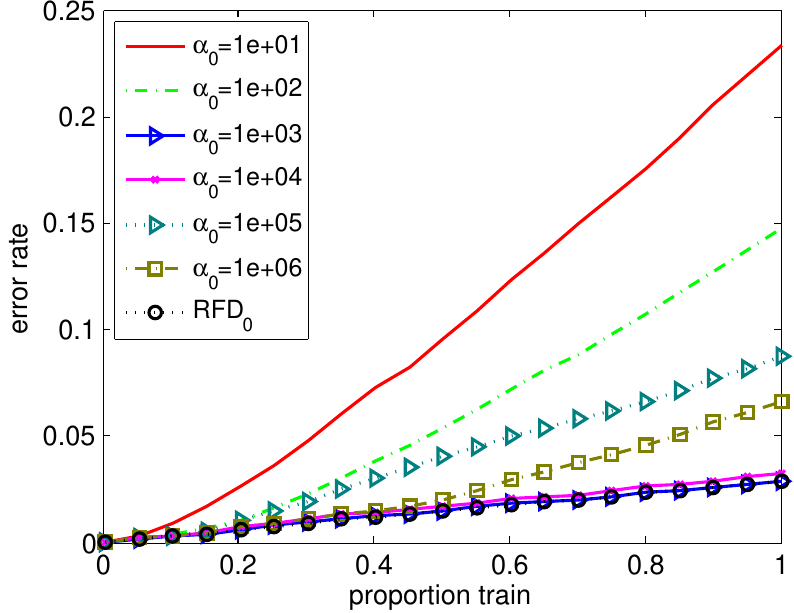} \\
        {\small (g) Oja vs RFD$_0$, $m=5$}  & {\small (h) Oja vs RFD$_0$}, $m=10$ & {\small (i) Oja vs RFD$_0$, $m=20$} \\[0.1cm]
        \includegraphics[scale=0.34]{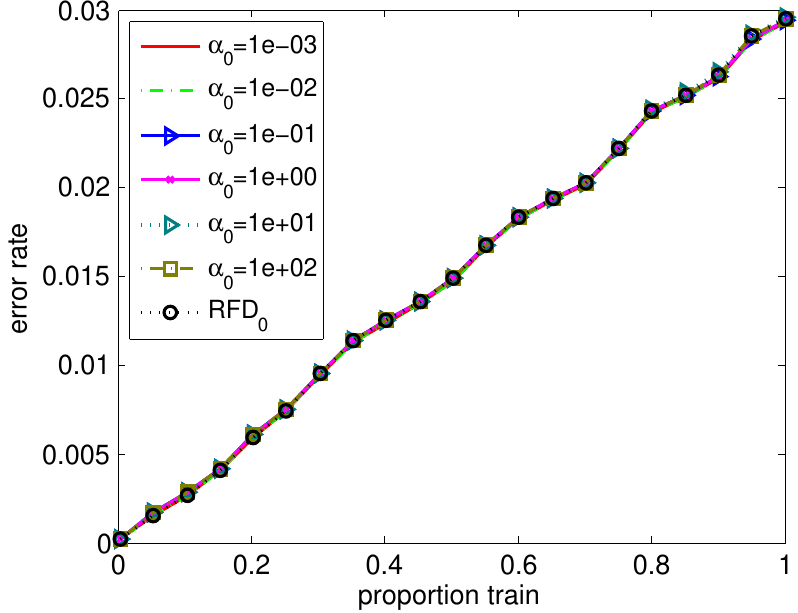} &
        \includegraphics[scale=0.34]{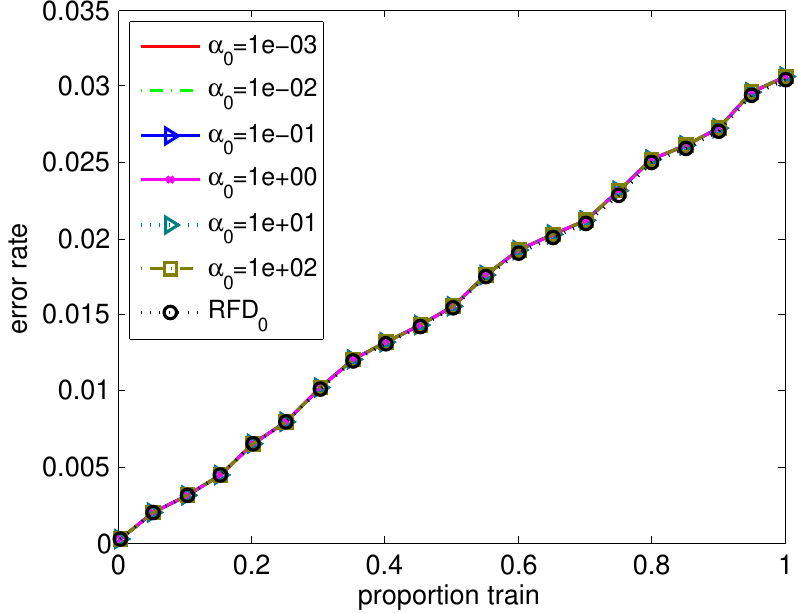} &
        \includegraphics[scale=0.34]{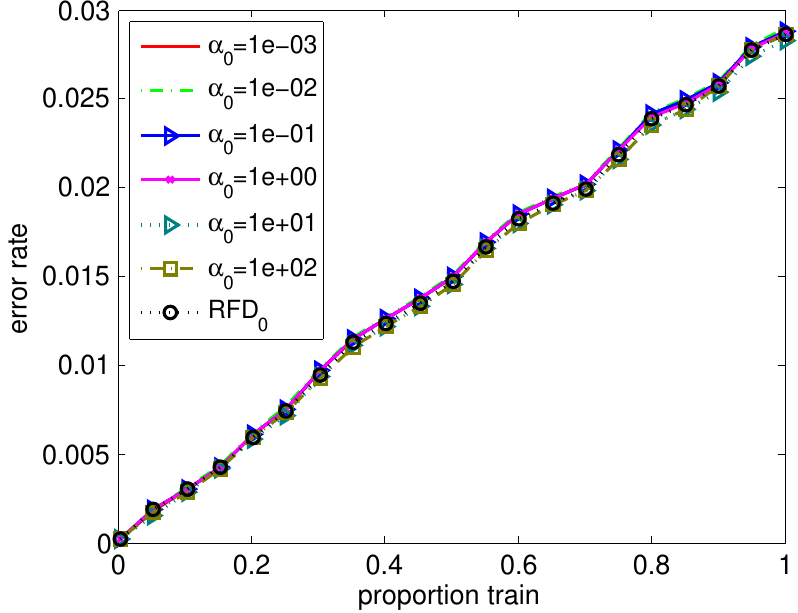} \\
        {\small (j) RFD vs RFD$_0$, $m=5$}  & {\small (k) RFD vs RFD$_0$}, $m=10$ & {\small (l) RFD vs RFD$_0$, $m=20$}
    \end{tabular}
    \caption{Comparison of the online error rate on  ``sido0'' }
    \label{figure:train_sido0}
\end{figure}

\begin{figure}[H]
\centering
    \begin{tabular}{ccc}
        \includegraphics[scale=0.34]{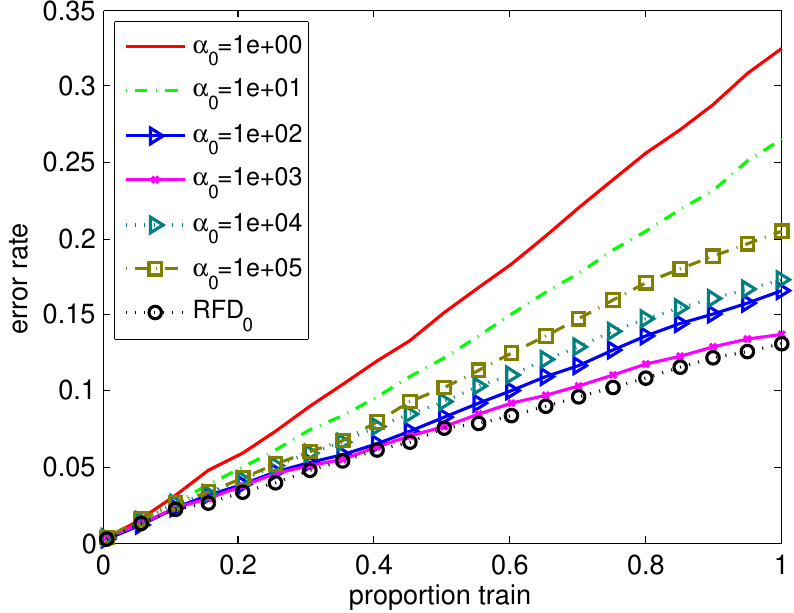} &
        \includegraphics[scale=0.34]{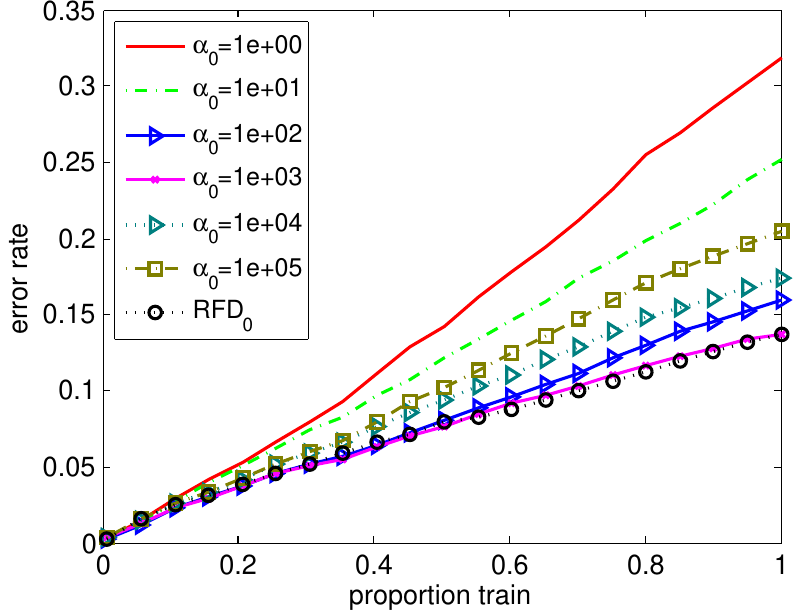} &
        \includegraphics[scale=0.34]{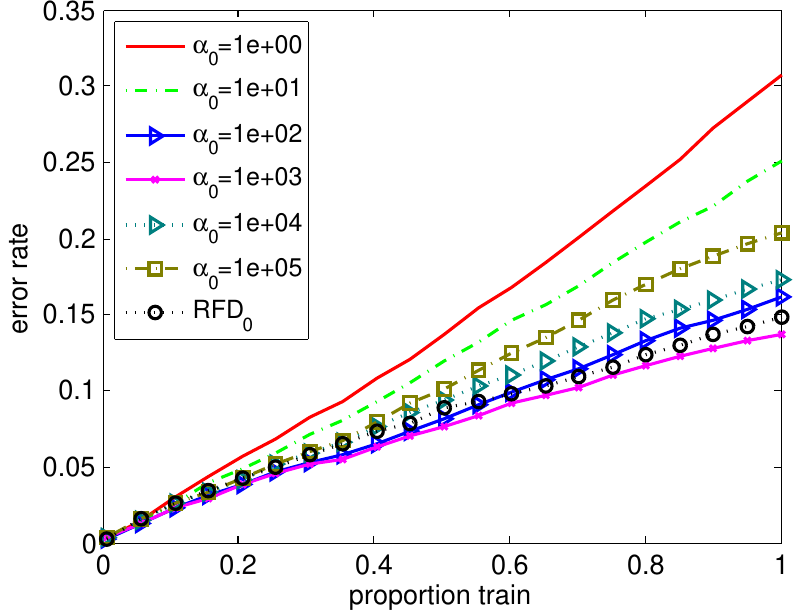} \\
        {\small (a) FD vs RFD$_0$, $m=20$}  & {\small (b) FD vs RFD$_0$}, $m=30$ & {\small (c) FD vs RFD$_0$, $m=50$} \\[0.1cm]
        \includegraphics[scale=0.34]{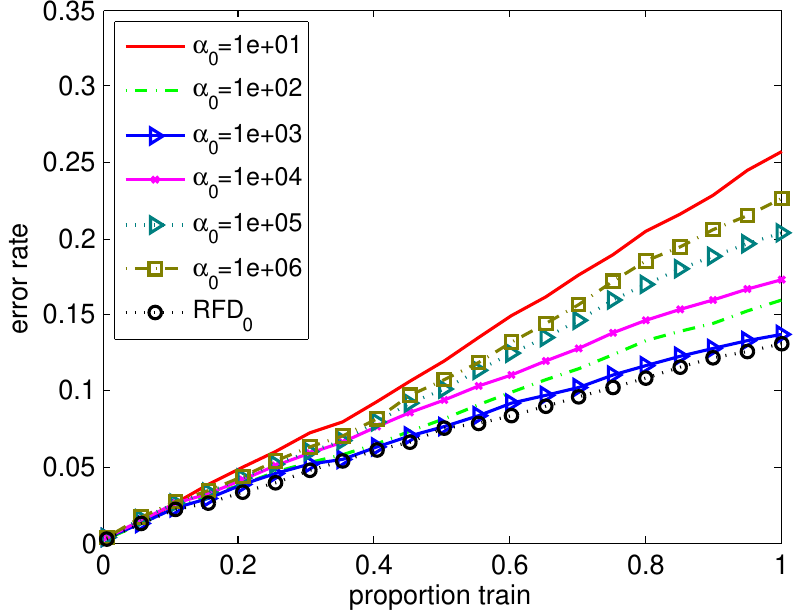} &
        \includegraphics[scale=0.34]{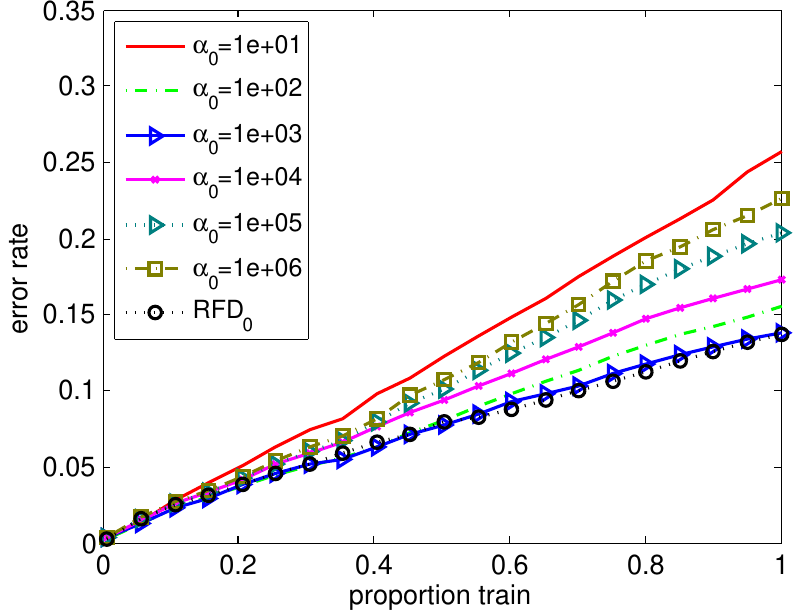} &
        \includegraphics[scale=0.34]{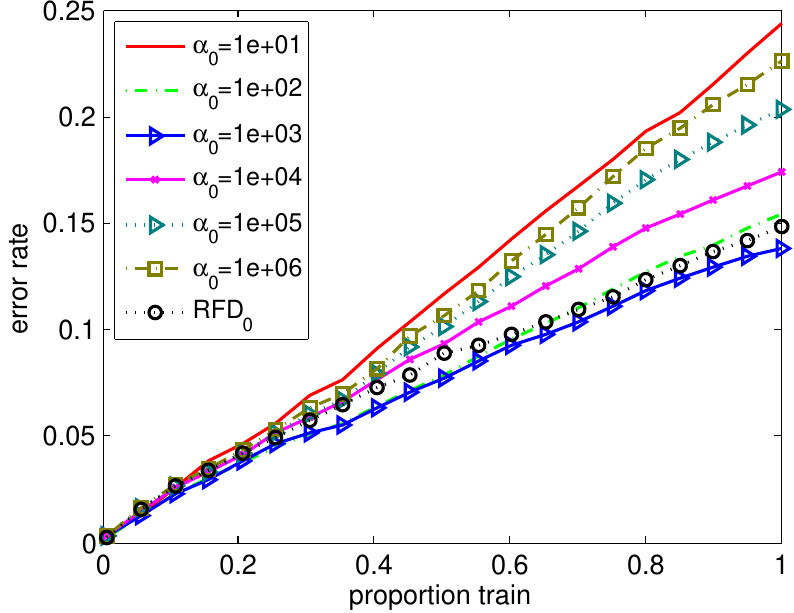} \\
        {\small (a) PFD vs RFD$_0$, $m=20$}  & {\small (b) PFD vs RFD$_0$}, $m=30$ & {\small (c) PFD vs RFD$_0$, $m=50$} \\[0.1cm]
        \includegraphics[scale=0.34]{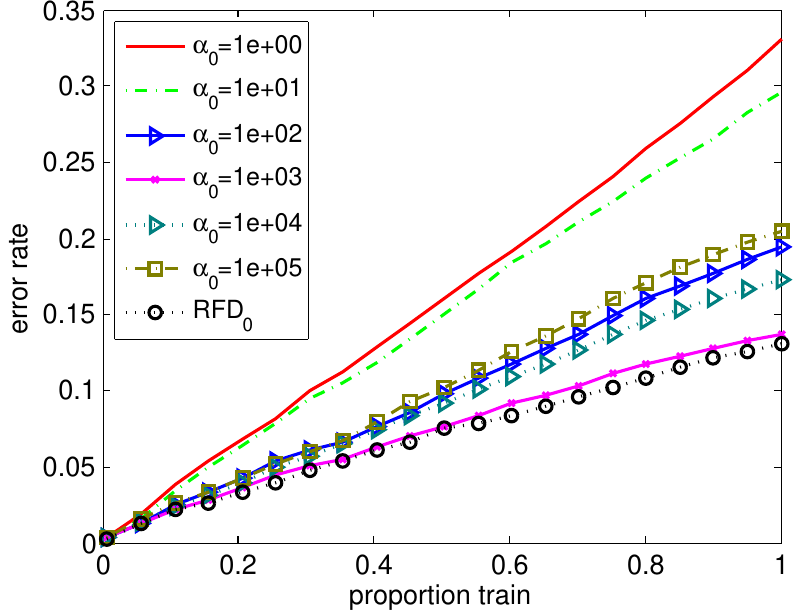} &
        \includegraphics[scale=0.34]{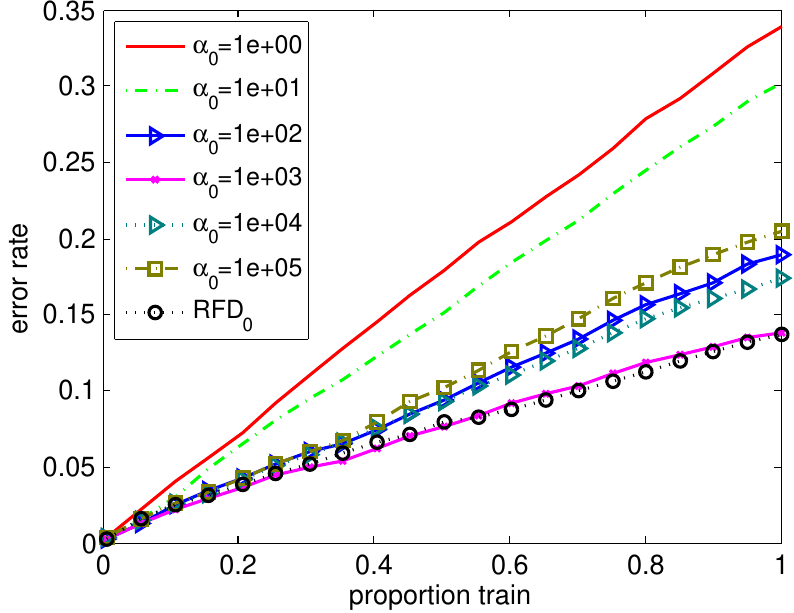} &
        \includegraphics[scale=0.34]{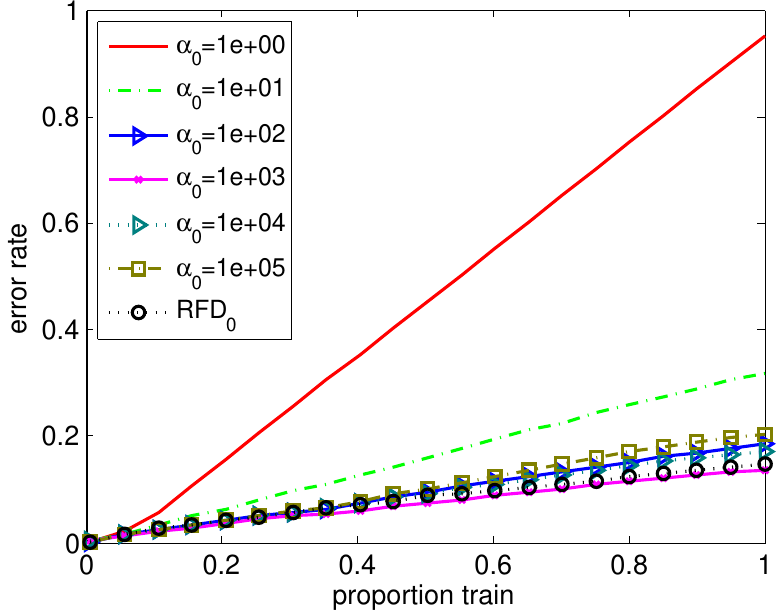} \\
        {\small (d) RP vs RFD$_0$, $m=20$}  & {\small (e) RP vs RFD$_0$}, $m=30$ & {\small (f) RP vs RFD$_0$, $m=50$} \\[0.1cm]
        \includegraphics[scale=0.34]{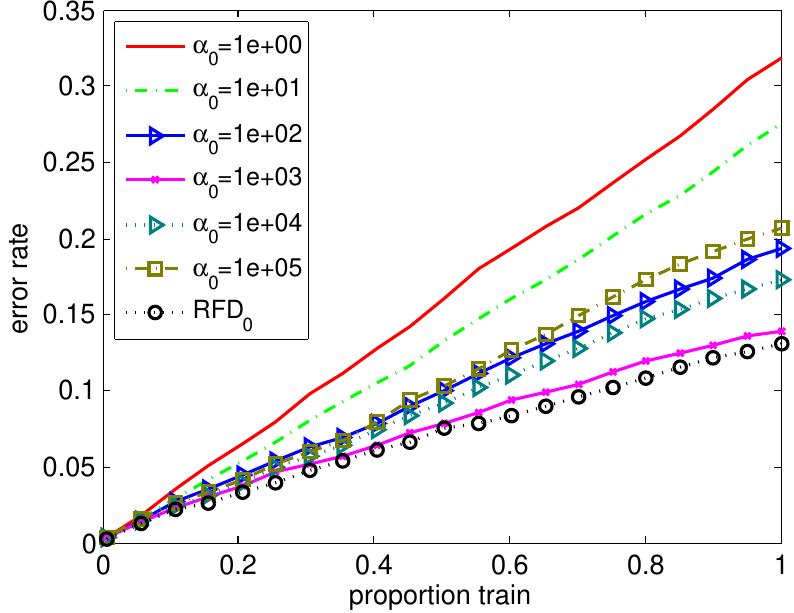} &
        \includegraphics[scale=0.34]{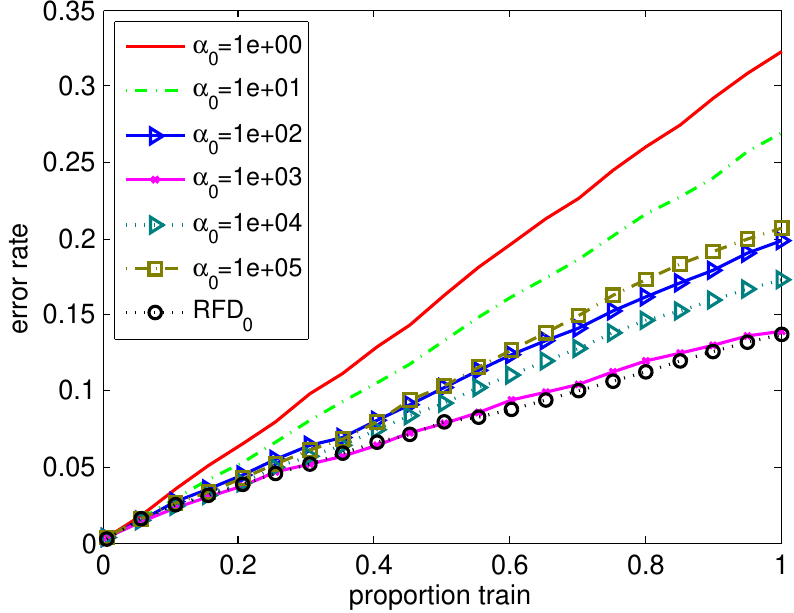} &
        \includegraphics[scale=0.34]{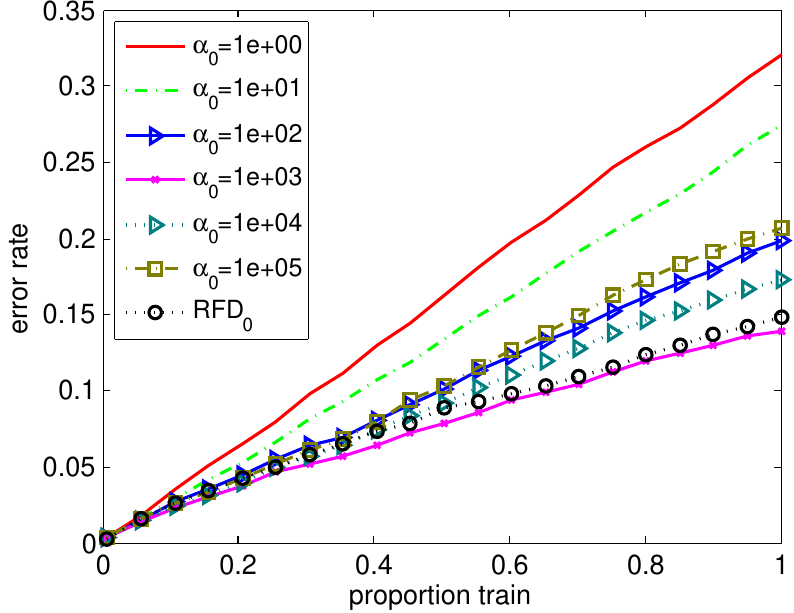} \\
        {\small (g) Oja vs RFD$_0$, $m=20$}  & {\small (h) Oja vs RFD$_0$}, $m=30$ & {\small (i) Oja vs RFD$_0$, $m=50$} \\[0.1cm]
        \includegraphics[scale=0.34]{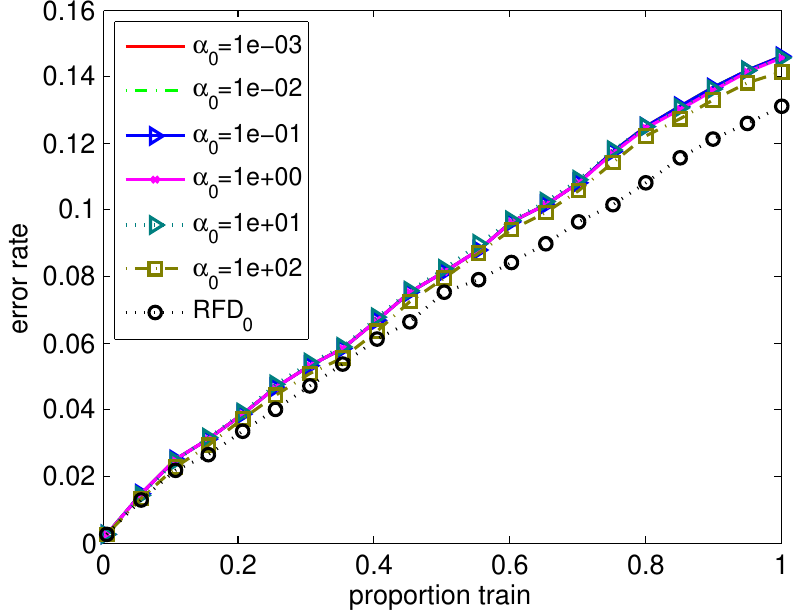} &
        \includegraphics[scale=0.34]{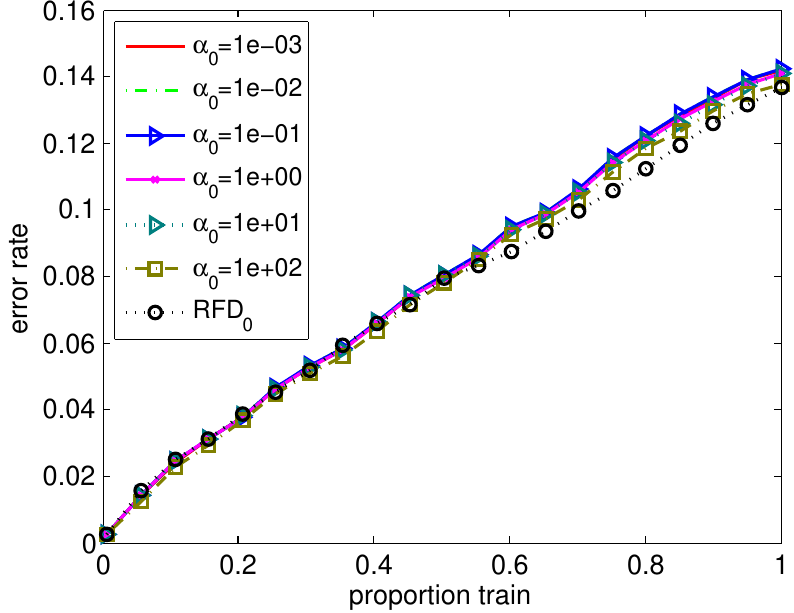} &
        \includegraphics[scale=0.34]{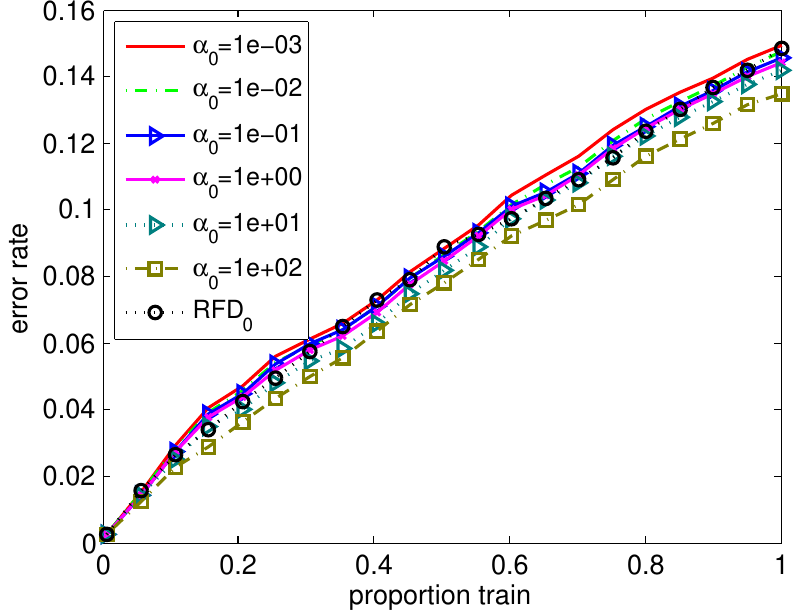} \\
        {\small (j) RFD vs RFD$_0$, $m=20$}  & {\small (k) RFD vs RFD$_0$}, $m=30$ & {\small (l) RFD vs RFD$_0$, $m=50$} \\[0.1cm]
    \end{tabular}
    \caption{Comparison of the online error rate on  ``farm-ads'' }
    \label{figure:train_farm-ads}
\end{figure}

\begin{figure}[H]
\centering
    \begin{tabular}{ccc}
        \includegraphics[scale=0.34]{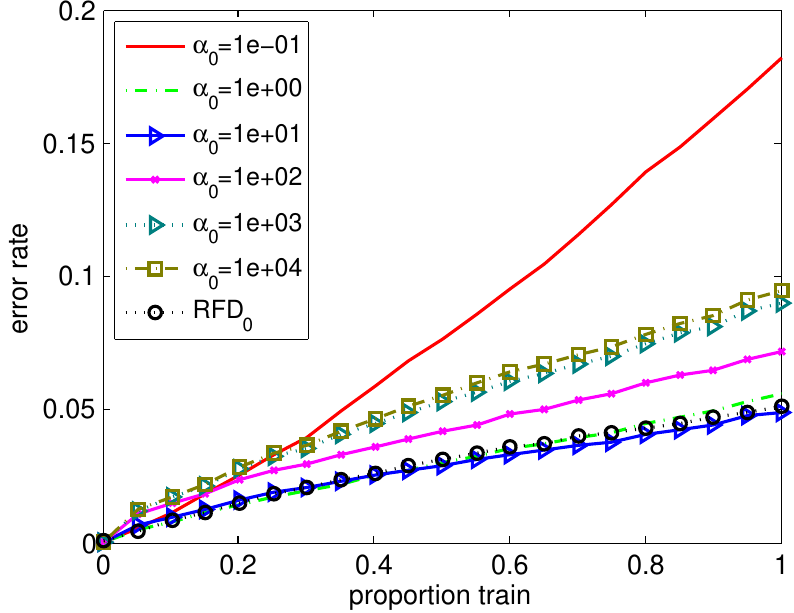} &
        \includegraphics[scale=0.34]{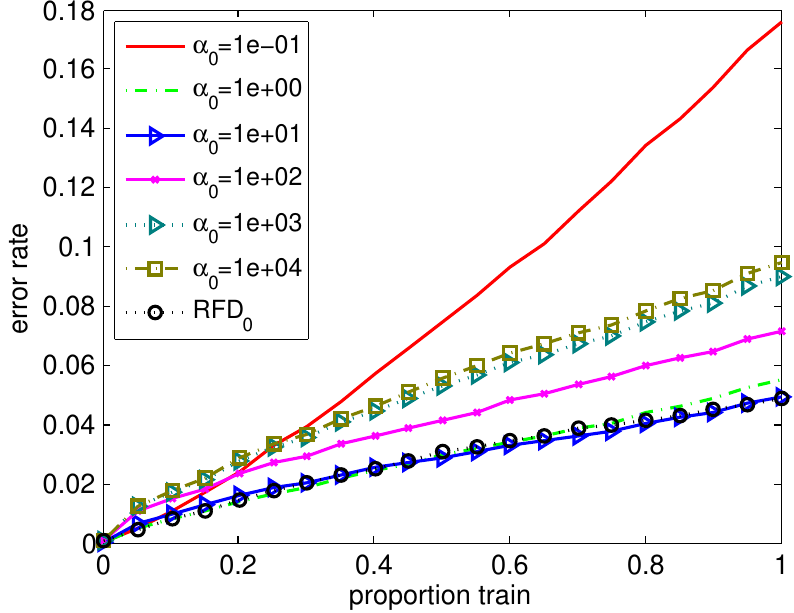} &
        \includegraphics[scale=0.34]{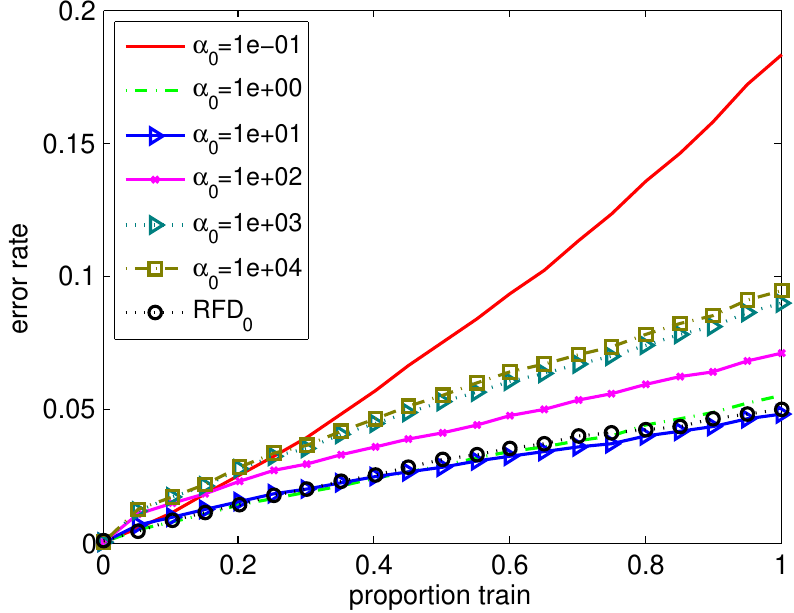} \\
        {\small (a) FD vs RFD$_0$, $m=20$}  & {\small (b) FD vs RFD$_0$}, $m=30$ & {\small (c) FD vs RFD$_0$, $m=50$} \\[0.1cm]
        \includegraphics[scale=0.34]{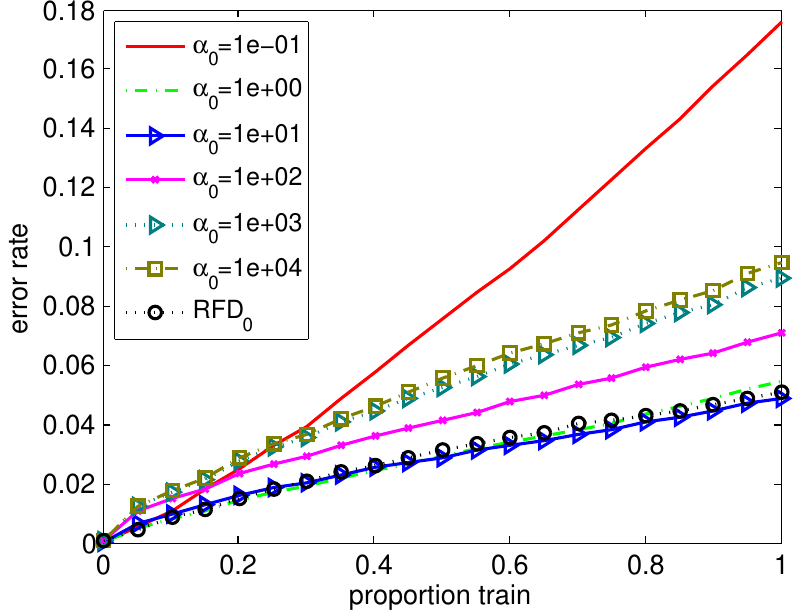} &
        \includegraphics[scale=0.34]{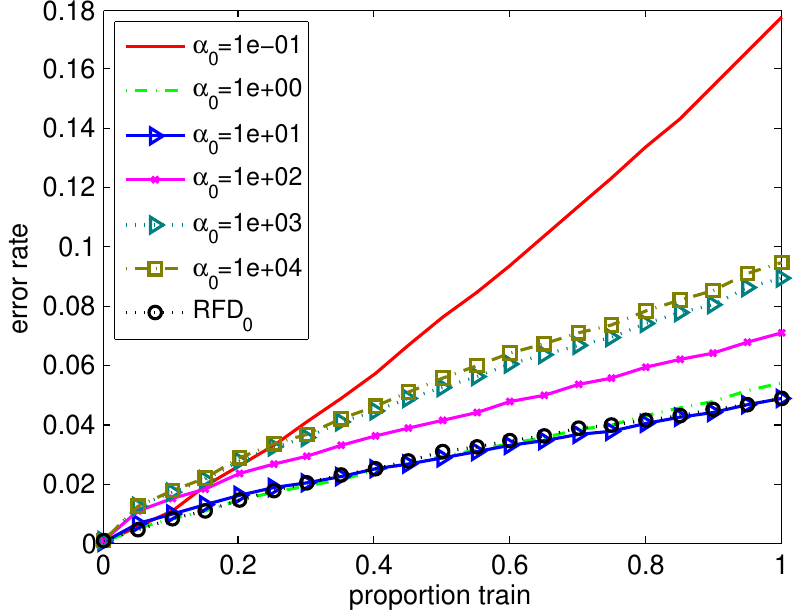} &
        \includegraphics[scale=0.34]{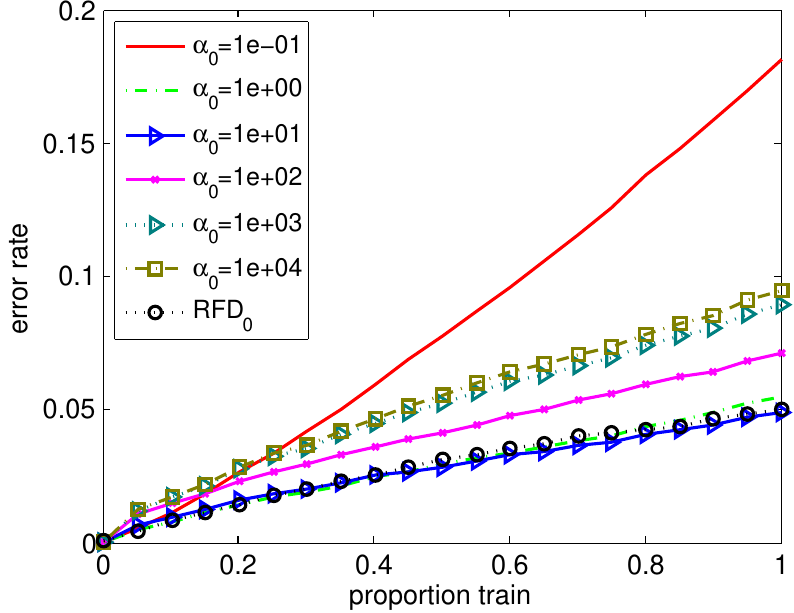} \\
        {\small (a) PFD vs RFD$_0$, $m=20$}  & {\small (b) FD vs PFD$_0$}, $m=30$ & {\small (c) PFD vs RFD$_0$, $m=50$} \\[0.1cm]
        \includegraphics[scale=0.34]{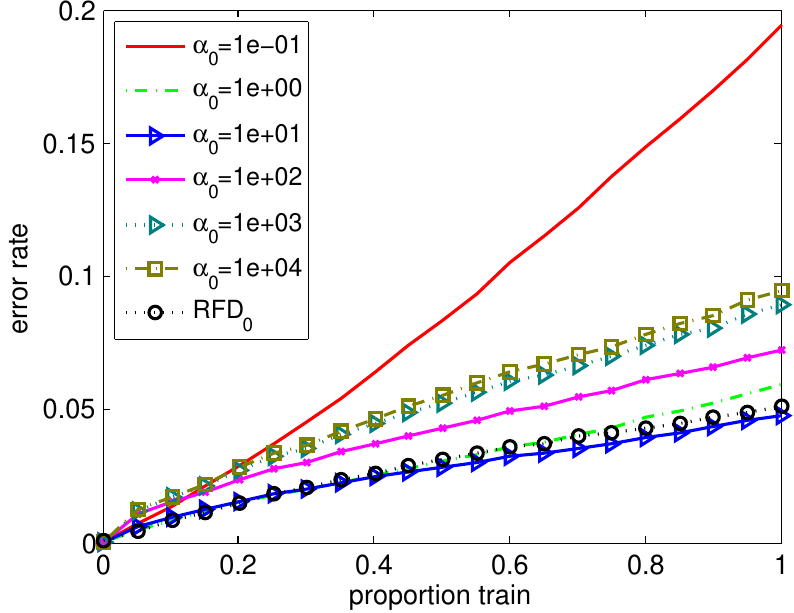} &
        \includegraphics[scale=0.34]{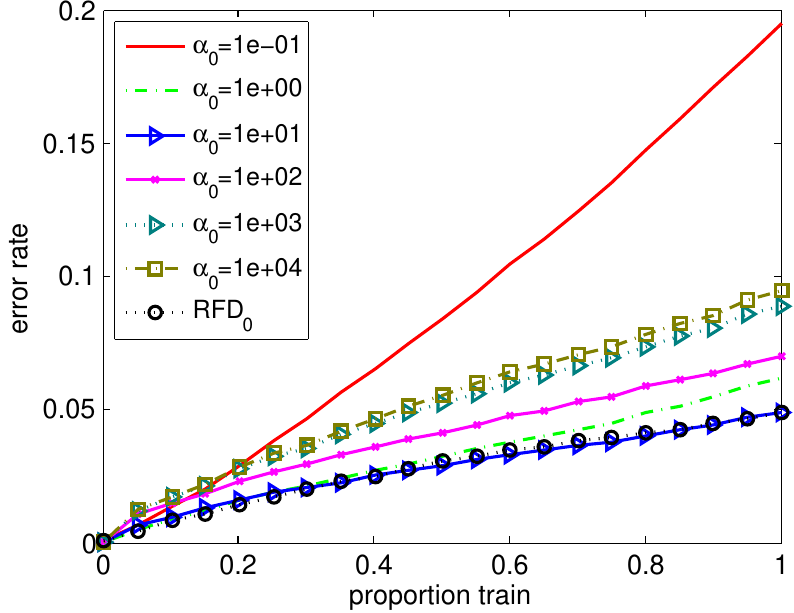} &
        \includegraphics[scale=0.34]{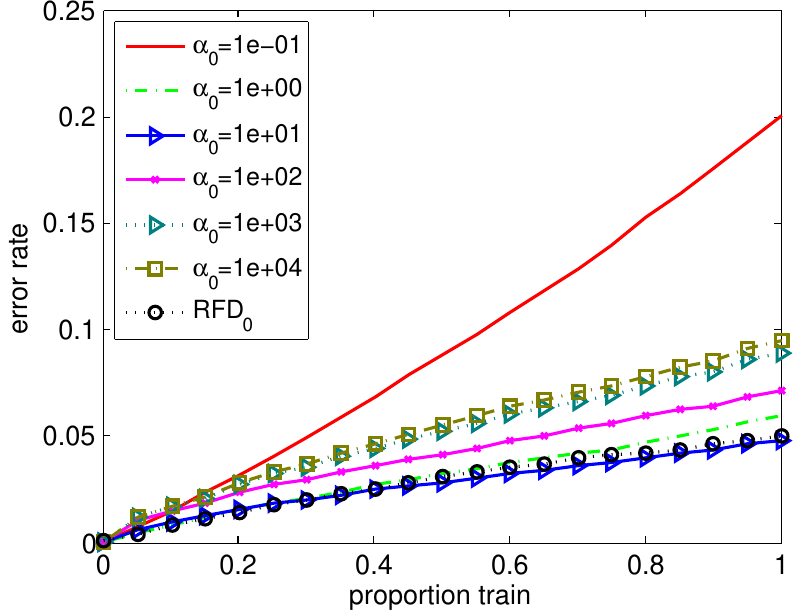} \\
        {\small (d) RP vs RFD$_0$, $m=20$}  & {\small (e) RP vs RFD$_0$}, $m=30$ & {\small (f) RP vs RFD$_0$, $m=50$} \\[0.1cm]
        \includegraphics[scale=0.34]{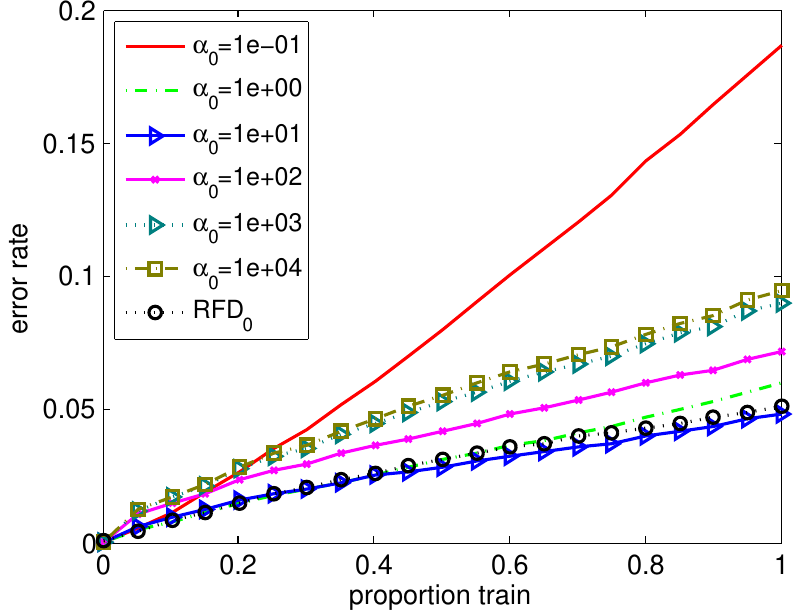} &
        \includegraphics[scale=0.34]{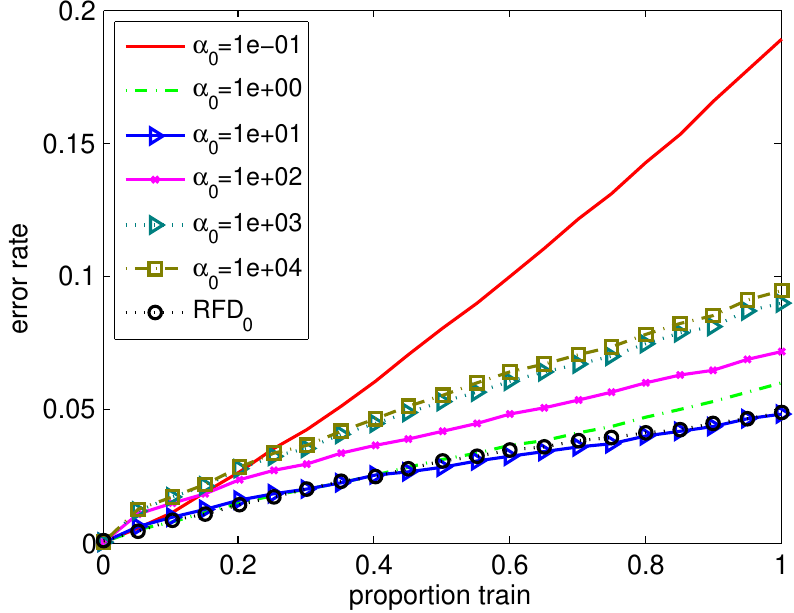} &
        \includegraphics[scale=0.34]{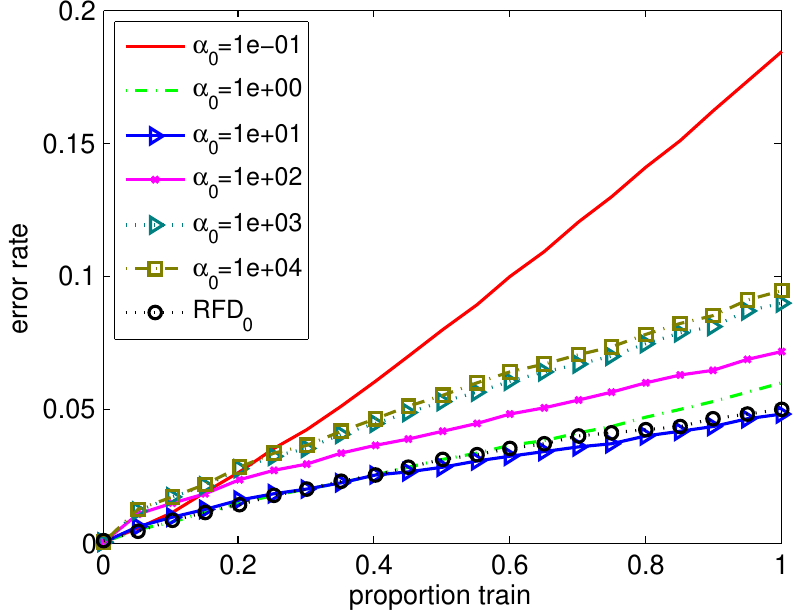} \\
        {\small (g) Oja vs RFD$_0$, $m=20$}  & {\small (h) Oja vs RFD$_0$}, $m=30$ & {\small (i) Oja vs RFD$_0$, $m=50$} \\[0.1cm]
        \includegraphics[scale=0.34]{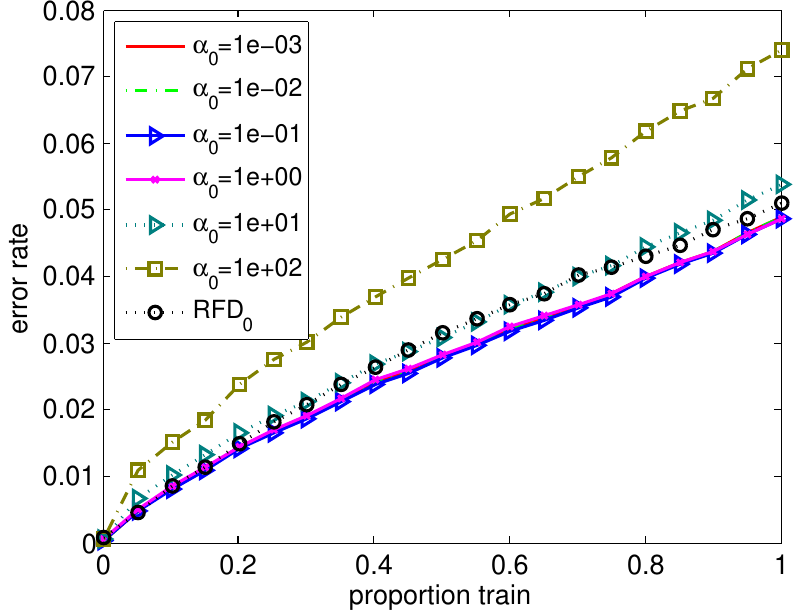} &
        \includegraphics[scale=0.34]{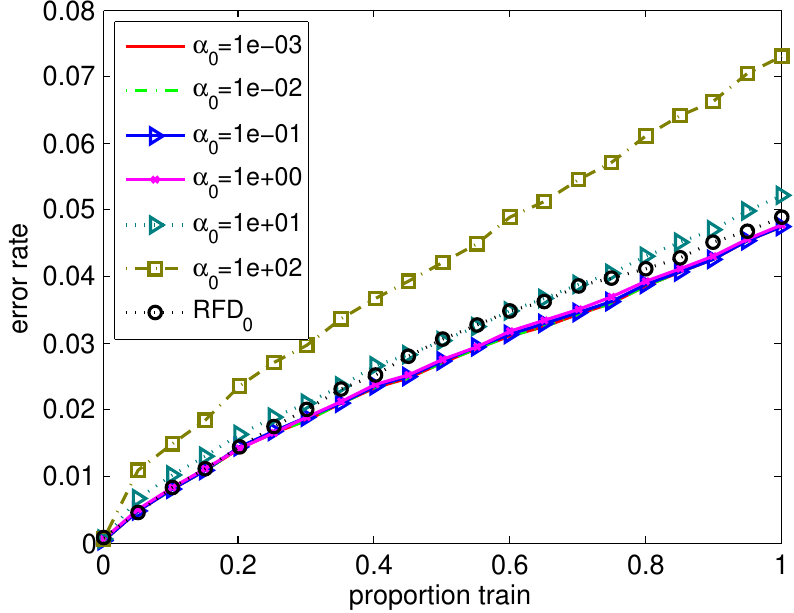} &
        \includegraphics[scale=0.34]{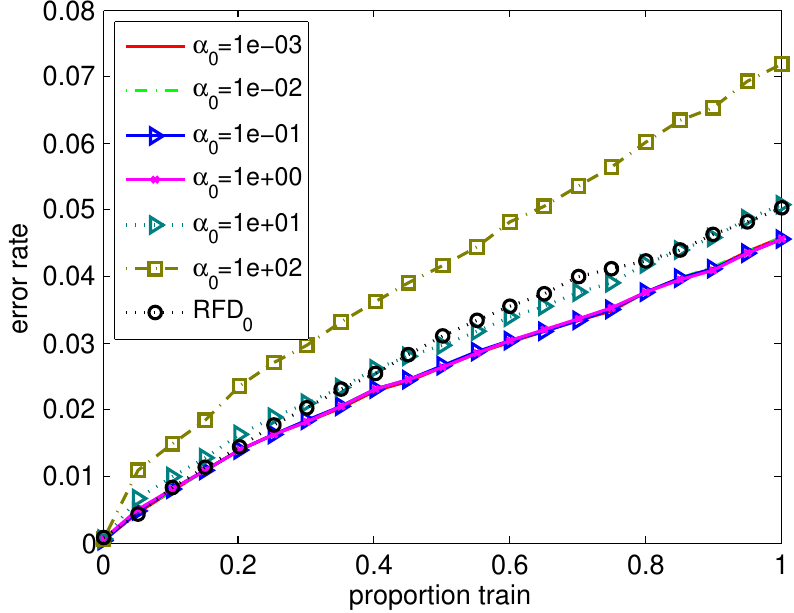} \\
        {\small (j) RFD vs RFD$_0$, $m=20$}  & {\small (k) RFD vs RFD$_0$}, $m=30$ & {\small (l) RFD vs RFD$_0$, $m=50$} \\[0.1cm]
    \end{tabular}
    \caption{Comparison of the online error rate on ``rcv1'' }
    \label{figure:train_rcv1}
\end{figure}

\begin{figure}[H]
\centering
    \begin{tabular}{ccc}
        \includegraphics[scale=0.34]{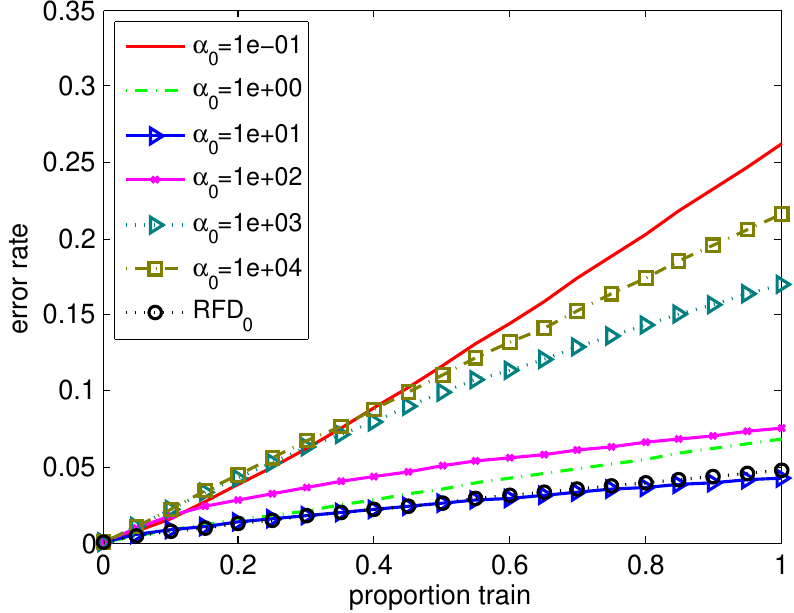} &
        \includegraphics[scale=0.34]{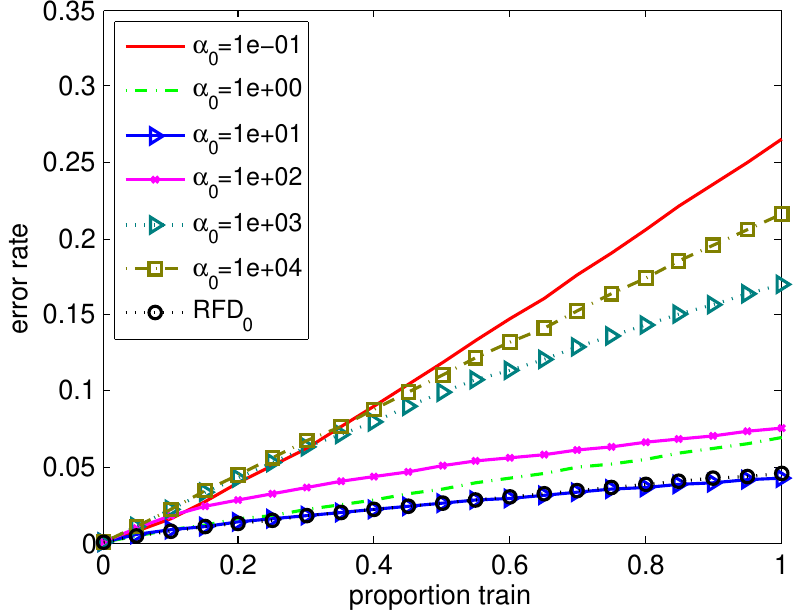} &
        \includegraphics[scale=0.34]{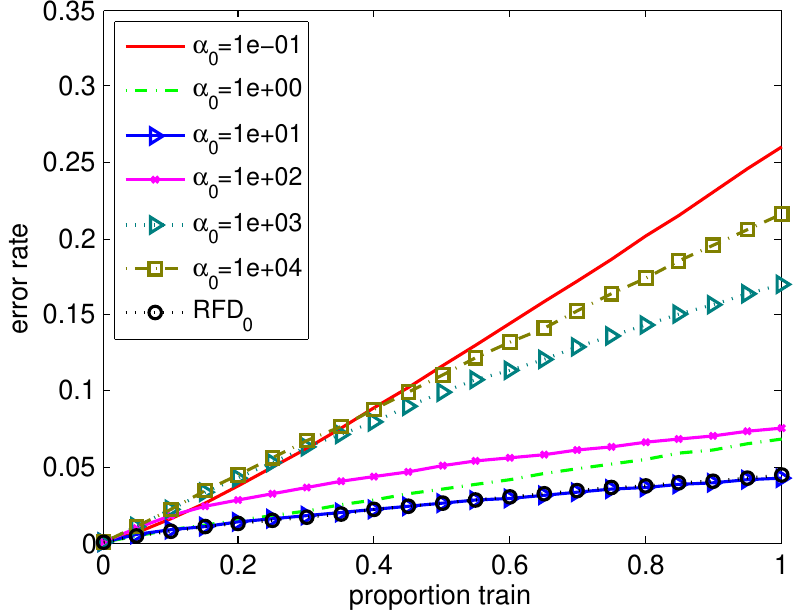} \\
        {\small (a) FD vs RFD$_0$, $m=20$}  & {\small (b) FD vs RFD$_0$}, $m=30$ & {\small (c) FD vs RFD$_0$, $m=50$} \\[0.1cm]
        \includegraphics[scale=0.34]{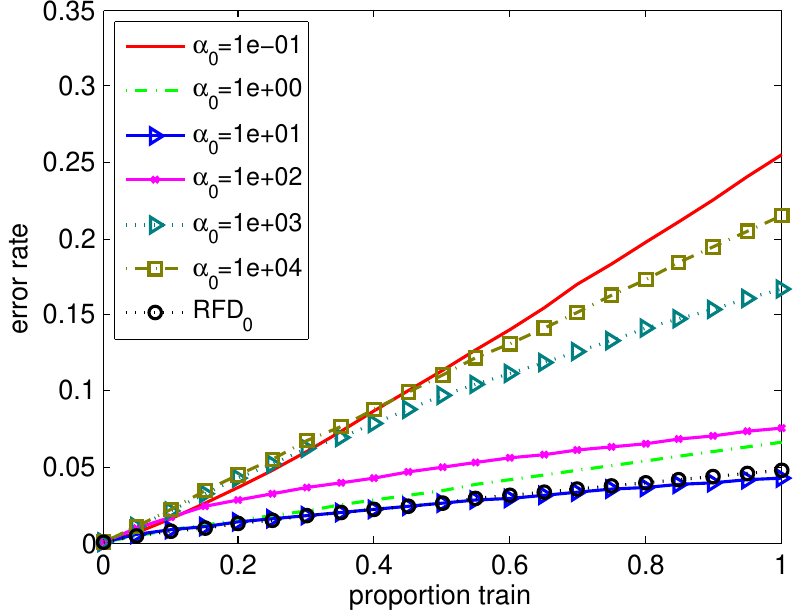} &
        \includegraphics[scale=0.34]{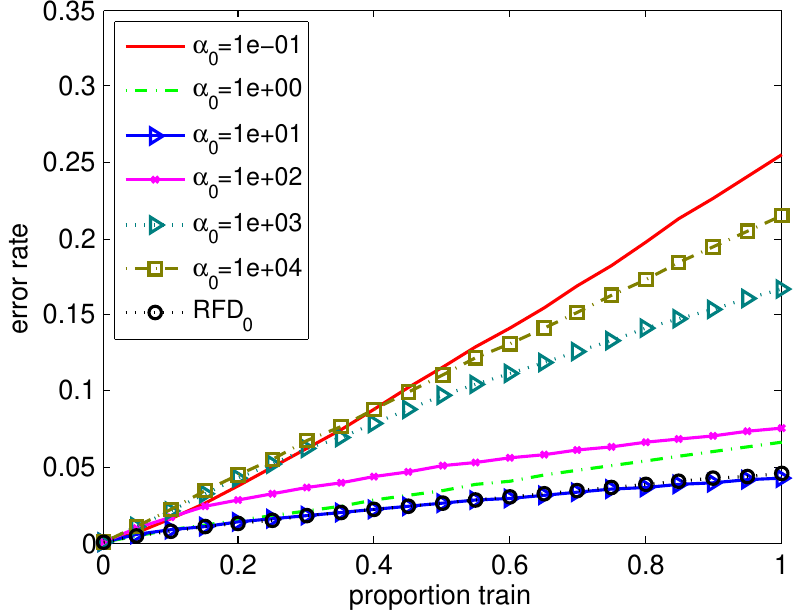} &
        \includegraphics[scale=0.34]{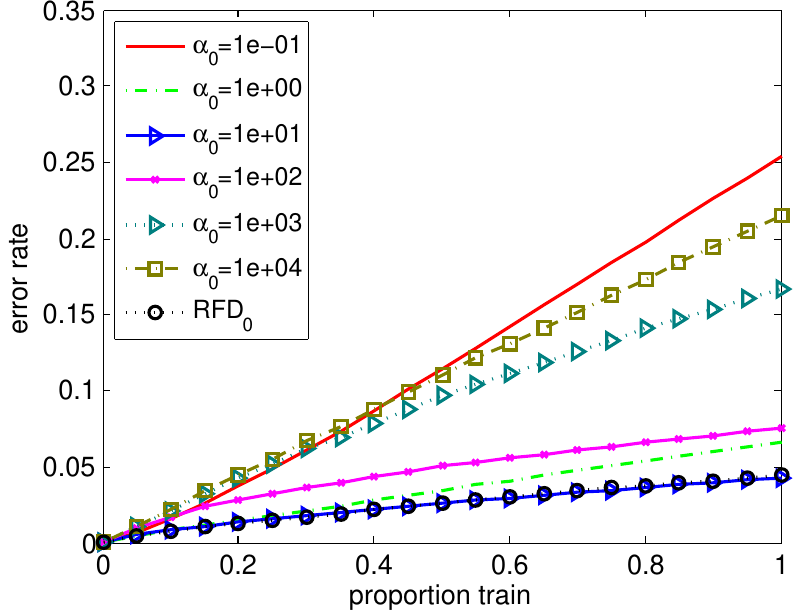} \\
        {\small (a) PFD vs RFD$_0$, $m=20$}  & {\small (b) PFD vs RFD$_0$}, $m=30$ & {\small (c) PFD vs RFD$_0$, $m=50$} \\[0.1cm]
        \includegraphics[scale=0.34]{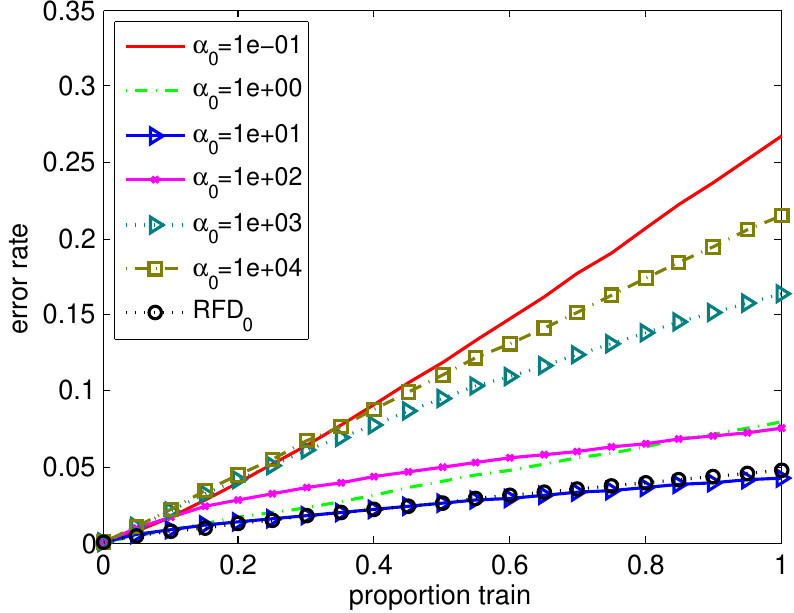} &
        \includegraphics[scale=0.34]{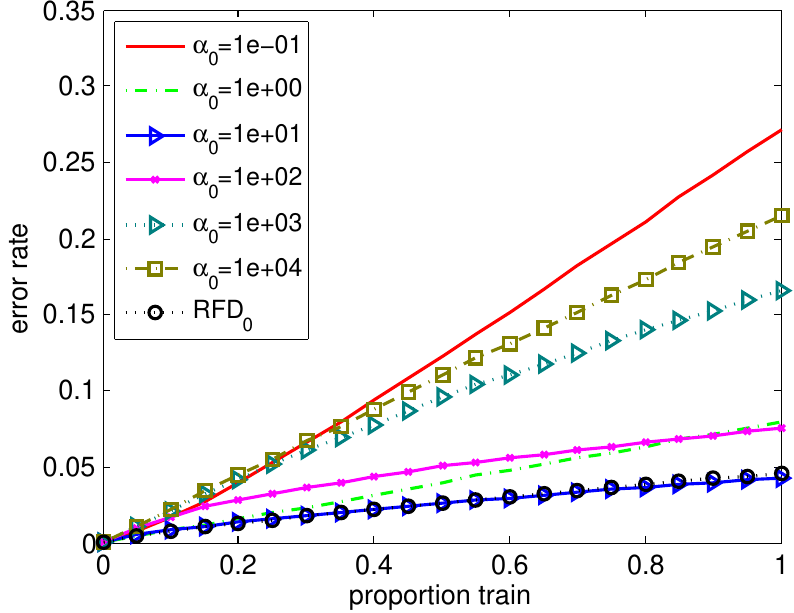} &
        \includegraphics[scale=0.34]{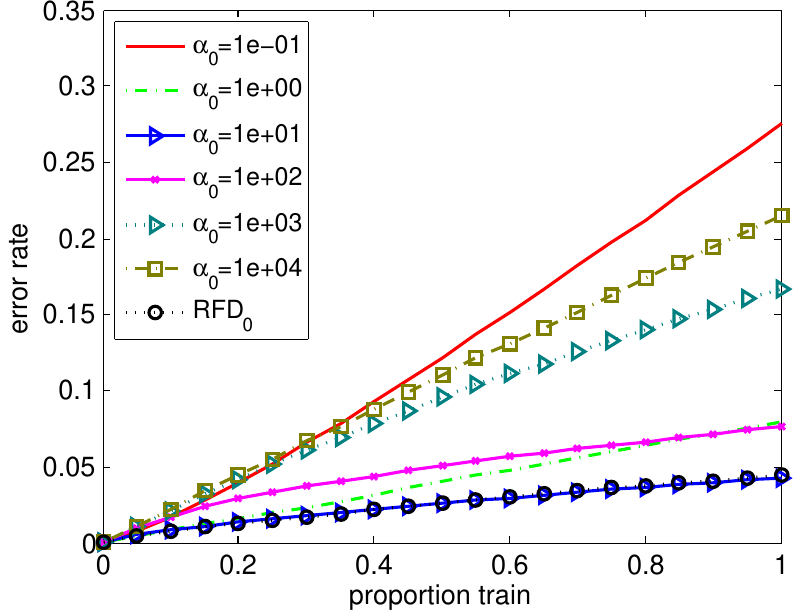} \\
        {\small (d) RP vs RFD$_0$, $m=20$}  & {\small (e) RP vs RFD$_0$}, $m=30$ & {\small (f) RP vs RFD$_0$, $m=50$} \\[0.1cm]
        \includegraphics[scale=0.34]{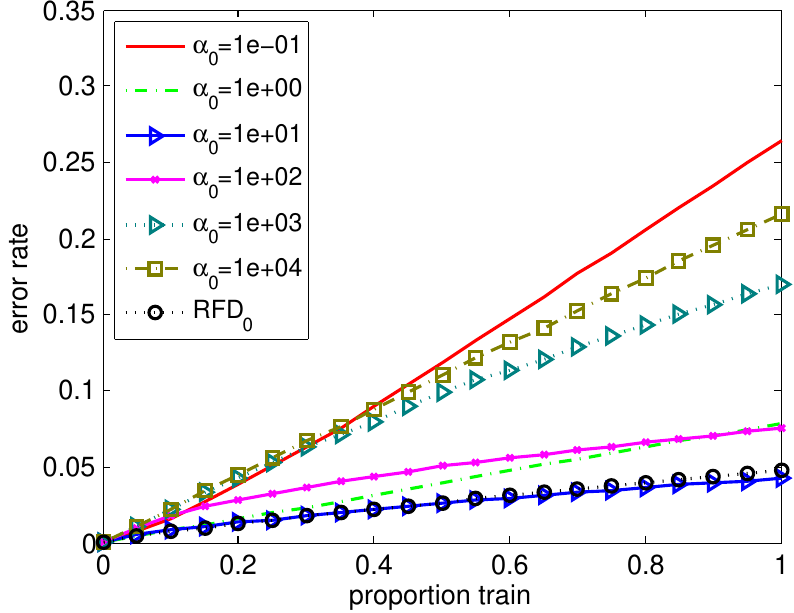} &
        \includegraphics[scale=0.34]{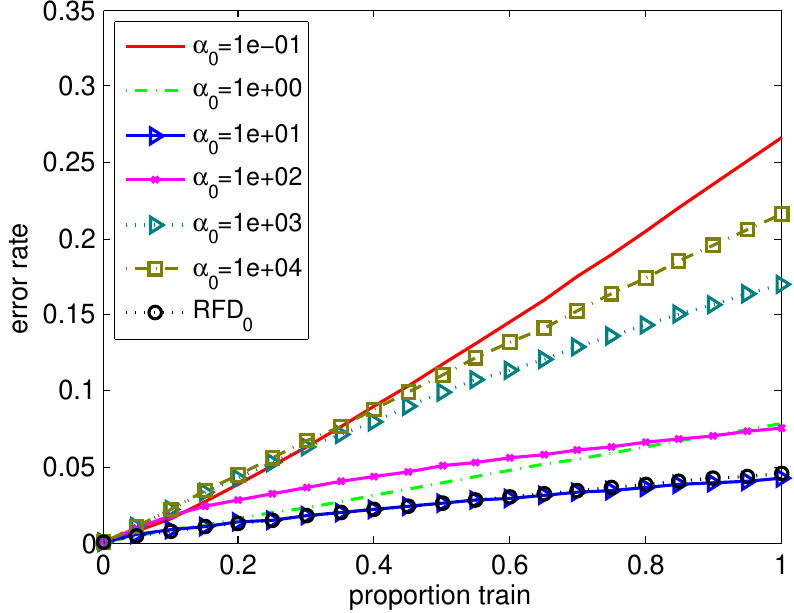} &
        \includegraphics[scale=0.34]{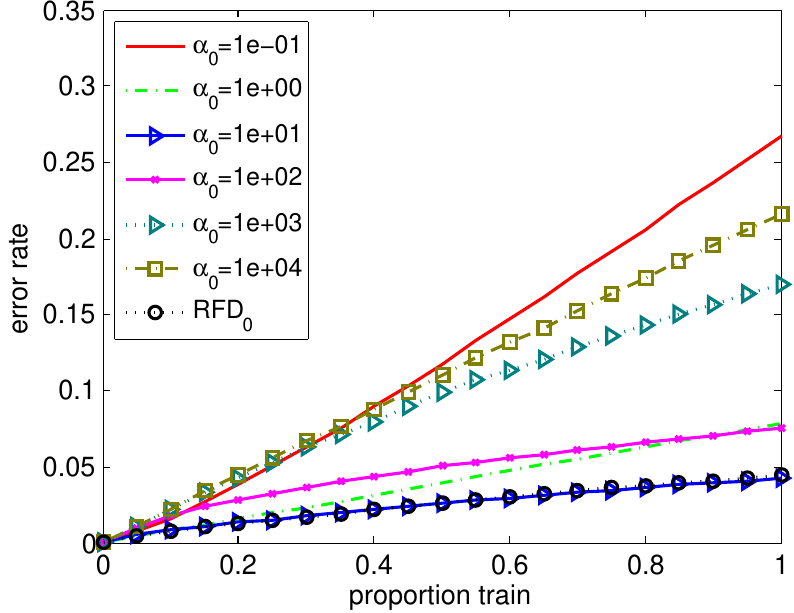} \\
        {\small (g) Oja vs RFD$_0$, $m=20$}  & {\small (h) Oja vs RFD$_0$}, $m=30$ & {\small (i) Oja vs RFD$_0$, $m=50$} \\[0.1cm]
        \includegraphics[scale=0.34]{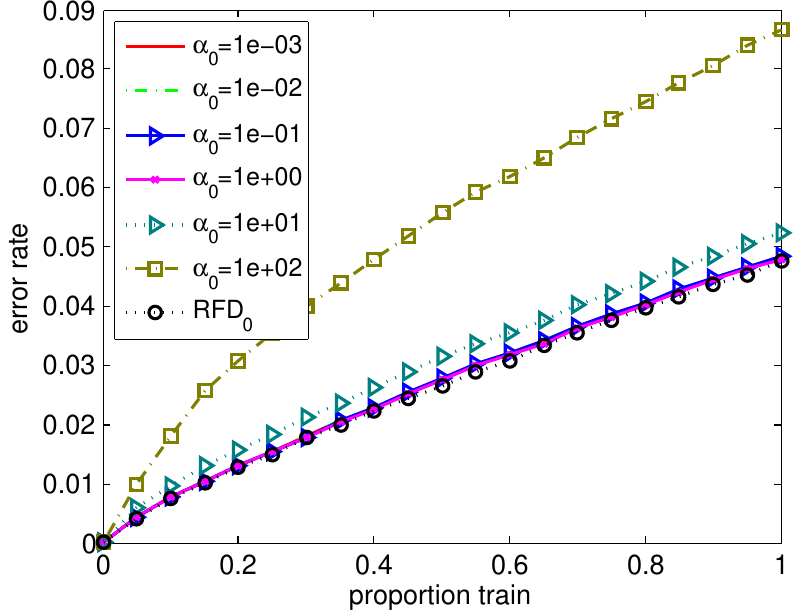} &
        \includegraphics[scale=0.34]{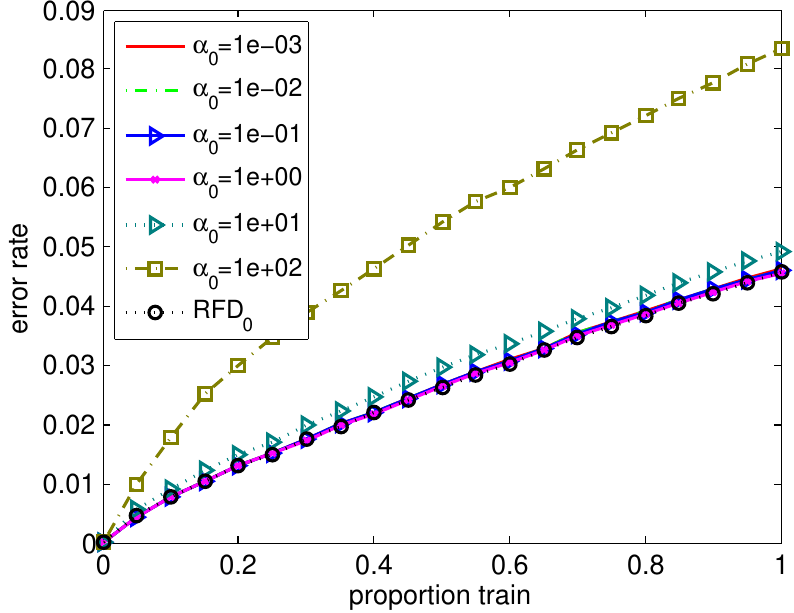} &
        \includegraphics[scale=0.34]{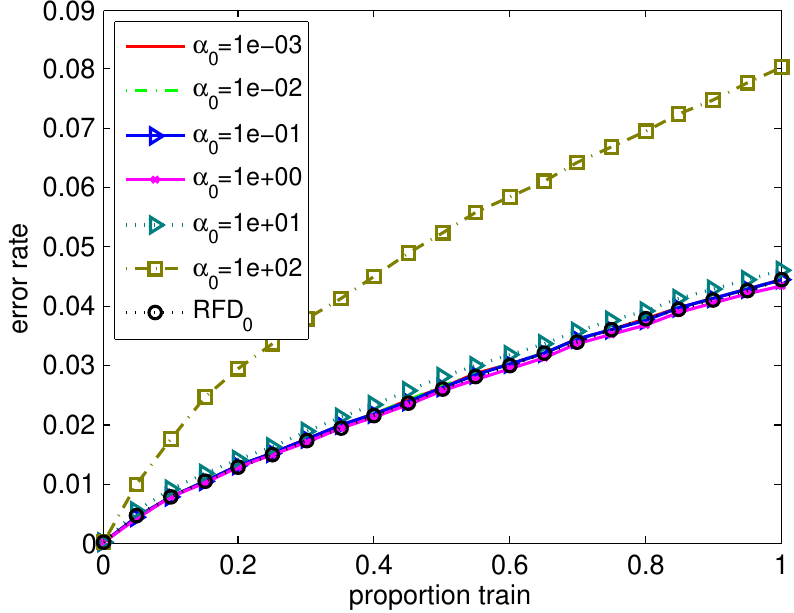} \\
        {\small (j) RFD vs RFD$_0$, $m=20$}  & {\small (k) RFD vs RFD$_0$}, $m=30$ & {\small (l) RFD vs RFD$_0$, $m=50$} \\[0.1cm]
    \end{tabular}
    \caption{Comparison of the online error rate on ``real-sim'' }
    \label{figure:train_real-sim}
\end{figure}

\begin{table}[ht]
    \centering
    %\small
	\begin{tabular}{|c|c|c|c|c|c|c|}
        \hline
        Algorithms        & a9a     & gisette    & sido0 & farm-ads & rcv1 & real-sim \\\hline
        ADAGRAD	          & 83.4783 &	84.1111 &	94.0326 &	89.3805 &	95.8340 &	96.7315 \\\hline
        FULL-ON	          & 83.8264 &	96.9444 &	97.0557 &	/	&	/	&	/	\\\hline
        FD, $m=m_1$	      & 83.6524 &	96.7222 &	97.0557 &	89.7023 &	95.6200 &	96.7315 \\
        FD, $m=m_2$	      & 83.6728 &	96.7222 &	97.0557 &	89.9437 &	95.6529 &	96.7361 \\
        FD, $m=m_3$	      & 83.6728 &	96.7222 &	97.0557 &	89.9437 &	95.6529 &	96.7361 \\\hline
        PFD, $m=m_1$	  & 83.6728 &	97.0000 &	97.0294  &	90.0241 &	95.8340 &	 96.7407 \\
        PFD, $m=m_2$	  & 83.7138 &	97.0556 &	97.0820 &	89.8632 &	95.8340 &	 96.7407 \\
        PFD, $m=m_3$	  & 83.6626 &	97.0000 &	97.0557 &	 90.0241 &	95.8340 &	96.7176 \\\hline
        RP, $m=m_1$	      & 83.4374 &	96.9444 &	96.5037 &	88.8174 &	95.7188 &	96.7499 \\
        RP, $m=m_2$	      & \textbf{83.9492} &	96.2778 &	96.8980 &	89.7023 &	95.6694 &	96.7499 \\
        RP, $m=m_3$       & 83.7650 &	96.7222 &	97.0557 &	89.4610 &	95.7846 &	96.7315 \\\hline
        Oja, $m=m_1$	  & 83.6831 &	96.3889 &	96.6351 &	89.0587 &	95.7846 &	\textbf{96.7776} \\
        Oja, $m=m_2$	  & 83.6319 &	96.9444 &	96.8980 &	89.1392 &	95.7846 &	\textbf{96.7776} \\
        Oja, $m=m_3$	  & 83.5091 &	\textbf{97.1111} &	96.8980 &	89.1392 &	95.7846 &	\textbf{96.7776} \\\hline
        RFD, $m=m_1$	  & 83.6319 &	96.2222 &	96.6877 &	89.7828 &	95.8340 &	95.7666 \\
        RFD, $m=m_2$	  & 83.6831 &	96.4444 &	96.9243 &	89.8632 &	95.8834 &	96.1267 \\
        RFD, $m=m_3$	  & 83.9390 &	96.9444 &	97.0820 &	\textbf{90.3459} &	96.1139 &	96.4037 \\\hline
        RFD$_0$, $m=m_1$  &	83.2429 &	95.9444 &	96.6877 &	87.6911 &	96.3280 &	96.1636 \\
        RFD$_0$, $m=m_2$  &	83.2634 &	96.2778 &	96.9243 &	88.8174 &	96.3115 &	96.3806 \\
        RFD$_0$, $m=m_3$  &	83.2736 &	96.8889 &	\textbf{97.1083} &	88.8978 &	\textbf{96.4762} &	96.5560 \\\hline
	\end{tabular}
    \caption{We list the accuracy (\%) on test set  at the end of one pass with best choice of $\alpha_0$.
        The sketch size is set $(m_1,m_2,m_3)=(5, 10, 20)$ for
        ``a9a'', ``gisette'', ``sido0'' and $(m_1,m_2,m_3)=(20, 30, 50)$ for ``farm-ads'', ``rcv1'' and ``real-sim''.}
    \label{table:accuracy}\vspace{0.01cm}
\end{table}

\begin{table}[ht]
    \centering \small
	\begin{tabular}{|c|c|c|c|c|c|c|}
        \hline
        Algorithms        & a9a     & gisette    & sido0 & farm-ads & rcv1 & real-sim \\\hline
        ADAGRAD	        & 9.8279e-05 &	1.9677e-04 &	1.9504e-04 &	3.1976e-04 &	7.0888e-04 &	3.2917e-04 \\\hline
        FULL-ON	        & 2.3141e-04 &	2.6296e-01 &	1.6299e-01 &		/  &            /  &	         / \\\hline
        FD, $m=m_1$	& 2.8552e-04 &	5.2073e-04 &	6.0705e-04 &	1.6387e-03 &	2.0727e-02 &	6.6130e-03 \\
        FD, $m=m_2$	& 2.7276e-04 &	7.2830e-04 &	7.8723e-04 &	3.2000e-03 &	4.1148e-02 &	1.3530e-02 \\
        FD, $m=m_3$	& 3.4899e-04 &	3.4821e-03 &	1.4404e-03 &	5.1365e-03 &	7.2211e-02 &	3.0713e-02 \\\hline
        PFD, $m=m_1$	& 2.6090e-04 &	5.8330e-04 &	5.7862e-04  &	2.4524e-02 &	2.0121e-02 &	6.6757e-03 \\
        PFD, $m=m_2$	& 2.8206e-04 &	2.0234e-03 &	7.7824e-04 &	3.2372e-02 &	4.1385e-02 &	1.2965e-02 \\
        PFD, $m=m_3$	& 3.2096e-04 &	3.3901e-03 &	1.4680e-03 &	5.4829e-02 &	7.1159e-02 &	3.0677e-02 \\\hline
        RP, $m=m_1$	& 1.3597e-04 &	3.2097e-04 &	3.6333e-04 &	7.3933e-04 &	1.8736e-03 &	1.5551e-03 \\
        RP, $m=m_2$	& 2.7308e-04 &	7.2830e-04 &	7.8723e-04 &	3.2000e-03 &	2.8015e-03 &	1.8377e-03 \\
        RP, $m=m_3$	& 3.3307e-04 &	3.4821e-03 &	1.4404e-03 &	5.1365e-03 &	4.1585e-03 &	2.1537e-03 \\\hline
        Oja, $m=m_1$	& 1.5500e-04 &	6.0334e-04 &	2.9098e-04 &	1.3061e-03 &	6.7158e-03 &	5.2530e-03 \\
        Oja, $m=m_2$	& 1.6719e-04 &	7.3481e-04 &	2.4472e-04 &	2.8887e-03 &	4.1148e-02 &	1.3530e-02 \\
        Oja, $m=m_3$	& 1.6631e-04 &	3.3918e-03 &	1.3920e-03 &	4.4386e-03 &	7.2211e-02 &	3.0713e-02 \\\hline
        RFD, $m=m_1$	&  2.8549e-04 &	5.1545e-04 &	6.0296e-04 &	2.0484e-03 &	2.0557e-02 &	9.7858e-03 \\
        RFD, $m=m_2$ 	& 3.1813e-04 &	7.5013e-04 &	7.5699e-04 &	3.8129e-03 &	4.0695e-02 &	1.6527e-02 \\
        RFD, $m=m_3$	& 3.3405e-04 &	3.3495e-03 &	1.4472e-03 &	5.2458e-03 &	7.1764e-02 &	3.0175e-02 \\\hline
        RFD$_0$, $m=m_1$ & 2.7466e-04 &	6.8607e-04 &	5.9339e-04 &	2.0843e-03 &	2.0033e-02 &	9.8373e-03 \\
        RFD$_0$, $m=m_2$ & 1.9542e-04 &	8.0961e-04 &	7.6749e-04 &	3.7857e-03 &	4.0779e-02 &	1.6892e-02 \\
        RFD$_0$, $m=m_3$ & 2.3561e-04 &	1.4328e-03 &	7.6749e-04 &	5.5628e-03 &	7.3725e-02 &	3.0490e-02 \\\hline
	\end{tabular}
    \caption{We list the average iteration cost corresponding to Table \ref{table:accuracy}.}
    \label{table:time}\vspace{0.1cm}%\small
\end{table}

\section{Conclusions}
\label{sec:concl}

In this paper we have proposed a novel sketching method robust frequent directions (RFD),
and our theoretical analysis shows that RFD is superior to FD.
We  have also studied the use of RFD in  the second order online learning algorithms.
The online learning algorithm with RFD achieves better performance than baselines.
It is worth pointing out that the application of RFD is not limited to convex online
optimization. In future work, we would like  to explore the use of  RFD in stochastic optimization and non-convex problems.

\acks{
We thank the anonymous reviewers for their helpful suggestions. Luo Luo, Cheng Chen and Zhihua Zhang have been supported by the National Natural Science Foundation of China (No. 61572017 and 11771002)  and by Beijing Municipal Commission of Science and Technology under Grant No. 181100008918005. Wu-Jun Li has been supported by the NSFC-NRF Joint Research Project (No. 61861146001) and by the NSFC (No. 61472182).}

\appendix
\section{Accelerating by Doubling Space}\label{appendix:alg}
The cost of FD (Algorithm \ref{alg:FD}) is dominated by the steps of SVD. It takes $\fO(Tm^2d)$ time by standard SVD in total.
We can accelerate FD by doubling the sketch size \citep{liberty2013simple}.
The details are shown in Algorithm \ref{alg:FFD}.
Then the SVD is called only every $m$ rows of $\A$ come and the time complexity is reduced to $\fO(Tmd)$.
Similarly, RFD can also be speeded up in this way. We demonstrate it in Algorithm \ref{alg:FRFD}.

\begin{algorithm}[ht]
    \caption{Fast Frequent Directions}
	\label{alg:FFD}
	\begin{algorithmic}[1]
    \STATE {\textbf{Input:}} $\A=[\a^{(1)},\dots,\a^{(T)}]^\top\in\BR^{T\times d}$,
            $\B^{(0)}=[\a^{(1)},\dots,\a^{(m-1)}]^\top$ \\[0.2cm]
    \STATE {\textbf{for}} $t=m,\dots,T$ {\textbf{do}} \\[0.2cm]
    \STATE \quad ${\hat\B}^\utm=\begin{bmatrix}\B^\utm \\ (\a^\ut)^\top \end{bmatrix}$ \\[0.2cm]
    \STATE \quad {\textbf{if }} $\hB^\utm$ has $2m$ rows  \\[0.2cm]
    \STATE \quad\quad Compute SVD: ${\hat\B}^\utm =\U^\utm\bSigma^\utm(\V^\utm)^\top$ \\[0.2cm]
    \STATE \quad\quad $\B^\ut = \sqrt{\big(\bSigma_{m-1}^\utm\big)^2-\big(\sigma_{m}^\utm\big)^2\I_{m-1}}\cdot \big(\V_{m-1}^\utm\big)^\top$ \\[0.2cm]
    \STATE \quad {\textbf{else}} \\[0.2cm]
    \STATE \quad\quad $\B^\ut = \hB^\utm$ \\[0.2cm]
    \STATE \quad {\textbf{end if}} \\[0.2cm]
    \STATE {\textbf{end for}} \\[0.2cm]
    \STATE {\textbf{Output:}} $\B=\B^{(T)}$
	\end{algorithmic}
\end{algorithm}

\begin{algorithm}
    \caption{Fast Robust Frequent Directions}
	\label{alg:FRFD}
	\begin{algorithmic}[1]
    \STATE {\textbf{Input:}} $\A=[\a^{(1)},\dots,\a^{(T)}]^\top\in\BR^{T\times d}$,
            $\B^{(m-1)}=[\a^{(1)},\dots,\a^{(m-1)}]^\top$, $\alpha^{(m-1)}=0$ \\[0.2cm]
    \STATE {\textbf{for}} $t=m,\dots,T$ {\textbf{do}} \\[0.2cm]
    \STATE \quad ${\hat\B}^\utm=\begin{bmatrix}\B^\utm \\ (\a^\ut)^\top \end{bmatrix}$ \\[0.2cm]
    \STATE \quad {\textbf{if }} $\hB^\utm$ has $2m$ rows  \\[0.2cm]
    \STATE \quad\quad Compute SVD: ${\hat\B}^\utm =\U^\utm\bSigma^\utm(\V^\utm)^\top$ \\[0.2cm]
    \STATE \quad\quad $\B^\ut = \sqrt{\big(\bSigma_{m-1}^\utm\big)^2-\big(\sigma_{m}^\utm\big)^2\I_{m-1}}\cdot \big(\V_{m-1}^\utm\big)^\top$
    \\[0.2cm]
    \STATE \quad\quad $\alpha^\ut = \alpha^\utm + \big(\sigma^\utm_{m}\big)^2 / 2$ \\[0.2cm]
    \STATE \quad {\textbf{else}} \\[0.2cm]
    \STATE \quad\quad $\B^\ut = \hB^\utm$ \\[0.2cm]
    \STATE \quad\quad $\alpha^\ut = \alpha^\utm$ \\[0.2cm]
    \STATE \quad {\textbf{end if}} \\[0.2cm]
    \STATE {\textbf{end for}} \\[0.2cm]
    \STATE {\textbf{Output:}} $\B=\B^{(T)}$ and $\alpha=\alpha^{(T)}$
	\end{algorithmic}
\end{algorithm}

We can apply similar strategy on RFD-SON, just as Algorithm \ref{alg:FRFD-ONS} shows.
For $\alpha^\ut>0$, the parameter $\w^\ut$ can be updated in $\fO(md)$ cost by Woodbury identity \citep{luo2016efficient}.
Suppose $\B^\ut\in\BR^{m'\times d}$, where $m'\leq 2m$. We have
\begin{align*}
    \bu^\utp &= \w^\ut  - \frac{1}{\alpha^\ut}\big(\g^\ut-(\B^\ut)^\top\M^\ut\B^\ut\g^\ut\big), \\
     \w^\utp &= \bu^\utp - \gamma^\ut \big(\x^\ut -(\B^\ut)^\top\M^\ut\B^\ut\x^\ut\big),
\end{align*}
where
\begin{align*}
    &\gamma^\ut = \frac{\tau\big((\bu^\ut)^\top\x^\ut\big)}{(\x^\ut)^\top\x^\ut-(\x^\ut)^\top(\B^\ut)^\top\M^\ut\B^\ut\x^\ut}, \\[0.1cm]
    & \M^\ut = \big(\B^\ut (\B^\ut)^\top + \alpha^\ut\I_m\big)^{-1}, \\[0.1cm]
    & \tau(z)={\rm sgn}(z)\max\{|z|-1,0\}.
\end{align*}
Let's check the cost of the above steps in detail.
The matrix $\M^\ut$ costs $\fO(m^2)$ space and
the computation of $\M^\ut(\B^\ut\g^\ut)$ or $\M^\ut(\B^\ut\x^\ut)$ takes $\fO(md+m^3)$ time given
$\B^\ut (\B^\ut)^\top + \alpha^\ut\I_{m'}$. The result of $\B^\ut (\B^\ut)^\top$ also can be obtained in $\fO(md)$ time.
If we have executed SVD at current iteration
(when $\hB^\utm$ has $2m$ rows),
$\B^\ut (\B^\ut)$ is diagonal and we can directly obtain the SVD of
\begin{align*}
 \B^\ut (\B^\ut)^\top = \big(\bSigma_{m-1}^\utm\big)^2-\big(\sigma^\utm_{m}\big)^2\I_{m-1},
\end{align*}
otherwise it can be updated incrementally in $\fO(md)$ as follows
\begin{align*}
 \B^\ut (\B^\ut)^\top = \begin{bmatrix}
    \B^\utm (\B^\utm)^\top & \sqrt{\mu_t+\eta_t} \B^\utm\g^\ut \\
    \sqrt{\mu_t+\eta_t} (\B^\utm\g^\ut)^\top & (\mu_t+\eta_t)(\g^\ut)^\top\g^\ut
    \end{bmatrix}.
\end{align*}
Since we have $m\ll d$, all above operations only require $\fO(md)$ time and space complexity in total for each iteration.

In the case of $\alpha^\ut=0$, we can iterate  $\bu^\ut$ and $\w^\ut$ by using SVD on $\B^\ut$.
Let the condensed SVD of $\B^\ut$ be $\B^\ut=\U\bSigma\V^\top$,
where $\U\in\BR^{m'\times r}$, $\bSigma\in\BR^{r\times r}$, $\V\in\BR^{r\times d}$ and $r={\rm rank}(\B^\ut)<m'$.
Then we have
\begin{align*}
\bH^\ut &=(\B^\ut)^\top\B^\ut =  \V\bSigma^{2}\V^\top, \\
(\bH^\ut)^\dag &= \V\bSigma^{-2}\V^\top.
\end{align*}
We can update $\bu^\utp$ and $ \w^\utp$ as follows
\begin{align*}
    \bu^\utp &= \w^\ut - \V^\top(\bSigma^{-2}(\V^\top\g^\ut)), \\
    \w^\utp &= \argmin_{\w\in\{\w_1^\ut,\w_2^\ut\}} \| \w - \bu^\utp \|_{\bH^\ut},
\end{align*}
where
\begin{align*}
\w_1^\ut =&\bu^\utp - \frac{\tau\big((\bu^\utp)^\top\x^\ut\big)}{(\x^\ut)^\top \V\bSigma^{-2}\V^\top \x^\ut} \V\bSigma^{-2}\V^\top\x^\ut, \\
\w_2^\ut =&\bu^\utp - \frac{\tau\big((\bu^\utp)^\top\x^\ut\big)}{(\x^\ut)^\top\x^\ut - (\x^\ut)^\top \V\bSigma^{-2}\V^\top \x^\ut}
        (\x^\ut -\V\bSigma^{-2}\V^\top\x^\ut).
\end{align*}
The iteration costs $\fO(m^2d)$ in total (dominated by the SVD of $\B^\top$),
but $\alpha^\ut=0$ only appears at a few early iterations.
Hence the average iteration complexity of RFD-SON is dominated by the case $\alpha^\ut>0$ which takes  $\fO(md)$.
Note that the algorithm is also valid without the smoothness of $f_t$.
We can replace the gradient $\nabla f_t$ with the corresponding subgradient.

\begin{algorithm}[ht]
    \caption{Fast RFD for Online Newton Step}
	\label{alg:FRFD-ONS}
	\begin{algorithmic}[1]
    \STATE {\textbf{Input:}} $\alpha^{(0)}=\alpha_0$, $m<d$, $\eta_t=\fO(1/t)$ and
            $\B^{(0)}=\bz^{m\times d}$. \\[0.1cm]
    \STATE {\textbf{for}} $t=1,\dots,T$ {\textbf{do}} \\[0.2cm]
    \STATE \quad Receive example $\x^\ut$, and loss function $f_t(\w)$ \\[0.2cm]
    \STATE \quad Predict the output of $\x^\ut$ by $\w^\ut$ and suffer loss $f_t(\w^\ut)$ \\[0.2cm]
    \STATE \quad $\g^\ut=\nabla f_t(\w^\ut)$ \\[0.2cm]
    \STATE \quad ${\hat\B}^\utm=\begin{bmatrix}\B^\utm \\ (\sqrt{\mu_t+\eta_t} \g^\ut)^\top \end{bmatrix}$ \\[0.2cm]
    \STATE \quad {\textbf{if }} $\hB^\utm$ has $2m$ rows  \\[0.2cm]
    \STATE \quad\quad Compute SVD: $\hB^\utm = \U^\utm\bSigma^\utm(\V^\utm)^\top$ \\[0.2cm]
    \STATE \quad\quad $\B^\ut = \sqrt{\big(\bSigma_{m-1}^\utm\big)^2-\big(\sigma^\utm_{m}\big)^2\I_{m-1}}\cdot (\V^\utm_{m-1})^\top$ \\[0.2cm]
    \STATE \quad\quad $\alpha^\ut = \alpha^\utm + \big(\sigma^\utm_{m}\big)^2 / 2$ \\[0.2cm]
    \STATE \quad {\textbf{else}} \\[0.2cm]
    \STATE \quad\quad $\B^\ut = \hB^\utm$ \\[0.2cm]
    \STATE \quad\quad $\alpha^\ut = \alpha^\utm$ \\[0.2cm]
    \STATE \quad {\textbf{end if}} \\[0.2cm]
    \STATE \quad $\bH^\ut = (\B^\ut)^\top\B^\ut + \alpha^\ut\I_d$ \\[0.2cm]
    \STATE \quad $\bu^\utp =  \w^\ut - (\bH^\ut)^\dag \g^\ut$  \\[0.2cm]
    \STATE \quad $\w^\utp = \argmin_{\w\in\fK_t} \| \w - \bu^\utp \|_{\bH^\ut}$ \\[0.2cm]
    \STATE {\textbf{end for}} \\[0.2cm]
	\end{algorithmic}
\end{algorithm}

\section{The Proof of Theorem \ref{thm:specMRA}}\label{appendix:specMRA}
In this section, we firstly provide several lemmas from the book ``Topics in matrix analysis'' \citep{horn1991topics},
then we prove Theorem \ref{thm:specMRA}.
The proof of Lemma \ref{lemma:uinorm1} and \ref{lemma:uinorm2} can be found in the book and
we give the proof of Lemma \ref{lemma:uinorm3} here.
\begin{lem}[Theorem 3.4.5 of \citep{horn1991topics}]\label{lemma:uinorm1}
    Let $\A,\B\in\BR^{m\times n}$ be given, and suppose $\A$, $\B$ and $\A-\B$ have decreasingly ordered singular values,
    $\sigma_1(\A) \geq \dots \geq \sigma_q(\A)$, $\sigma_1(\B) \geq \dots \geq \sigma_q(\B)$, and
    $\sigma_1(\A-\B) \geq \dots \geq \sigma_q(\A-\B)$, where $q=\min\{m,n\}$.
    Define $s_i(\A,\B)\equiv |\sigma_i(\A)-\sigma_i(\B)|$, $i=1,\dots,q$ and let
    $s_{[1]}(\A,\B) \geq \dots \geq s_{[i]}(\A,\B)$ denote a decreasingly ordered rearrangement of the values $s_i(\A,\B)$.
    Then
    \begin{align*}
        \sum_{i=1}^k s_{[i]}(\A,\B) \leq \sum_{i=1}^k \sigma_{i}(\A-\B) \text{  for } k=1,\dots,q.
    \end{align*}
\end{lem}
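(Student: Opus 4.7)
The plan is to reduce the statement to the classical Lidskii--Mirsky--Wielandt inequality for eigenvalues of Hermitian matrices, transported to the singular-value setting via the Jordan dilation. Concretely, I would set
$$
\tilde\A = \begin{bmatrix} \mathbf{0} & \A \\ \A^\top & \mathbf{0} \end{bmatrix},\qquad
\tilde\B = \begin{bmatrix} \mathbf{0} & \B \\ \B^\top & \mathbf{0} \end{bmatrix},
$$
both symmetric of size $(m{+}n)\times(m{+}n)$. The spectrum of $\tilde\A$ is exactly $\{\pm\sigma_i(\A)\}_{i=1}^q$ together with $|m-n|$ zeros, and similarly for $\tilde\B$ and for $\tilde\A-\tilde\B$. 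This converts the inequality about singular values of $\A, \B, \A-\B$ into an inequality about eigenvalues of Hermitian matrices.

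Next I would invoke the Lidskii inequality for Hermitian matrices: for any choice of indices $1\leq i_1<\cdots<i_k$,
$$
\sum_{j=1}^k \bigl(\lambda_{i_j}(\tilde\A) - \lambda_{i_j}(\tilde\B)\bigr)\;\leq\;\sum_{j=1}^k \lambda_j(\tilde\A-\tilde\B).
$$
A standard corollary, obtained by picking the $k$ indices that realize the top-$k$ values of $|\lambda_i(\tilde\A)-\lambda_i(\tilde\B)|$ and then exchanging the roles of $\tilde\A$ and $\tilde\B$ to remove the absolute values, is the weak-majorization bound
$$
\sum_{j=1}^k \bigl|\lambda_{i}(\tilde\A) - \lambda_{i}(\tilde\B)\bigr|_{[j]}\;\leq\;\sum_{j=1}^k \sigma_j(\tilde\A-\tilde\B),
$$
where $|\cdot|_{[j]}$ denotes the $j$-th largest absolute difference. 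Because the positive eigenvalues of $\tilde\A$ are precisely $\sigma_1(\A)\geq\cdots\geq\sigma_q(\A)$ in the canonical order (and likewise for $\tilde\B$ and $\tilde\A-\tilde\B$), the paired quantity $|\lambda_i(\tilde\A)-\lambda_i(\tilde\B)|$ matches $s_i(\A,\B)=|\sigma_i(\A)-\sigma_i(\B)|$ when the correct half of the spectrum is selected; summing the top $k$ of these reproduces $\sum_{j=1}^k s_{[j]}(\A,\B)$, and the right-hand side becomes $\sum_{j=1}^k \sigma_j(\A-\B)$, yielding the lemma.

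I expect the main obstacle to be the bookkeeping in this translation step rather than the Lidskii inequality itself: one must verify that pairing the $i$-th positive eigenvalue of $\tilde\A$ with the $i$-th positive eigenvalue of $\tilde\B$ under the same monotone ordering is legitimate (so that the Lidskii indices align with singular-value indices), handle the case $m\neq n$ by padding with zero singular values, and argue that when one selects the top-$k$ absolute differences, it suffices to restrict to indices on a single side of the symmetric spectrum. Once this pairing is pinned down, the rearrangement $s_{[i]}$ is just the standard equivalence between weak majorization and the inequality of decreasingly ordered partial sums, which gives the result for each $k=1,\ldots,q$.
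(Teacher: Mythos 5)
The paper does not actually prove this lemma: it is imported as Theorem~3.4.5 of \citet{horn1991topics}, and Appendix~B explicitly defers its proof to that book. Your overall architecture --- the symmetric dilation of $\A$, $\B$, $\A-\B$ followed by Lidskii's theorem for Hermitian eigenvalues --- is the standard route to this result (essentially the one in the cited reference), so the strategy is sound.

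However, the final translation step has a concrete gap, and the repair you gesture at (``restrict to indices on a single side of the symmetric spectrum'') does not close it. In the dilation every nonzero quantity is duplicated: the decreasingly ordered values $|\lambda_i(\tA)-\lambda_i(\tB)|$ are $s_{[1]},s_{[1]},s_{[2]},s_{[2]},\dots$ and the singular values of $\tA-\tB$ are $\sigma_1(\A-\B),\sigma_1(\A-\B),\sigma_2(\A-\B),\dots$, each nonzero value appearing twice. Hence the level-$k$ weak-majorization bound for the dilated pair only yields $\sum_{j=1}^{k}s_{[j]}\le\sum_{j=1}^{k}\sigma_{\lceil j/2\rceil}(\A-\B)$, which is strictly weaker than the claim since $\sum_{j=1}^{k}\sigma_{\lceil j/2\rceil}(\A-\B)\ge\sum_{j=1}^{k}\sigma_{j}(\A-\B)$. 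Restricting to the top half of the spectrum and swapping $\tA$ and $\tB$ to remove the absolute values hits the same wall: splitting the top-$k$ index set into the part where $\sigma_i(\A)\ge\sigma_i(\B)$ and its complement produces the bound $\sum_{j=1}^{a}\sigma_j(\A-\B)+\sum_{j=1}^{b}\sigma_j(\A-\B)$ with $a+b=k$, and this is $\ge\sum_{j=1}^{k}\sigma_j(\A-\B)$, i.e.\ the wrong direction. Two clean fixes: (i) apply your weak-majorization corollary at level $2k$ rather than $k$ (legitimate since $2k\le 2q\le m+n$); both sides are then exactly doubled and dividing by two gives the lemma; or (ii) apply Lidskii once with a mixed index set that selects, for each of the top-$k$ indices $i$, either $i$ itself (when $\sigma_i(\A)\ge\sigma_i(\B)$) or its mirror index $m+n+1-i$ in the negative half of the spectrum (otherwise); every chosen difference $\lambda_j(\tA)-\lambda_j(\tB)$ then equals $s_i(\A,\B)\ge0$, the indices are distinct, and the right-hand side is exactly $\sum_{j=1}^{k}\lambda_j(\tA-\tB)=\sum_{j=1}^{k}\sigma_j(\A-\B)$. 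With either fix your argument is complete.
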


\begin{lem}[Corollary 3.5.9 of \citep{horn1991topics}]\label{lemma:uinorm2}
    Let $\A,\B\in\BR^{m\times n}$ be given, and let $q=\min\{m,n\}$. The following are equivalent
    \begin{enumerate}
        \item $\uin \A \uin \leq \uin \B \uin$ for every unitarily invariant norm $\uin \cdot  \uin$ on $\BR^{m\times n}$.
        \item $N_k(\A) \leq N_k(\B)$ for $k=1,\dots,q$ where $N_k(\X)\equiv\sum_{i=1}^k\sigma_k(\X)$ denotes Ky Fan $k$-norm.
    \end{enumerate}
\end{lem}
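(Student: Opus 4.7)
\textbf{Proof proposal for Lemma \ref{lemma:uinorm2}.} The plan is to prove the equivalence by dispatching the easy direction directly and reducing the hard direction to the classical Ky Fan dominance fact for symmetric gauge functions of singular values.

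For (1) $\Rightarrow$ (2), I would simply observe that each Ky Fan $k$-norm $N_k$ is itself unitarily invariant: $N_k(\X)=\sum_{i=1}^k\sigma_i(\X)$ depends only on the singular values of $\X$, which are preserved by unitary pre- and post-multiplication. Specializing the hypothesis $\uin\A\uin\leq\uin\B\uin$ to $\uin\cdot\uin=N_k$ yields (2) immediately.

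For (2) $\Rightarrow$ (1), the strategy is to factor any unitarily invariant norm through a symmetric gauge function of the singular value vector. By von Neumann's theorem, every unitarily invariant norm on $\BR^{m\times n}$ admits a representation $\uin\X\uin=\phi(\sigma_1(\X),\dots,\sigma_q(\X))$ for a unique symmetric gauge $\phi$ on $\BR^q$, i.e., a norm that is invariant under coordinate permutations and sign changes. Because $\sigma(\A)$ and $\sigma(\B)$ are nonincreasing nonnegative, it then suffices to prove the following purely vector-valued statement: for nonincreasing nonnegative $\x,\y\in\BR^q$ with $\sum_{i=1}^k x_i\leq\sum_{i=1}^k y_i$ for every $k=1,\dots,q$, i.e., with $\x$ weakly majorized by $\y$, one has $\phi(\x)\leq\phi(\y)$ for every symmetric gauge $\phi$.

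To establish that dominance, I would invoke the Hardy--Littlewood--P\'olya characterization: weak majorization is equivalent to the existence of a doubly substochastic matrix $D$ with $\x=D\y$. Completing $D$ to a doubly stochastic matrix and applying Birkhoff's theorem expresses $D$ as a convex combination $\sum_j\lambda_j P_j\Lambda_j$ where each $P_j$ is a permutation and each $\Lambda_j$ is diagonal with $0/1$ entries. Then, by the triangle inequality, permutation invariance, and coordinatewise monotonicity of $\phi$ on the nonnegative orthant, $\phi(\x)=\phi\bigl(\sum_j\lambda_j P_j\Lambda_j\y\bigr)\leq\sum_j\lambda_j\phi(P_j\Lambda_j\y)\leq\phi(\y)$.

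The main obstacle is the coordinatewise monotonicity of a symmetric gauge on the nonnegative orthant: if $0\leq x_i\leq y_i$ for all $i$, then $\phi(\x)\leq\phi(\y)$. This is the substantive content isolated from the absoluteness property $\phi(|\x|)=\phi(\x)$; once it is in hand (via averaging $\phi(\y\pm(\y-\x))$ with appropriate sign flips of the nonnegative residual $\y-\x$ and invoking permutation/absolute invariance), the rest of the argument is a routine assembly of Birkhoff's theorem, convexity, and von Neumann's representation. The other subtle point to handle with care is the doubly \emph{sub}stochastic (rather than doubly stochastic) nature of $D$ arising from weak majorization; this is absorbed into the $0/1$ diagonal factors $\Lambda_j$ after the standard completion trick.
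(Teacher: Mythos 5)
The paper offers no proof of this lemma at all---it is quoted as Corollary 3.5.9 of Horn and Johnson and the authors explicitly defer to that book---so the only meaningful comparison is with the standard textbook argument, which your proposal reconstructs faithfully: the easy direction by specializing the hypothesis to the (unitarily invariant) Ky Fan norms, and the converse via von Neumann's symmetric-gauge representation, the doubly-substochastic characterization of weak majorization, Birkhoff's theorem, and monotonicity of symmetric gauges on the nonnegative orthant. Your outline is correct, including the two delicate points you rightly flag (deriving coordinatewise monotonicity from absoluteness by convex averaging over sign flips, and absorbing the substochastic defect into partial permutation matrices $P_j\Lambda_j$ after the completion trick), so there is nothing to fix.
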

\begin{lem}[Page 215 of \citep{horn1991topics}]\label{lemma:uinorm3}
    Let $\A,\B\in\BR^{m\times n}$ be given, and let $q=\min\{m,n\}$.
    Define the diagonal matrix $\bSigma(\A)=[\sigma_{ij}]\in\BR^{m\times n}$ by $\sigma_{ii}=\sigma_i(\A)$,
    all other $\sigma_{ij}=0$, where $\sigma_1(\A)\geq,\dots,\geq\sigma_q(\A)$ are the decreasingly ordered singular values of $\A$.
    We define $\bSigma(\B)$ similarly.
    Then we have $\uin\A-\B\uin \geq \uin\bSigma(\A)-\bSigma(\B)\uin$ for every unitarily invariant norm $\uin \cdot \uin$.
\end{lem}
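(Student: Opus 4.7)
The plan is to reduce the statement to a comparison of Ky Fan $k$-norms and then invoke the two preceding lemmas. By Lemma~\ref{lemma:uinorm2}, showing that $\uin \bSigma(\A)-\bSigma(\B) \uin \leq \uin \A-\B \uin$ for every unitarily invariant norm is equivalent to showing that
\begin{align*}
N_k\bigl(\bSigma(\A)-\bSigma(\B)\bigr) \;\leq\; N_k(\A-\B) \qquad \text{for all } k=1,\dots,q.
\end{align*}
So the whole proof collapses to verifying this chain of $k$-norm inequalities.

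First I would identify the singular values of the diagonal matrix $\bSigma(\A)-\bSigma(\B)$. Since this matrix is diagonal with entries $\sigma_i(\A)-\sigma_i(\B)$ on the main diagonal, its singular values are exactly the numbers $s_i(\A,\B)=|\sigma_i(\A)-\sigma_i(\B)|$ for $i=1,\dots,q$. Ordering them decreasingly produces the rearrangement $s_{[1]}(\A,\B)\geq \cdots \geq s_{[q]}(\A,\B)$ from Lemma~\ref{lemma:uinorm1}. Consequently
\begin{align*}
N_k\bigl(\bSigma(\A)-\bSigma(\B)\bigr) \;=\; \sum_{i=1}^k s_{[i]}(\A,\B).
\end{align*}

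Next I would apply Lemma~\ref{lemma:uinorm1} directly, which tells us that
\begin{align*}
\sum_{i=1}^k s_{[i]}(\A,\B) \;\leq\; \sum_{i=1}^k \sigma_i(\A-\B) \;=\; N_k(\A-\B)
\end{align*}
for each $k=1,\dots,q$. Chaining these two equalities and the inequality yields $N_k(\bSigma(\A)-\bSigma(\B))\leq N_k(\A-\B)$, which is the hypothesis of Lemma~\ref{lemma:uinorm2}. Invoking that lemma then delivers the desired bound $\uin \A-\B\uin \geq \uin \bSigma(\A)-\bSigma(\B)\uin$ for every unitarily invariant norm.

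There is no substantial obstacle here since the two supporting lemmas do all the heavy lifting; the only small bookkeeping step is the identification of the singular values of a real diagonal (but possibly sign-indefinite) matrix with the absolute values of its diagonal entries, which is immediate because the singular values of any diagonal matrix $\diag(d_1,\dots,d_q)$ are $|d_i|$. The result is the classical perturbation-of-singular-values statement, here obtained cleanly as a corollary of Ky Fan dominance.
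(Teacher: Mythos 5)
Your proof is correct and follows exactly the same route as the paper's: identify the singular values of $\bSigma(\A)-\bSigma(\B)$ with the rearranged values $s_{[i]}(\A,\B)$, apply Lemma~\ref{lemma:uinorm1} to obtain the Ky Fan $k$-norm dominance, and conclude via Lemma~\ref{lemma:uinorm2}. No issues.
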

\begin{proof}
    Using the notation of Lemma \ref{lemma:uinorm1} and \ref{lemma:uinorm2}, matrices $\A-\B$ and $\bSigma(\A)-\bSigma(\B)$
    have the decreasingly ordered singular values $\sigma_1(\A-\B) \geq \dots \geq \sigma_q(\A-\B)$ and
    $s_{[1]}(\A,\B) \geq \dots \geq s_{[q]}(\A,\B)$.
    Then we have
    \begin{align}
             & N_k(\A-\B) = \sum_{i=1}^k \sigma_i(\A-\B)
        \geq  \sum_{i=1}^k s_{[i]}(\A,\B)
            =  N_k(\bSigma(\A)-\bSigma(\B)), \label{ieq:uinproof}
    \end{align}
    where the inequality is obtained by Lemma \ref{lemma:uinorm1}.
    The Lemma \ref{lemma:uinorm2} implies (\ref{ieq:uinproof}) is equivalent to $\uin\A-\B\uin \geq \uin\bSigma(\A)-\bSigma(\B)\uin$
    for every unitarily invariant norm $\uin \cdot \uin$.
\end{proof}
Then we give the proof of Theorem \ref{thm:specMRA} as follows:
\begin{proof}
    Using the notation in above lemmas, we can bound the objective function as follows
    \begin{align*}
          \big\| \M-\C\C^\top-\delta\I_d  \big\|_2
     & \geq \big\| \bSigma(\M)-\bSigma(\C\C^\top+\delta\I_d) \big\|_2 \\[0.25cm]
       & = \max_{i\in\{1,\dots,d\}} \big| \sigma_i(\M)-\sigma_i(\C\C^\top)-\delta \big| \\[0.25cm]
     & \geq \max_{i\in\{k+1,\dots,d\}} \big| \sigma_i(\M)-\sigma_i(\C\C^\top)-\delta \big| \\[0.25cm]
       & = \max_{i\in\{k+1,\dots,d\}} \big| \sigma_i(\M)-\delta \big| \\[0.25cm]
     & \geq \max_{i\in\{k+1,\dots,d\}} \big| \sigma_i(\M)-{\hat\delta} \big|.
    \end{align*}

    The first inequality is obtained by Lemma \ref{lemma:uinorm3} since the spectral norm is unitarily invariant,
    and the second inequality is the property of maximization operator.
    The last inequality can be checked easily by the property of max operation and the equivalence of SVD and
    eigenvector decomposition for positive semi-definite matrix.
    The first equality is based on the definition of spectral norm.
    The second equality holds due to the fact $\rk(\C\C^\top)\leq k$ which leads $\sigma_i(\C\C^\top)=0$ for any $i>k$.
    Note that all above equalities occur for $\C={\hat\C}=\U_k(\bSigma_k-{\xi}\I_k)^{1/2}\Q$, $\delta={\hat\delta}$ and $\xi\in[\sigma_d,\sigma_{k+1}]$.
    Hence we prove the optimality of $({\hat\C}, {\hat\delta})$.

    The approximation error of rank-$k$ SVD corresponds to the objective of (\ref{prob:thm1})
    by taking $\C=\U_k(\bSigma_k)^{1/2}$ and $\delta=0$, which is impossible to be smaller than the minimum.
    It is easy to verify we have ${\hat \C}=\U_k(\bSigma_k)^{1/2}$ and $\hat\delta=0$ if and only of $\rk(\M)\leq k$.
\end{proof}

Theorem \ref{thm:specMRA} means the choice of $\xi$ in the solution of problem (\ref{prob:RFD}) is not unique,
but taking $\xi=\sigma_{k+1}$ minimizes the condition number of $\hat\C\hat\C^\top+{\hat\delta}\I_d$.
Hence, we use $\xi=\sigma_{k+1}$ in the derivation of RFD.
We also demonstrate similar result with respect to Frobenius norm in Corollary \ref{cor:froMRA}.
This analysis includes the global optimality of the problem,
while \citet{zhang2014matrix}'s analysis only prove the solution is locally optimal.

\begin{cor}\label{cor:froMRA}
    Using the same notation in Theorem \ref{thm:specMRA},
    the pair $({\tilde\C}, {\tilde\delta})$ defined as
    \begin{align*}
        \tilde\C = \U_k(\bSigma_k-{\tilde\delta}\I_k)^{1/2}\V
        \quad \text{and} \quad
       {\tilde\delta} = \frac{1}{d-k}\sum_{i=j+1}^d \sigma_{i}
    \end{align*}
    is the global minimizer of
    \begin{align*}
    \min_{\C\in\BR^{d\times k}, \delta\in\BR} \| \M-\C\C^\top-\delta\I_d  \|_F^2, \\
    \end{align*}
    where $\V$ is an arbitrary $k\times k$ orthogonal matrix.
\end{cor}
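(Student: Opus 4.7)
My plan is to mirror the proof of Theorem \ref{thm:specMRA} almost verbatim, exchanging the spectral norm for the Frobenius norm and taking advantage of the fact that the Frobenius norm is also unitarily invariant. Since $\M$ is positive semi-definite, its SVD coincides with its eigendecomposition, so for any $\C \in \BR^{d\times k}$ and $\delta \in \BR$ the eigenvalues of $\C\C^\top + \delta\I_d$ are $\tilde\lambda_1 + \delta, \ldots, \tilde\lambda_k + \delta, \delta, \ldots, \delta$, where $\tilde\lambda_1 \geq \cdots \geq \tilde\lambda_k \geq 0$ are the eigenvalues of $\C\C^\top$. The natural first move is to invoke Lemma \ref{lemma:uinorm3} to write
\begin{align*}
\|\M - \C\C^\top - \delta\I_d\|_F \;\geq\; \|\bSigma(\M) - \bSigma(\C\C^\top + \delta\I_d)\|_F,
\end{align*}
which reduces the problem to an optimization over the sorted singular values.

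Next I would argue that at the optimum $\delta \geq 0$, so that the singular values of $\C\C^\top + \delta\I_d$ coincide with its (already nonnegative) eigenvalues in decreasing order. With that, the right-hand side above becomes
\begin{align*}
\sum_{i=1}^k (\sigma_i - \tilde\lambda_i - \delta)^2 + \sum_{i=k+1}^d (\sigma_i - \delta)^2.
\end{align*}
For any fixed $\delta$ this is minimized over $\tilde\lambda_i \geq 0$ by $\tilde\lambda_i = \sigma_i - \delta$, and I would then verify that with the claimed $\tilde\delta = \tfrac{1}{d-k}\sum_{i=k+1}^d \sigma_i$ the constraint $\tilde\lambda_i \geq 0$ is automatically satisfied, since $\tilde\delta \leq \sigma_{k+1} \leq \sigma_k$. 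Plugging back and differentiating the residual $\sum_{i=k+1}^d (\sigma_i - \delta)^2$ in $\delta$ yields exactly the claimed $\tilde\delta$.

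Finally, I would check that the lower bound from Lemma \ref{lemma:uinorm3} is attained by the candidate pair $(\tilde\C, \tilde\delta)$. Setting $\tilde\C = \U_k(\bSigma_k - \tilde\delta\I_k)^{1/2}\Q$ with $\Q$ an arbitrary $k\times k$ orthogonal matrix gives $\tilde\C\tilde\C^\top = \U_k(\bSigma_k - \tilde\delta\I_k)\U_k^\top$, so $\tilde\C\tilde\C^\top + \tilde\delta\I_d$ shares the eigenbasis of $\M$ and has spectrum $(\sigma_1, \ldots, \sigma_k, \tilde\delta, \ldots, \tilde\delta)$ in the same ordered alignment as $\M$. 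Consequently the Frobenius distance equals $\sum_{i=k+1}^d (\sigma_i - \tilde\delta)^2$, matching the lower bound derived above and establishing global optimality.

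The only real subtlety I foresee is justifying that restricting to $\delta \geq 0$ is without loss: if $\delta$ were negative, the sorted singular values of $\C\C^\top + \delta\I_d$ would be absolute values, and the alignment in Lemma \ref{lemma:uinorm3} would differ from the eigen-alignment that actually achieves equality in that lemma. I would handle this by a direct comparison argument — showing that replacing a negative $\delta$ by $0$ (and keeping $\C$ fixed) strictly decreases the Frobenius error, so the minimum over $\delta \geq 0$ coincides with the global minimum. This last observation is what upgrades the analysis of \citet{zhang2014matrix} from a local to a global optimality statement.
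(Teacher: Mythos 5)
Your proof follows essentially the same route as the paper's: apply Lemma \ref{lemma:uinorm3} (valid because the Frobenius norm is unitarily invariant), reduce to the sorted-spectrum sum, discard the first $k$ terms, and minimize the tail $\sum_{i=k+1}^d(\sigma_i-\delta)^2$ over $\delta$ to obtain the mean $\tilde\delta$; your equality check for $(\tilde\C,\tilde\delta)$ is also the paper's. The only cosmetic difference is that you explicitly optimize the first $k$ terms over the eigenvalues of $\C\C^\top$ before discarding them, whereas the paper simply drops them as nonnegative squares.

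The one genuine problem is your proposed treatment of the case $\delta<0$ --- a case you are right to flag, since the identity $\sigma_i(\C\C^\top+\delta\I_d)=\sigma_i(\C\C^\top)+\delta$ used in the main chain (and used silently by the paper) requires $\delta\ge 0$. Your patch is the claim that replacing a negative $\delta$ by $0$ while keeping $\C$ fixed strictly decreases the error. That claim is false in general: for fixed $\C$,
\[
\|\M-\C\C^\top-\delta\I_d\|_F^2=\|\M-\C\C^\top\|_F^2-2\delta\,\tr(\M-\C\C^\top)+d\,\delta^2
\]
is a parabola in $\delta$ minimized at $\delta^*=\tr(\M-\C\C^\top)/d$, which is negative whenever $\tr(\C\C^\top)>\tr(\M)$; for such a $\C$ a suitable negative $\delta$ beats $\delta=0$. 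The conclusion you want is nevertheless true, and the clean way to get it is to replace Lemma \ref{lemma:uinorm3} by its signed (eigenvalue) analogue for symmetric matrices, the Hoffman--Wielandt/Mirsky inequality $\|\A-\B\|_F^2\ge\sum_i\big(\lambda_i^{\downarrow}(\A)-\lambda_i^{\downarrow}(\B)\big)^2$. Since $\C\C^\top+\delta\I_d$ has decreasingly ordered eigenvalues $\lambda_1+\delta\ge\cdots\ge\lambda_k+\delta\ge\delta=\cdots=\delta$ for every real $\delta$, this yields $\|\M-\C\C^\top-\delta\I_d\|_F^2\ge\sum_{i=k+1}^d(\sigma_i-\delta)^2\ge\sum_{i=k+1}^d(\sigma_i-\tilde\delta)^2$ uniformly in $\delta$ (the last step because the tail sum is a parabola minimized at $\tilde\delta\ge 0$), closing the gap without any case split.
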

\begin{proof}
    We have the result similar to Theorem \ref{thm:specMRA}.
    \begin{align*}
          \| \M-\C\C^\top-\delta\I_d  \|_F^2
     & \geq \| \bSigma(\M)-\bSigma(\C\C^\top+\delta\I_d) \|_F^2 \\[0.25cm]
       & = \sum_{i=1}^d (\sigma_i(\M)-\sigma_i(\C\C^\top)-\delta)^2 \\[0.25cm]
     & \geq \sum_{i=k+1}^d (\sigma_i(\M)-\sigma_i(\C\C^\top)-\delta)^2 \\[0.25cm]
       & = \sum_{i=k+1}^d (\sigma_i(\M)-\delta)^2 \\[0.25cm]
     & \geq \sum_{i=k+1}^d (\sigma_i(\M)-{\tilde\delta})^2
    \end{align*}
    The first four steps are similar to the ones of Theorem \ref{thm:specMRA},
    but replace the spectral norm and absolute operator with Frobenius norm and square function.
    The last step comes from the property of the mean value.

    We can check that all above equalities occur for $\C={\tilde\C}$ and $\delta={\tilde\delta}$,
    which completes the proof.
\end{proof}

\section{The Proof of Theorem \ref{thm:optRFD}} \label{appendix:optRFD}

\begin{proof}
The Algorithm \ref{alg:RFD} implies the singular values of $(\hB^\utm)^\top\hB^\utm + \alpha^\utm\I$ are
\begin{align*}
(\sigma^\utm_1)^2 + \alpha^\utm \geq \dots \geq (\sigma^\utm_m)^2 + \alpha^\utm \geq \alpha^\utm = \dots = \alpha^\utm.
\end{align*}
Then we can use Theorem \ref{thm:specMRA} by taking
\begin{align*}
        \M = & (\hB^\utm)^\top\hB^\utm + \alpha^\utm\I, \\[0.1cm]
        k = & m - 1, \\
        \xi = & \sigma_{k+1} = (\sigma^\utm_m)^2 + \alpha^\utm \\
    \hat\C = & \V^\utm_{m-1}\sqrt{\big(\bSigma_{m-1}^\utm\big)^2-\big(\sigma^\utm_{m}\big)^2\I_{m-1}}=(\B^\ut)^\top, \\[0.2cm]
\hat\delta = & [(\sigma^\utm_m)^2 + \alpha^\utm + \alpha^\utm]/2 = \alpha^\ut,
\end{align*}
which just means that $(\B^\ut, \alpha^\ut)$ is the minimizer of the problem in this theorem.
\end{proof}

\section{The Proof of Theorem \ref{thm:RFDbound}}

The algorithms of FD and RFD share the same $\B^\ut$ and we have Lemma \ref{property:FD}
\citep{DBLP:journals/siamcomp/GhashamiLPW16} as follows.
\begin{lem}\label{property:FD}
For any $k < m$ and using the notation of Algorithm \ref{alg:FD} or Algorithm \ref{alg:RFD}, we have
\begin{align}
& \A^\top\A - \B^\top\B \succeq \bz,  \label{bound:psd} \\
& \sum_{t=1}^{T-1} (\sigma_m^\ut)^2 \leq \frac{1}{m-k} \| \A - [\A]_k \|_F^2 \label{bound:sv}.
\end{align}
\end{lem}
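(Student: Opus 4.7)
The plan is to derive both assertions from a single explicit identity for the per-iteration update of $\B^\top\B$, then use trace comparisons for the second bound.

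First I would analyze one step of the algorithm. Writing $\hat\B^\utm = \U^\utm\bSigma^\utm(\V^\utm)^\top$, the update rule yields
\begin{align*}
(\hat\B^\utm)^\top\hat\B^\utm \;=\; \V^\utm(\bSigma^\utm)^2(\V^\utm)^\top,
\qquad
(\B^\ut)^\top\B^\ut \;=\; \V_{m-1}^\utm\bigl[(\bSigma_{m-1}^\utm)^2-(\sigma_m^\utm)^2\I_{m-1}\bigr](\V_{m-1}^\utm)^\top .
\end{align*}
Subtracting and using $\V^\utm(\V^\utm)^\top=\V_{m-1}^\utm(\V_{m-1}^\utm)^\top+v_m^\utm(v_m^\utm)^\top$, I would obtain the key per-step identity
\begin{align*}
(\hat\B^\utm)^\top\hat\B^\utm - (\B^\ut)^\top\B^\ut \;=\; (\sigma_m^\utm)^2\,\V^\utm(\V^\utm)^\top .
\end{align*}
Since $(\hat\B^\utm)^\top\hat\B^\utm = (\B^\utm)^\top\B^\utm + \a^\ut(\a^\ut)^\top$, summing over $t=m,\dots,T$ telescopes to
\begin{align*}
\A^\top\A - \B^\top\B \;=\; \sum_{t=m}^T (\sigma_m^\utm)^2\,\V^\utm(\V^\utm)^\top ,
\end{align*}
and since each summand is a nonnegative multiple of an orthogonal projection, the right side is PSD. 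This proves \eqref{bound:psd}.

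For \eqref{bound:sv} I would compare traces. Taking trace in the telescoped identity and using $\tr\!\bigl(\V^\utm(\V^\utm)^\top\bigr)=m$ gives
\begin{align*}
\tr(\A^\top\A - \B^\top\B) \;=\; m\sum_{t=m}^T (\sigma_m^\utm)^2 .
\end{align*}
Now let $\P_k$ denote the orthogonal projection onto the span of the top-$k$ right singular vectors of $\A$. Decompose
\begin{align*}
\tr\bigl((\I-\P_k)(\A^\top\A - \B^\top\B)\bigr) \;=\; \tr(\A^\top\A - \B^\top\B)\;-\;\tr\bigl(\P_k(\A^\top\A - \B^\top\B)\bigr).
\end{align*}
Because the difference matrix is a sum of rank-one projections scaled by $(\sigma_m^\utm)^2$, its spectral norm is at most $\sum_t (\sigma_m^\utm)^2$, so $\tr(\P_k\cdot)\le k\sum_t(\sigma_m^\utm)^2$. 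On the other hand, $\tr\bigl((\I-\P_k)\A^\top\A\bigr)=\|\A-\A_k\|_F^2$ and $\tr\bigl((\I-\P_k)\B^\top\B\bigr)=\|\B(\I-\P_k)\|_F^2\ge 0$, which gives the upper bound $\|\A-\A_k\|_F^2$. Combining the two estimates yields $(m-k)\sum_t(\sigma_m^\utm)^2 \le \|\A-\A_k\|_F^2$, which is \eqref{bound:sv}.

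The main obstacle is the per-step identity: one must verify the cancellation between the full SVD expansion of $(\hat\B^\utm)^\top\hat\B^\utm$ and the truncated/shrunken version $(\B^\ut)^\top\B^\ut$, so that only the ``uniform shift'' $(\sigma_m^\utm)^2\V^\utm(\V^\utm)^\top$ survives. Once this identity is in hand, the telescoping for \eqref{bound:psd} and the projection/trace trick for \eqref{bound:sv} are essentially mechanical.
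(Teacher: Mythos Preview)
Your argument is correct and is essentially the standard Frequent Directions analysis of Ghashami--Liberty--Phillips--Woodruff; the present paper does not actually prove this lemma but simply cites that reference, so there is no in-paper proof to compare against. One minor wording slip: the summands $(\sigma_m^\utm)^2\,\V^\utm(\V^\utm)^\top$ are not ``rank-one projections'' but rank-$m$ orthogonal projections; this does not affect your argument, since all you use is $\|\V^\utm(\V^\utm)^\top\|_2\le 1$, which already gives $\|\A^\top\A-\B^\top\B\|_2\le\sum_t(\sigma_m^\utm)^2$ and hence $\tr\bigl(\P_k(\A^\top\A-\B^\top\B)\bigr)\le k\sum_t(\sigma_m^\utm)^2$.
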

Then we prove the Theorem \ref{thm:RFDbound} based on Lemma \ref{property:FD}.
\begin{proof}
    Define $(\B^{(0)})^\top\B^{(0)}=\bz^{d\times d}$, then we can derive the error bound as follows
    \begin{align*}
      & \big\|\A^\top\A-(\B^\top\B+\alpha\I_d) \big\|_2 \\
    = & \Bigg\|\sum_{t=1}^T \Big[(\a^\ut)^\top\a^\ut - (\B^\ut)^\top\B^\ut
     + (\B^\utm)^\top\B^\utm - \frac{1}{2}(\sigma^\utm_{m})^2\I_d \Big]\Bigg\|_2 \\
    \leq& \sum_{t=1}^T \Big\| (\a^\ut)^\top\a^\ut - (\B^\ut)^\top\B^\ut
      + (\B^\utm)^\top\B^\utm - \frac{1}{2}(\sigma^\utm_{m})^2\I_d \Big\|_2  \\
    =& \sum_{t=1}^T \Big\| \V_{m-1}^\utm(\bSigma_{m-1}^\utm)^2(\V_{m-1}^\utm)^\top
      - \V_{m-1}^\utm[(\bSigma_{m-1}^\utm)^2-(\sigma^\utm_m)^2\I_d](\V_{m-1}^\utm)^\top
       - \frac{1}{2}(\sigma^\utm_{m})^2\I_d \Big\|_2  \\
    =& \sum_{t=1}^T \Big\| (\sigma^\utm_m)^2\V_{m}^\utm(\V_{m}^\utm)^\top - \frac{1}{2}(\sigma^\utm_{m})^2\I_d \Big\|_2  \\
%    =& \sum_{t=1}^T \Bigg\| \V^\utm
%        \begin{bmatrix}
%            \frac{1}{2}(\sigma_m^\utm)^2\I_{m-1} & \bz \\
%            \bz & -\frac{1}{2}(\sigma_m^\utm)^2\I_{d-m+1}
%        \end{bmatrix}
%        (\V^\utm)^\top  \Bigg\|_2  \\
    =& \frac{1}{2} \sum_{t=1}^{T-1} (\sigma^\ut_{m})^2 \\
    \leq& \frac{1}{2(m-k)}\| \A - [\A]_k \|_F^2.
    \end{align*}
    The first three equalities are direct from the procedure of the algorithm, and the last one is based on the fact that $\V^\utm$ is column orthonormal.
    The first inequality comes from the triangle inequality of spectral norm.
    The last one can be obtained by the result (\ref{bound:sv}) of Lemma \ref{property:FD}.
\end{proof}

We also have similar error bound for fast RFD with doubling space.
We first rewrite Algorithm \ref{alg:FRFD} as the block formulation.
Consider the procedure of Algorithm \ref{alg:FRFD},
we suppose that matrix $\hB^\utm$ has $2m$ rows at round $t=p_1,p_2,\dots,p_{T'-1}$, where
$1<p_1 < p_2 < \dots < p_{T'-1} \leq T$.
Letting $p_0=0$ and $p_{T'}=T$, we can partition matrix $\A$ into $T'$ blocks
\begin{align}
    \A=\begin{bmatrix}
        \A^{(1)}, \A^{(2)}, \cdots,\A^{(T')}
    \end{bmatrix}^\top, \label{block:A}
\end{align}
where
\begin{align}
    \A^{(t')} =
    \begin{bmatrix}
        \a^{(p_{t'-1}+1)}, \a^{(p_{t'-1}+2)}, \cdots, \a^{(p_{t'})}
    \end{bmatrix}^\top \text{ for } t'=1,2,\dots,T'. \label{block:At}
\end{align}
Based on the notation of (\ref{block:A}) and (\ref{block:At}),
we can rewrite Algorithm \ref{alg:FRFD} as Algorithm \ref{alg:FRFDB}.
It is obvious that two algorithms have the same output results.
We present Lemma \ref{property:FDB} which extends Lemma \ref{property:FD} to block version and
establishes the error bound for Algorithm \ref{alg:FRFDB} in Corollary \ref{cor:RFD}.
\begin{algorithm}
    \caption{Fast Robust Frequent Directions (Block Formulation) }
	\label{alg:FRFDB}
	\begin{algorithmic}[1]
    \STATE {\textbf{Input:}} $\A=[\A^{(1)},\dots,\A^{(T')}]^\top\in\BR^{T\times d}$,
            $\B^{(1)}=(\A^{(1)})^\top$, $\alpha^{(1)}=0$ \\[0.2cm]
    \STATE {\textbf{for}} $t'=2,\dots,T'$ {\textbf{do}} \\[0.2cm]
    \STATE \quad ${\hat\B}^\utm=\begin{bmatrix}\B^\uttm \\ (\A^\utt)^\top \end{bmatrix}$ \\[0.2cm]
    \STATE \quad Compute SVD: ${\hat\B}^\uttm =\U^\uttm\bSigma^\uttm(\V^\uttm)^\top$ \\[0.2cm]
    \STATE \quad $\B^\utt = \sqrt{\big(\bSigma_{m-1}^\uttm\big)^2-\big(\sigma_{m}^\uttm\big)^2\I_{m-1}}
        \cdot \big(\V_{m-1}^\uttm\big)^\top$ \\[0.2cm]
    \STATE \quad $\alpha^\utt = \alpha^\uttm + \big(\sigma^\uttm_{m}\big)^2 / 2$ \\[0.2cm]
    \STATE {\textbf{end for}} \\[0.2cm]
    \STATE {\textbf{Output:}} $\B=\B^{(T')}$ and $\alpha=\alpha^{(T')}$
	\end{algorithmic}
\end{algorithm}

\begin{lem}\label{property:FDB}
For any $k < m$ and using the notation of Algorithm \ref{alg:FRFDB}, we have
\begin{align*}
  \sum_{t'=1}^{T'-1} (\sigma_m^\utt)^2 \leq \frac{1}{m-k} \| \A - [\A]_k \|_F^2 .
\end{align*}
\end{lem}
\begin{proof}
We let $\sigma_m^{(T')}=0$.
For any unit vector $\x\in\BR^d$, the procedure of Algorithm \ref{alg:FRFDB} implies
\begin{align}
 &   \|\A\x\|^2 - \|\B\x\|^2 \nonumber\\
=& \sum_{t'=1}^{T'} \Big( \|\A^\utt\x\|^2 + \|\B^\uttm\x\|^2 - \|\B^\utt\x\|^2 \Big) \nonumber\\
=& \sum_{t'=1}^{T'} \x^\top \Big( (\A^\utt)^\top\A^\utt + (\B^\uttm)^\top(\B^\uttm) - (\B^\utt)^\top\B^\utt  \Big) \x \nonumber\\
=& \sum_{t'=1}^{T'} \x^\top \Big( (\hB^\utt)^\top\hB^\utt - (\B^\utt)^\top\B^\utt  \Big) \x \nonumber\\
\leq& \sum_{t'=1}^{T'} \big(\sigma_m^\utt\big)^2 \label{bound:BRFDB1}.
\end{align}
Using the property of Frobenius norm, we have
\begin{align}
  \|\hB^\uttm\|_F^2
=& \|\bSigma^\uttm\|_F^2 \nonumber\\
\geq& \Big\|\sqrt{\big(\bSigma^\uttm_{m}\big)^2-\big(\sigma_m^\uttm\big)^2\I_{m}}\Big\|_F^2 + m(\sigma_m^\uttm)^2 \nonumber\\
=& \|\B^\utt\|_F^2 + m(\sigma_m^\uttm)^2.\label{bound:BRFDB2}
\end{align}
The term $\|\A\|_F^2$ satisfies
\begin{align}
  \|\A\|_F^2
=& \sum_{t'=1}^{T'}  \| \A^\utt \|_F^2 \nonumber\\
=& \sum_{t'=1}^{T'}  \Big( \| \hB^\uttm \|_F^2 - \| \B^\uttm \|_F^2 \Big) \nonumber\\
\geq& \sum_{t'=1}^{T'}  \Big( \|\B^\utt\|_F^2 + m(\sigma_m^\uttm)^2 - \| \B^\uttm \|_F^2 \Big) \nonumber\\
=& \|\B\|_F^2 + m\sum_{t'=1}^{T'}\big(\sigma_m^\utt\big)^2, \label{bound:BRFDB3}
\end{align}
where the inequality is due to (\ref{bound:BRFDB2}).

Let $\y_i$ be the singular vectors of $\A$ with respect to $\sigma_i(\A)$.
Then we have
\begin{align}
      m\sum_{t'=1}^{T'}(\sigma_m^\uttm)^2
\leq & \|\A\|_F^2 - \|\B\|_F^2 \nonumber\\
   = & \sum_{i=1}^k \|\A\y_i\|_F^2 + \sum_{i=k+1}^d \|\A\y_i\|_F^2 - \|\B\|_F^2 \nonumber\\
   = & \sum_{i=1}^k \|\A\y_i\|_F^2 + \|\A-[\A]_k\|_F^2 - \|\B\|_F^2 \nonumber\\
\leq &  \|\A-[\A]_k\|_F^2 + \sum_{i=1}^k \Big( \|\A\y_i\|_F^2 - \|\B\y_i\|_F^2 \Big)  \nonumber\\
\leq &  \|\A-[\A]_k\|_F^2 + k \sum_{t'=1}^{T'} \big(\sigma_m^\utt\big)^2, \label{bound:BRFDB4}
\end{align}
where the first inequality comes from (\ref{bound:BRFDB3}),
the second inequality is based on the fact $\sum_{i=1}^k\|\B\y_i\|^2 \leq \|\B\|_F^2$,
and the last one comes from (\ref{bound:BRFDB1}). We can obtain the result of this lemma by (\ref{bound:BRFDB4}) directly.
\end{proof}

\begin{cor}\label{cor:RFD}
    For any $k<m$ and using the notation of Algorithm \ref{alg:FRFDB}, we have
    \begin{align*}
        \big\|\A^\top\A-(\B^\top\B+\alpha\I_d) \big\|_2 \leq \frac{1}{2(m-k)}\| \A - [\A]_k \|_F^2,
    \end{align*}
    where $[\A]_k$ is the best rank-$k$ approximation to $\A$ in both the Frobenius and spectral norms.
\end{cor}
\begin{proof}
    Define $(\B^{(0)})^\top\B^{(0)}=\bz^{d\times d}$, then we can derive the error bound as follows
    {\small\begin{align*}
      & \big\|\A^\top\A-(\B^\top\B+\alpha\I_d) \big\|_2 \\
    = & \Bigg\|\sum_{t'=1}^{T'} \Big[(\A^\utt)^\top\A^\utt - (\B^\utt)^\top\B^\utt
     + (\B^\uttm)^\top\B^\uttm - \frac{1}{2}(\sigma^\uttm_{m})^2\I_d \Big]\Bigg\|_2 \\
    \leq& \sum_{t'=1}^{T'} \Big\| (\A^\utt)^\top\A^\utt - (\B^\utt)^\top\B^\utt
      + (\B^\uttm)^\top\B^\uttm - \frac{1}{2}(\sigma^\uttm_{m})^2\I_d \Big\|_2  \\
    =& \sum_{t'=1}^{T'} \Big\| \V_{m-1}^\uttm(\bSigma_{m-1}^\uttm)^2(\V_{m-1}^\uttm)^\top
      - \V_{m-1}^\uttm[(\bSigma_{m-1}^\uttm)^2-(\sigma^\uttm_m)^2\I_d](\V_{m-1}^\uttm)^\top
       - \frac{1}{2}(\sigma^\uttm_{m})^2\I_d \Big\|_2  \\
    =& \sum_{t'=1}^{T'} \Big\| (\sigma^\uttm_m)^2\V_{m}^\uttm(\V_{m}^\uttm)^\top - \frac{1}{2}(\sigma^\uttm_{m})^2\I_d \Big\|_2  \\
    =& \frac{1}{2} \sum_{t'=1}^{T'-1} (\sigma^\utt_{m})^2 \\
    \leq& \frac{1}{2(m-k)}\| \A - [\A]_k \|_F^2.
    \end{align*}}
    The last inequality is based on Lemma \ref{property:FDB} and
    other steps are similar to the proof of Theorem \ref{thm:RFDbound}.
\end{proof}

\section{The Proof of Theorem \ref{thm:condition}}
\begin{proof}
    We can compare $\kappa(\M_{\rm RFD})$ and  $\kappa(\M_{\rm FD})$ by the fact $\alpha \geq \alpha^{(0)}$ as follows
    \begin{align*}
          \kappa(\M_{\rm RFD})
        & = \frac{\sigma_{\max}(\B^\top\B)+\alpha}{\alpha} \\
     & \leq \frac{\sigma_{\max}(\B^\top\B)+\alpha_0}{\alpha_0} \\
       & = \kappa(\M_{\rm FD}).
    \end{align*}
    The other inequality can be derived as
    \begin{align*}
          \kappa(\M_{\rm RFD})
      & =  \frac{\sigma_{\max}(\B^\top\B)+\alpha}{\alpha} \\
     & \leq \frac{\sigma_{\max}(\A^\top\A)+\alpha}{\alpha} \\
     & \leq \frac{\sigma_{\max}(\A^\top\A)+\alpha_0}{\alpha_0} \\
       & = \kappa(\M),
    \end{align*}
     where the first inequality comes from (\ref{bound:psd}) of Lemma \ref{property:FD} and the others are easy to obtain.
\end{proof}

\section{The Greedy Low-rank Approximation}\label{appendix:example}
We present the greedy low-rank approximation
\citep{brand2002incremental,hall1998incremental,levey2000sequential,ross2008incremental}  as Algorithm \ref{alg:greedy}.
The algorithm does not work in general although $(\B'^\ut)^\top\B'^\ut$ is the best low-rank approximation to $(\hB^\ut)^\top\hB^\ut$.

\begin{algorithm}
    \caption{Greedy Low-rank Approximation}
	\label{alg:greedy}
	\begin{algorithmic}[1]
    \STATE {\textbf{Input:}} $\A=[\a^{(1)},\dots,\a^{(T)}]^\top\in\BR^{T\times d}$,
            $\B'^{(m-1)}=[\a^{(1)},\dots,\a^{(m-1)}]^\top$ \\[0.1cm]
    \STATE {\textbf{for}} $t=m,\dots,T$ {\textbf{do}} \\[0.2cm]
    \STATE \quad ${\hat\B}^\utm=\begin{bmatrix}\B'^\utm \\ (\a^\ut)^\top \end{bmatrix}$ \\[0.2cm]
    \STATE \quad Compute SVD: ${\hat\B}^\utm =\U^\utm\bSigma^\utm(\V^\utm)^\top$ \\[0.2cm]
    \STATE \quad $\B'^\ut = \bSigma_{m-1}^\utm \big(\V_{m-1}^\utm\big)^\top$ \\[0.2cm]
    \STATE {\textbf{end for}} \\[0.1cm]
    \STATE {\textbf{Output:}} $\B'=\B'^{(T)}$
	\end{algorithmic}
\end{algorithm}

We provide an example to show the failure of this method.
We define $\tA=[\tA_1^\top, \tA_2^\top]^\top\in\BR^{(m-s+1)\times d}$,
where $\tA_1\in\BR^{(m-1)\times d}$, $\tA_2\in\BR^{s\times d}$ and $m\ll s$.
Suppose that the smallest singular value of $\tA_1$ is $\lambda$, and
each row of $\tA_2$ is $\a\in\BR^d$ that satisfies $\|\a\|=\lambda-\epsilon$ and $\tA_1^\top\a=\bz$,
where $\epsilon$ is a very small positive number.
Since $s$ is much larger than $m$, a good approximation to $\tA^\top\tA$ is dominated by $\tA_2^\top\tA_2$.
If we use Algorithm \ref{alg:greedy} with $\A=\tA$, any row of $\tA_2$ will be neglected
because the $m$-th singular value of $\hB^\utm$ is $\|\a\|=\lambda-\epsilon<\lambda$,
which leads to the fact that output is  $\B'=\tA_1$. Apparently, $\tA_1^\top\tA_1$ is not a good approximation to $\A^\top\A$.
Hence the shrinking of FD or RFD is necessary.
In this example, it reduces the impact of $\tA_1$ and let $\tA_2$ be involved in final result.

Besides above discussion, \citet{DBLP:journals/tkde/DesaiGP16} has shown that the greedy algorithm
is much worse than FD based methods on data sets ``Adversarial''  and ``ConnectUS''.

\section{The Proof of Theorem \ref{thm:regret}}
Lemma \ref{lemma:regret} shows the general regret bound for any choice of $\bH^\ut\succ\bz$ in update (\ref{update:ONS})
\begin{align*}
     \bu^\utp &=  \w^\ut-\beta_t(\bH^\ut)^{-1} \g^\ut, \nonumber \\
     \w^\utp &= \argmin_{\w\in\fK_{t+1}} \| \w - \bu^\utp \|_{\bH^\ut}.
\end{align*}
\begin{lem}[Proposition 1 of \cite{luo2016efficient}]\label{lemma:regret}
    For any sequence of positive definite matrices $\bH^\ut$ and sequences of losses satisfying
    Assumption \ref{asm:bound} and \ref{asm:curv}, regret of updates (\ref{update:ONS}) satisfies
    \begin{align*}
        2R_T(\w) \leq \|\w\|_{\bH^{(0)}}^2 + R_G + R_D,
    \end{align*}
    where
    \begin{align*}
        &R_G = \sum_{t=1}^T (\g^\ut)^\top(\bH^{(t)})^{-1}\g^\ut,  \\
        &R_D = \sum_{t=1}^T (\w^\ut-\w)^\top(\bH^\ut-\bH^\utm-\mu^\ut\g^\ut(\g^\ut)^\top)(\w^\ut-\w).
    \end{align*}
\end{lem}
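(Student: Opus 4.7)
The plan is to apply Lemma~\ref{lemma:regret} to the RFD-SON sequence and then separately bound $R_G$ and $R_D$ using the exact Hessian increment produced by RFD. Algorithm~\ref{alg:RFD-ONS} initializes with an empty $\B^{(0)}$ and $\alpha^{(0)}=\alpha_0$, so $\bH^{(0)}=\alpha_0\I_d$ and $\|\w\|_{\bH^{(0)}}^2=\alpha_0\|\w\|^2$. Dividing Lemma~\ref{lemma:regret} by $2$ immediately produces the $\frac{\alpha_0}{2}\|\w\|^2$ term in (\ref{bound:regretRFD}); the remainder of the proof matches the other three terms to $\tfrac12 R_G$ and $\tfrac12 R_D$.

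First I would compute the increment $\bH^\ut-\bH^\utm$ exactly. Combining $(\hB^\utm)^\top\hB^\utm=(\B^\utm)^\top\B^\utm+(\mu_t+\eta_t)\g^\ut(\g^\ut)^\top$ with the SVD-based formula $(\B^\ut)^\top\B^\ut=\V^\utm_{m-1}\bigl[(\bSigma^\utm_{m-1})^2-(\sigma_m^\utm)^2\I_{m-1}\bigr](\V^\utm_{m-1})^\top$ and $\alpha^\ut-\alpha^\utm=(\sigma_m^\utm)^2/2$ gives
\begin{align*}
\bH^\ut-\bH^\utm=(\mu_t+\eta_t)\g^\ut(\g^\ut)^\top+\tfrac{(\sigma_m^\utm)^2}{2}\I_d-(\sigma_m^\utm)^2\V^\utm(\V^\utm)^\top.
\end{align*}
For $R_D$, I subtract $\mu_t\g^\ut(\g^\ut)^\top$, drop the negative rank-$m$ projection term, and bound the remaining two pieces. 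The $\eta_t\g^\ut(\g^\ut)^\top$ piece yields $\eta_t((\g^\ut)^\top(\w^\ut-\w))^2$, and since $\g^\ut=\ell_t'(\cdot)\x^\ut$ with $|\w^\top\x^\ut|\le C$, $|(\w^\ut)^\top\x^\ut|\le C$ on $\fK_t$, Assumption~\ref{asm:bound} bounds this by $4\eta_t(CL)^2$. The $\tfrac{(\sigma_m^\utm)^2}{2}\I_d$ piece yields $\tfrac12(\sigma_m^\utm)^2\|\w^\ut-\w\|^2\le 2C^2(\sigma_m^\utm)^2$ from the same feasibility. Summing and halving recovers the $2(CL)^2\sum_t\eta_t$ term and the final $C^2\sum_t(\sigma_m^\utm)^2$ term of $\Omega_{\rm RFD}$.

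For $R_G$ I would use a log-determinant telescoping modified for sketching error. Rewriting the increment as $\bH^\ut+(\sigma_m^\utm)^2\V^\utm(\V^\utm)^\top=\bH^\utm+(\mu_t+\eta_t)\g^\ut(\g^\ut)^\top+\tfrac{(\sigma_m^\utm)^2}{2}\I_d$ and applying the matrix-determinant lemma with $\ln(1+x)\le x$ yields, per round,
\begin{align*}
(\mu_t+\eta_t)(\g^\ut)^\top(\bH^\ut)^{-1}\g^\ut\le\ln\frac{\det\bH^\ut}{\det\bH^\utm}+\frac{m(\sigma_m^\utm)^2}{2\alpha^\ut},
\end{align*}
where the correction exploits that $\V^\utm(\V^\utm)^\top$ has rank $m$ and $\alpha^\ut$ is a uniform lower bound on the spectrum of $\bH^\ut$, so only $m$ eigenvalues are perturbed. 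Dividing by $\mu+\eta_T\le\mu_t+\eta_t$ and telescoping gives $R_G\le\frac{1}{\mu+\eta_T}\ln\frac{\det\bH^{(T)}}{\det\bH^{(0)}}+\frac{m}{2(\mu+\eta_T)}\sum_t\frac{(\sigma_m^\utm)^2}{\alpha^\ut}$. Since $\bH^{(T)}$ has $d{-}m{+}1$ eigenvalues equal to $\alpha^{(T)}$ together with $(\sigma_i^{(T)})^2+\alpha^{(T)}$ for $i=1,\dots,m-1$, splitting $\ln\det\bH^{(T)}/\alpha_0^d$ into its top-$m$ and bulk $(d{-}m)$ blocks and applying AM--GM to the top block gives
\begin{align*}
\ln\frac{\det\bH^{(T)}}{\alpha_0^d}\le m\ln\!\Bigl(\tfrac{\tr((\B^{(T)})^\top\B^{(T)})}{m\alpha_0}+\tfrac{\alpha^{(T)}}{\alpha_0}\Bigr)+(d-m)\ln\tfrac{\alpha^{(T)}}{\alpha_0}.
\end{align*}
Halving reproduces the two logarithmic terms in (\ref{bound:regretRFD}) and the $\frac{m}{4(\mu+\eta_T)}\sum_t\frac{(\sigma_m^\utm)^2}{\alpha^\ut}$ contribution to $\Omega_{\rm RFD}$.

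The hard part is the sketching-corrected log-det inequality in the $R_G$ step: one must show that the rank-$m$ projection residual $(\sigma_m^\utm)^2\V^\utm(\V^\utm)^\top$ costs only $\fO(m/\alpha^\ut)$ per round rather than the $\fO(d/\alpha^\ut)$ that would result from the naive bound $\tfrac12(\sigma_m^\utm)^2\I_d$. Ensuring the rank-$m$ structure propagates through the determinant manipulation (rather than being absorbed into a full-rank perturbation) is the key technical point; everything else is routine algebra.
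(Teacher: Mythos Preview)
Your proposal does not prove the stated lemma; it proves a different result. Lemma~\ref{lemma:regret} is a \emph{general} regret decomposition valid for \emph{any} sequence of positive definite matrices $\bH^\ut$ plugged into the online Newton update~(\ref{update:ONS}). It says nothing about RFD, about $\B^\ut$, $\alpha^\ut$, or $\sigma_m^\utm$; none of those objects appear in its statement. Your write-up instead \emph{applies} Lemma~\ref{lemma:regret} as a black box to the specific RFD-SON Hessians and then bounds $R_G$ and $R_D$---that is the content of Theorem~\ref{thm:regret}, not of Lemma~\ref{lemma:regret}. The paper itself does not prove Lemma~\ref{lemma:regret} at all: it is quoted verbatim as Proposition~1 of \cite{luo2016efficient} and used without argument.

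A correct proof of Lemma~\ref{lemma:regret} is the standard online-Newton argument and has nothing to do with sketching. One uses Assumption~\ref{asm:curv} to write $f_t(\w^\ut)-f_t(\w)\le (\g^\ut)^\top(\w^\ut-\w)-\tfrac{\mu_t}{2}\bigl((\g^\ut)^\top(\w^\ut-\w)\bigr)^2$, then controls the linear term via the update $\bu^\utp=\w^\ut-(\bH^\ut)^{-1}\g^\ut$ and the non-expansiveness of the $\bH^\ut$-projection, obtaining
\[
2(\g^\ut)^\top(\w^\ut-\w)\le \|\w^\ut-\w\|_{\bH^\ut}^2-\|\w^\utp-\w\|_{\bH^\ut}^2+(\g^\ut)^\top(\bH^\ut)^{-1}\g^\ut,
\]
and finally telescopes $\|\w^\ut-\w\|_{\bH^\ut}^2-\|\w^\utp-\w\|_{\bH^\ut}^2$ across $t$, which is where the $\bH^\ut-\bH^\utm$ increments and hence $R_D$ appear. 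That is the missing argument. As a proof of Theorem~\ref{thm:regret}, your outline is essentially the paper's Appendix~G route (same increment formula, same handling of $R_D$, log-determinant telescoping plus a rank-$m$ correction for $R_G$); but as a proof of the lemma actually stated, it is circular.
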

Then we prove the regret bound for RFD-SON based on Lemma \ref{lemma:regret} and property of RFD.
\begin{proof}
    Let $\V^\ut_\perp$ be the orthogonal complement of $\V^\ut_{m-1}$'s column space,
    that is $\V^\ut_{m-1}(\V^\ut_{m-1})^\top+\V^\ut_\perp(\V^\ut_\perp)^\top =\I_d$, then we have
    \begin{align}
           & \bH^\ut - \bH^\utm \nonumber\\
          =& \alpha^\ut\I + (\B^\ut)^\top\B^\ut - \alpha^\utm\I - (\B^\utm)^\top\B^\utm \nonumber\\
          =& \frac{1}{2}(\sigma_m^\utm)^2\I -(\sigma_m^\utm)^2\V^\utm_{m-1}(\V^\utm_{m-1})^\top
          + (\mu_t+\eta_t)\g^\ut(\g^\ut)^\top \nonumber\\
          = &  \frac{1}{2}(\sigma_m^\utm)^2\big[\V^\utm_\perp(\V_\perp^\utm)^\top-\V^\utm_{m-1}(\V^\utm_{m-1})^\top\big]
          + (\mu_t+\eta_t)\g^\ut(\g^\ut)^\top. \label{eq:hessian}
    \end{align}
    Since $\bH^\ut$ is positive semidefinite for any $t$, we have $(\bH^\ut)^\dag=(\bH^\ut)^{-1}$.
    Combining with Lemma \ref{lemma:regret}, we have
    \begin{align*}
        2R_T(\w) \leq \alpha_0 \|\w\|^2 + R_G + R_D,
    \end{align*}
    where
    \begin{align*}
        R_G = \sum_{t=1}^T (\g^\ut)^\top(\bH^\ut)^{-1}\g^\ut,
    \end{align*}
    and
    \begin{align*}
        R_D = \sum_{t=1}^T  (\w^\ut-\w)^\top[\bH^\ut - \bH^\utm  - \mu_t\g^\ut(\g^\ut)^\top](\w^\ut-\w).
    \end{align*}
    We can bound $R_G$ as follows
    \begin{align*}
         & \sum_{t=1}^T (\g^\ut)^\top(\bH^\ut)^{-1}\g^\ut \\
        =& \sum_{t=1}^T \big\langle (\bH^\ut)^{-1}, \g^\ut(\g^\ut)^\top \big\rangle \\
        =& \sum_{t=1}^T \frac{1}{\mu_t+\eta_t} \big\langle (\bH^\ut)^{-1}, \bH^\ut - \bH^\utm +   \frac{1}{2}(\sigma_m^\utm)^2[\V^\utm_\perp(\V^\utm_\perp)^\top-\V^\utm(\V^\utm)^\top\big] \big\rangle \\
        \leq& \frac{1}{\mu+\eta_T}\sum_{t=1}^T
                \big\langle (\bH^\ut)^{-1}, \bH^\ut - \bH^\utm
            + \frac{1}{2}(\sigma_m^\utm)^2\V^\utm(\V^\utm)^\top \big\rangle \\
        =& \frac{1}{\mu+\eta_T}\sum_{t=1}^T \Big[\big\langle (\bH^\ut)^{-1}, \bH^\ut - \bH^\utm \big\rangle
            + \frac{1}{2}(\sigma_m^\utm)^2\tr\big(\V^\utm(\bH^\ut)^{-1}(\V^\utm)^\top\big) \Big].
    \end{align*}
    The above equalities come from the properties of trace operator and (\ref{eq:hessian}) and
    the inequality is due to the fact that $\eta_t$ is non-increasing.

    The term $\sum_{t=1}^T \langle (\bH^\ut)^{-1}, \bH^\ut - \bH^\utm \rangle$ can be bounded as
    \begin{align*}
             \sum_{t=1}^T \langle (\bH^\ut)^{-1}, \bH^\ut - \bH^\utm \rangle
        & \leq  \sum_{t=1}^T \ln \frac{\det(\bH^\ut)}{\det(\bH^\utm)} \\
        & = \ln \frac{\det(\bH^{(T)})}{\det(\bH^{(0)})} \\
        & = \ln \frac{\prod_{i=1}^d\sigma_i(\bH^{(T)})}{{\alpha_0}} \\
        & = \sum_{i=1}^d\ln \frac{\sigma_i\big((\B^{(T)})^\top\B^{(T)}+\alpha^{(T)}\I_d\big)}{{\alpha_0}} \\
        & = \sum_{i=1}^m\ln \frac{\sigma_i\big((\B^{(T)})^\top\B^{(T)}\big)+\alpha^{(T)}}{{\alpha_0}} +
                (d-m)\ln\frac{\alpha^{(T)}}{{\alpha_0}} \\
        & \leq m \ln  \frac{\sum_{i=1}^m [\sigma_i\big((\B^{(T)})^\top\B^{(T)}\big)+\alpha^{(T)}]}{m\alpha_0} +
                (d-m)\ln\frac{\alpha^{(T)}}{{\alpha_0}} \\
         & = m \ln \Big(\frac{\tr\big((\B^{(T)})^\top\B^{(T)}\big)}{m\alpha_0}+\frac{\alpha^{(T)}}{\alpha_0}\Big) + (d-m)\ln\frac{\alpha^{(T)}}{{\alpha_0}}. \\
    \end{align*}
    The first inequality is obtained by the concavity of the log determinant function \citep{DBLP:journals/eor/Lemarechal06},
    the second inequality comes from the Jensen's inequality and the other steps are based on the procedure of the algorithm.

    The other one $\frac{1}{2}\sum_{t=1}^T(\sigma_m^\ut)^2\tr\big(\V^\ut(\bH^\ut)^{-1}(\V^\ut)^\top\big)$ can be bounded as
    \begin{align}
            \frac{1}{2}\sum_{t=1}^T(\sigma_m^\ut)^2\tr\big(\V^\ut(\bH^\ut)^{-1}(\V^\ut)^\top\big) \nonumber
        & \leq \frac{1}{2}\sum_{t=1}^T\frac{(\sigma_m^\ut)^2}{\alpha^\ut} \tr\big(\V^\ut(\V^\ut)^\top\big) \nonumber \\
        & = \frac{m}{2}\sum_{t=1}^T\frac{(\sigma_m^\ut)^2}{\alpha^\ut}.
    \end{align}
    Hence, we have
    \begin{align}
            R_G
        \leq  \frac{1}{\mu+\eta_T}\Bigg[m \ln \Big(\tr\big((\B^{(T)})^\top\B^{(T)}\big)+\frac{\alpha^{(T)}}{\alpha_0}\Big)
             + (d-m)\ln\frac{\alpha^{(T)}}{{\alpha_0}} + \frac{m}{2}\sum_{t=1}^T\frac{(\sigma_m^\ut)^2}{\alpha^\ut}\Bigg]. \label{ieq:RG}
    \end{align}
    Then we bound the term $R_D$ by using equation (\ref{eq:hessian}), Assumption \ref{asm:bound} and Assumption \ref{asm:curv}.
    \begin{align}
         R_D =& \sum_{t=1}^T  (\w^\ut-\w)^\top
            \big[\eta_t \g^\ut(\g^\ut)^\top + \frac{1}{2}(\sigma_m^\utm)^2\I
             - \V^\utm(\V^\utm)^\top\big](\w^\ut-\w) \nonumber\\
          \leq& \sum_{t=1}^T \eta_t(\w^\ut-\w)^\top \g^\ut(\g^\ut)^\top(\w^\ut-\w)
             + \frac{1}{2}\sum_{t=1}^T(\sigma_m^\utm)^2(\w^\ut-\w)^\top(\w^\ut-\w) \nonumber\\
          \leq& 4(CL)^2 \sum_{t=1}^T \eta_t + 2C^2 \sum_{t=1}^T(\sigma_m^\utm)^2. \label{ieq:RD}
    \end{align}
    Finally, we obtain the result by combining (\ref{ieq:RG}) and (\ref{ieq:RD}).
\end{proof}
Additionally, the term $\sum_{t=1}^T\frac{(\sigma_m^\ut)^2}{\alpha^\ut}$ in $\Omega_{\rm RFD}$ can be bounded by $\fO(\ln T)$
if we further assume all $\sigma_m^\ut$ are bounded by positive constants.
Exactly, suppose that $0 < C_1 \leq \big(\sigma_m^\ut\big)^2 \leq C_2$ for any $t=0,\dots,T-1$, then we have
\begin{align*}
   \sum_{t=1}^T\frac{(\sigma_m^\ut)^2}{\alpha^\ut}
& =  \sum_{t=1}^T\frac{(\sigma_m^\ut)^2}{\alpha_0+\frac{1}{2}\sum_{i=0}^{t-1}\big(\sigma_m^{(i)}\big)^2} \\
& \leq  \sum_{t=1}^T\frac{C_2}{\alpha_0+\frac{1}{2}C_1t} \\
& \leq  \frac{2C_2}{C_1}\sum_{t=1}^T\frac{1}{t} \\
& = \fO(\ln T).
\end{align*}
The last inequality is due to the property of harmonic series.

\section{The Proof of Theorem \ref{thm:regret1}}
Considering the update without positive semidefinite assumption on $\bH^\ut$
\begin{align}
     \bu^\utp &=  \w^\ut-\beta_t(\bH^\ut)^\dag \g^\ut, \nonumber \\
     \w^\utp &= \argmin_{\w\in\fK_{t+1}} \| \w - \bu^\utp \|_{\bH^\ut}, \label{update:pinv}
\end{align}
we have the results as follows.
\begin{lem}[Appendix D of \cite{luo2016efficient}]\label{lemma:regretpinv}
Let ${\hat\bH}^\ut=\sum_{s=1}^t \g^\ut(\g^\ut)^\top$ with $\rk({\hat\bH}^{(T)})=r$ and $\sigma^*$ be the minimum among
the smallest non-zeros singular values of ${\hat\bH}^{(T)}$. Then the regret of update (\ref{update:pinv}) satisfies
\begin{align}
     R_T(\w) \leq  2(CL)^2 \sum_{t=1}^T\eta^\ut + \frac{1}{2}\sum_{t=1}^T (\g^\ut)^\top(\bH^\ut)^\dag\g^\ut. \label{bound:pinv01}
\end{align}
and
\begin{align}
    \sum_{t=1}^T \g_t^\top({\hat\bH}^\ut)^\dag\g_t
    \leq m-1 + \frac{m(m-1)}{2}\ln\Big(1+\frac{2\sum_{t=1}^T\|\g^\ut\|_2^2}{m(m-1)\sigma^*}\Big) \label{bound:pinv02}
\end{align}
\end{lem}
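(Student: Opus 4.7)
The plan is to prove the two inequalities separately. The first is a regret-style bound analogous to Lemma~\ref{lemma:regret}, but it must accommodate the pseudo-inverse update when $\bH^\ut$ is rank-deficient. The second is a log-determinant-type bound on $\sum_t (\g^\ut)^\top(\hat\bH^\ut)^\dag\g^\ut$ that generalizes the classical ONS/AdaGrad determinant trick to singular matrices.

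For (\ref{bound:pinv01}), I would begin from Assumption~\ref{asm:curv} to bound each instantaneous regret as
\[
f_t(\w^\ut)-f_t(\w) \leq (\g^\ut)^\top(\w^\ut-\w) - \tfrac{\mu_t}{2}\big((\g^\ut)^\top(\w^\ut-\w)\big)^2,
\]
then combine this with the nonexpansiveness of the $\bH^\ut$-projection, $\|\w^\utp-\w\|_{\bH^\ut}^2 \leq \|\bu^\utp-\w\|_{\bH^\ut}^2$, and expand the right-hand side via $\bu^\utp = \w^\ut - \beta_t(\bH^\ut)^\dag\g^\ut$. Telescoping the $\bH^\ut$-norm of $\w^\ut-\w$ produces the main term $\tfrac{1}{2}\sum_t(\g^\ut)^\top(\bH^\ut)^\dag\g^\ut$. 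The subtlety is that $(\bH^\ut)^\dag\g^\ut$ vanishes on the null space of $\bH^\ut$, so the component of $(\g^\ut)^\top(\w^\ut-\w)$ orthogonal to $\mathrm{range}(\bH^\ut)$ is not directly cancelled by the update. This residual is absorbed into the $2(CL)^2\sum_t\eta^\ut$ slack using Assumption~\ref{asm:bound} together with the constraints $\w,\w^\ut\in\fK_t$, which give $|(\g^\ut)^\top(\w^\ut-\w)| \leq 2CL$.

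For (\ref{bound:pinv02}), I would apply a Sherman--Morrison identity restricted to the range. Partition the rounds into $T_0$, those $t$ at which $\g^\ut$ introduces a new direction to $\mathrm{range}(\hat\bH^\utm)$, and $T_1$, the remainder. For $t\in T_1$ the identity gives
\[
(\g^\ut)^\top(\hat\bH^\ut)^\dag\g^\ut = \frac{(\g^\ut)^\top(\hat\bH^\utm)^\dag\g^\ut}{1+(\g^\ut)^\top(\hat\bH^\utm)^\dag\g^\ut} \leq \ln\!\big(1+(\g^\ut)^\top(\hat\bH^\utm)^\dag\g^\ut\big),
\]
and the matrix determinant lemma restricted to the range yields $\ln(1+(\g^\ut)^\top(\hat\bH^\utm)^\dag\g^\ut) = \ln(D^\ut/D^\utm)$, where $D^\ut$ is the product of the nonzero eigenvalues of $\hat\bH^\ut$. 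Telescoping over $T_1$ and bounding the resulting log-determinant by Jensen's inequality on the trace (using $\mathrm{tr}(\hat\bH^{(T)}) \leq \sum_t\|\g^\ut\|_2^2$ and the smallest nonzero eigenvalue lower bound $\sigma^*$) delivers the logarithmic term. For $t\in T_0$, an explicit rank-one computation on the new direction shows $(\g^\ut)^\top(\hat\bH^\ut)^\dag\g^\ut \leq 1$, and since the rank grows at most to $r\leq m$, one has $|T_0|\leq m-1$, accounting for the additive $m-1$.

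The main obstacle is the pseudo-inverse bookkeeping throughout both arguments: the classical ONS analysis assumes $\bH^\ut\succ 0$ for every $t$, which makes both the Sherman--Morrison identity and the regret telescoping entirely transparent. In our setting $\hat\bH^\ut$ has rank $r_t$ monotonically growing from $0$ toward at most $m$, so every step requires decomposing vectors into the range and null parts of $\hat\bH^\utm$, tracking rank-increase events separately, and working with the product of nonzero eigenvalues in place of $\det$. Once this decomposition is laid out consistently, the remainder reduces to standard exp-concave regret calculations.
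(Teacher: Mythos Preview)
The paper does not prove this lemma; it is quoted from Appendix~D of \cite{luo2016efficient} and used as a black box in the proof of Theorem~\ref{thm:regret1}. So there is no in-paper proof to compare against, and your reconstruction is essentially the standard argument one would expect from that reference.

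One point in your plan for (\ref{bound:pinv01}) is slightly misdiagnosed. The $2(CL)^2\sum_t\eta_t$ term is not there to absorb a null-space residual. In the intended setting (and implicitly in the lemma, as used in Theorem~\ref{thm:regret1}) one has $\bH^\ut=\sum_{s\le t}(\mu_s+\eta_s)\g^{(s)}(\g^{(s)})^\top$, so $\g^\ut\in\mathrm{range}(\bH^\ut)$ by construction and the projector $\bH^\ut(\bH^\ut)^\dag$ acts as the identity on $\g^\ut$; the cross term $(\g^\ut)^\top(\w^\ut-\w)$ is therefore recovered exactly in the expansion of $\|\bu^\utp-\w\|_{\bH^\ut}^2$, with no orthogonal leftover. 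The $2(CL)^2\sum_t\eta_t$ instead comes from the drift term $R_D$ of Lemma~\ref{lemma:regret}: with this choice of $\bH^\ut$ one has $\bH^\ut-\bH^\utm-\mu_t\g^\ut(\g^\ut)^\top=\eta_t\g^\ut(\g^\ut)^\top$, whence $R_D=\sum_t\eta_t\big((\g^\ut)^\top(\w^\ut-\w)\big)^2\le 4(CL)^2\sum_t\eta_t$, and the factor $\tfrac12$ in Lemma~\ref{lemma:regret} gives the stated constant. The term $\|\w\|_{\bH^{(0)}}^2$ vanishes because $\bH^{(0)}=\bz$. Your partition into rank-increasing and rank-preserving rounds for (\ref{bound:pinv02}), with pseudo-determinant telescoping on the latter and the uniform bound $(\g^\ut)^\top(\hat\bH^\ut)^\dag\g^\ut\le 1$ on the former, is the correct mechanism.
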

Then we can derive Theorem \ref{thm:regret1}.
\begin{proof}
    For any $t\leq T'$, RFD-SON with $\alpha^{(0)}=0$ satisfies $\rk(\hB^\ut)\leq m-1$. Hence we have
    \begin{align}
            \bH^\ut &=(\B^\ut)^\top\B^\ut \nonumber\\
            &= \sum_{s=1}^t \frac{1}{\eta_t+\mu_t} (\g^\ut)(\g^\ut)^\top \nonumber\\
            & \succeq  \frac{1}{\eta_1+\mu'} \sum_{s=1}^t (\g^\ut)(\g^\ut)^\top \nonumber\\
            & = \frac{1}{\eta_1+\mu'}{\hat\bH}^\ut \label{bound:pinv03}.
    \end{align}
    Then we can bound the regret for the first $T'$ iterations
    \begin{align*}
         R_{1:T'}(\w)
        & \leq 2(CL)^2 \sum_{t=1}^{T'}\eta^\ut +  \frac{1}{2}\sum_{t=1}^{T'} (\g^\ut)^\top(\bH^\ut)^\dag\g^\ut \\
        & \leq 2(CL)^2 \sum_{t=1}^{T'}\eta^\ut+ \frac{1}{2(\eta_1+\mu')}\sum_{t=1}^{T'} (\g^\ut)^\top({\hat\bH}^\ut)^\dag\g^\ut  \\
        & \leq 2(CL)^2 \sum_{t=1}^{T'}\eta^\ut+ \frac{m-1}{2(\eta_1+\mu')} +
        \frac{m(m-1)}{2(\eta_1+\mu')}\ln\Big(1+\frac{2\sum_{t=1}^{T'}\|\g^\ut\|_2^2}{m(m-1)\sigma^*}\Big).
    \end{align*}
    The first inequality is based on the inequality (\ref{bound:pinv01}).
    The second inequality comes from (\ref{bound:pinv03}) that is $\bH^\ut \succeq \frac{1}{\eta_1+\mu'}{\hat\bH}^\ut$.
    And the last one is due to the result (\ref{bound:pinv02}).
\end{proof}

\section{The Proof of Theorem \ref{thm:regret2}}
\begin{proof}
    Since $\bH^\ut\succ\bz$ for $t\geq T'$, by similar proof of Theorem \ref{thm:regret}, we have
    \begin{align}
        2R_{T'+1:T}(\w) \leq \alpha_0 \|\w^{(T')}\|_{\bH^{(T')}}^2 + R'_G + R'_D, \label{ieq:RGD0}
    \end{align}
    where
    \begin{align}
          R'_G
          = & \sum_{t=T'+1}^T (\g^\ut)^\top(\bH^\ut)^{-1}\g^\ut \nonumber\\
        \leq & \frac{1}{\mu+\eta_T}
            \Bigg[m \ln \Big(\frac{\tr\big((\B^{(T)})^\top\B^{(T)}\big)}{m\alpha^{(T')}}+\frac{\alpha^{(T)}}{\alpha'_0}\Big)
             + (d-m)\ln\frac{\alpha^{(T)}}{{\alpha'_0}} + \frac{m}{2}\sum_{t=T'+1}^T\frac{(\sigma_m^\ut)^2}{\alpha^\ut}\Bigg] \label{ieq:RGD01}, \\
        R'_D =& \sum_{t=T'+1}^T  (\w^\ut-\w)^\top[\bH^\ut - \bH^\utm  - \mu_t\g^\ut(\g^\ut)^\top](\w^\ut-\w) \nonumber\\
             \leq&  4(CL)^2 \sum_{t=T'+1}^T \eta_t + 2C^2 \sum_{t=T'+1}^T(\sigma_m^\utm)^2. \label{ieq:RGD02}
    \end{align}
    Combining Theorem \ref{thm:regret1}, (\ref{ieq:RGD0}), (\ref{ieq:RGD01}) and (\ref{ieq:RGD02}),
    we have the final result of (\ref{bound:RFDSON0}).
\end{proof}

\bibliography{reference}

\end{document}